\newcommand{\cM}{{\cal M}}
\newcommand{\mB}{{\bf B}}
\newcommand{\squeeze}{}
\newcommand{\notneeded}[1]{#1}
\newcommand{\algname}[1]{{\sf \footnotesize #1}\xspace}
\title{\bf EF21: A New, Simpler, Theoretically Better, \\ \bf and Practically Faster Error Feedback}
\author{Peter Richt\'arik \\ KAUST\thanks{King Abdullah University of Science and Technology, Thuwal, Saudi Arabia.} \and Igor Sokolov \\ KAUST \and  Ilyas Fatkhullin \\ KAUST\thanks{This paper was written while I.F.\ was an intern at KAUST.}\;  \& TU Munich}
\date{June 8, 2021}
\begin{document}
	
	\maketitle
	
	\begin{abstract}
	Error feedback (\algname{EF}), also known as error compensation, is an immensely popular convergence stabilization mechanism in the context of distributed training of supervised machine learning models enhanced by the use of contractive communication compression mechanisms, such as Top-$k$. First proposed by \citet{Seide2014} as a heuristic,  \algname{EF}  resisted any theoretical understanding until recently \citep{Stich-EF-NIPS2018, Alistarh-EF-NIPS2018}. While these early breakthroughs were followed by a steady stream of works offering various improvements and generalizations, the current theoretical understanding of \algname{EF} is still very limited. Indeed, to the best of our knowledge, all existing analyses either i) apply to the single node setting only, ii) rely on very strong and often unreasonable assumptions, such global boundedness of the gradients, or iterate-dependent assumptions that cannot be checked a-priori and may not hold in practice, or iii) circumvent these issues via the introduction of additional unbiased compressors, which increase the communication cost. In this work we fix all these deficiencies by proposing and analyzing a new \algname{EF} mechanism, which we call \algname{EF21}, which consistently and substantially outperforms \algname{EF} in practice. Moreover, our theoretical analysis relies on standard assumptions only, works in the distributed heterogeneous data  setting, and leads to better and more meaningful rates. In particular, we prove that \algname{EF21} enjoys a fast $O(1/T)$  convergence rate for smooth nonconvex problems, beating the previous bound of $O(1/T^{2/3})$, which was shown under  a strong bounded gradients assumption. We further improve this to a fast linear rate for Polyak-Lojasiewicz functions, which is the first linear convergence result for an error feedback method not relying on unbiased compressors. Since \algname{EF} has a large number of applications where it reigns supreme, we believe that our 2021 variant, \algname{EF21}, can a large impact on the practice of communication efficient distributed learning.	
	\end{abstract}

	      \tableofcontents
	\section{Introduction}

In order to obtain state-of-the-art performance, modern machine learning models rely on elaborate architectures, need to be trained on data sets of enormous sizes, and involve a very large number of parameters. Some of the most successful models are heavily over-parameterized, which means that they involve more parameters than the number of available training data points \citep{ACH-overparameterized-2018}.  Naturally, these circumstances should inform the design of optimization methods that could be most efficient  to perform the training. 

First, the reliance on sophisticated model architectures, as opposed to simple linear models, generally leads to {\bf nonconvex} optimization problems, which are more challenging than convex problems~\citep{NonconvexBook}.  Second, the need for very large training data sizes necessitates the use of {\bf distributed computing}~\citep{DMLsurvey}. Due to its enormous size, the data needs to be partitioned across a number of machines  able to work in parallel. Typically, for further efficiency gains, each such machine further parallelizes its local computations using one or more hardware accelerators. Third, the very large number of parameters describing these models exerts an extra stress on the communication links used to exchange model updates among the machines. These links are typically slow compared to the speed at which computation takes place, and communication often forms the bottleneck of distributed systems even in less extreme situations than over-parameterized training where the number of parameters, and hence the nominal size of communicated messages, can be truly staggering. For this reason, modern efficient optimization methods typically employ elaborate {\bf lossy communication compression} techniques to reduce the size of the communicated messages.

Due to the above reasons, in this paper we are interested in solving the {\em nonconvex distributed optimization problem}
	\begin{equation}\label{eq:finite_sum}
	\squeeze \min \limits_{x\in \R^d} \left[f(x) \eqdef \frac{1}{n} \sum \limits_{i=1}^n f_i(x)\right],
	\end{equation}
where $x\in \R^d$ represents the parameters of a machine learning model we wish to train, $n$ is the number of workers/nodes/machines, and $f_i(x)$ is the loss of model $x$ on the data stored on node~$i$. We specifically focus on the development of new and more efficient communication efficient first-order methods for solving \eqref{eq:finite_sum} utilizing {\em biased} compression operators, with a special emphasis on  {\em clean convergence analysis} which removes the {\em strong and often unrealistic assumptions}, such as the bounded gradient assumption, which are currently needed to analyze such methods (see Table~\ref{tbl:many_methods}). 

The remainder of the paper is organized as follows. In Section~\ref{sec:motivation} we describe the key concepts, results and open problems that form the motivation for our work, and summarize our main contributions. Our main theoretical results are presented in Section~\ref{sec:results}. In Section~\ref{sec:equivalence} we establish a connection between \algname{EF} and \algname{EF21}. Finally, experimental results are described in Section~\ref{sec:experiments}.

\section{Background and Motivation} \label{sec:motivation}

To better motivate our approach and contributions,   we first offer a concise walk-through over the key considerations, difficulties, advances and open problems in this area.

\subsection{Two families of compression operators}
Compression is typically performed via the application of a (possibly randomized) mapping $\cC:\R^d\to \R^d$, where $d$ is the dimension of the vector/tensor that needs to be communicated, with the property that it is much easier/quicker to transfer $\cC(x)$ than it is to transfer the original message $x$. This can be achieved in several ways, for instance by sparsifying the input vector \citep{Alistarh-EF-NIPS2018}, or by quantizing its entries \citep{alistarh2017qsgd, Cnat}, or via a combination of these and other approaches \citep{Cnat, beznosikov2020biased}.

There are two large classes of compression operators $\cC$ often studied in the literature:  i) {\bf unbiased compression operators} satisfying a variance bound proportional to the square norm of the input vector, and  ii) {\bf biased compression operators} whose square distortion is contractive with respect to the square norm of the input vector.  

In particular, we say that a (possibly randomized) map $ \cC: \R^{d} \rightarrow \R^{d}$ is an {\em unbiased compression operator}, or simply just {\em unbiased compressor}, if there exists a constant $\omega \geq 0$ such that
		\begin{eqnarray}\label{eq:u_compressor}
		\Exp{\cC(x)}=x, \quad \Exp{\|\cC(x) - x\|^{2}} \leq \omega\|x\|^{2}, \qquad \forall x\in \R^d.
		\end{eqnarray}
The family of such operators will be denoted by $\bU(\omega)$. Further, we say that a (possibly randomized) map $\cC: \R^{d} \rightarrow \R^{d}$ is a {\em biased compression operator}, or simply just {\em biased compressor}, if   there exists a constant $0<\alpha\leq 1$ such that		\begin{eqnarray}\label{eq:b_compressor}
		\Exp{\|\cC(x) - x\|^{2}} \leq \rb{1 - \alpha} \|x\|^{2}, \qquad \forall x\in \R^d.
		\end{eqnarray}
The family of such operators will be denoted by $\bB(\alpha)$. It is well known that, in a certain sense, the latter class contains the former. In particular, it is easy to verify that if $\cC\in \bU(\omega)$, then $(1+\omega)^{-1}\cC\in \bB(\nicefrac{1}{(1+\omega)})$. However, the latter class is strictly larger, i.e., it contains compressors which do not arise via a scaling of an unbiased compressor. A canonical example of this is the Top-$k$ compressor, which preserves the $k$ largest (in absolute value) entries of the input, and zeros out the remaining entries, and for which $\alpha=\nicefrac{k}{d}$. We refer to \citep[Table 1]{beznosikov2020biased,UP2021} for more examples of unbiased and biased compressors, and to \cite{Dutta-compress-Survey-2020} for a systems-oriented survey.

When used in an appropriate way, greedy biased compressors, such as Top-$k$, are often empirically superior to their unbiased counterparts \citep{Seide2014}, such as Rand-$k$. Intuitively, such greedy compressors retain more of the ``information'' or ``energy'' contained within the message, and hence introduce less distortion. This is beneficial in practice, at least in the simplistic single node (i.e., non-distributed) setting, albeit even here we do not have convincing theory that would explain this. 
\notneeded{Indeed, both Top-$k$ and Rand-$k$ impart the same distortion in the worst case, which happens when the energy is distributed uniformly across all entries of the input vector, and it is not easy to capture theoretically that this worst case situation will not happen repeatedly throughout the iterations. As a result, there is currently no separation between the worst case complexity of first order methods, such as gradient descent, combined with biased vs related unbiased compressors \citep{beznosikov2020biased}. If one makes a-priori statistical assumptions on the distribution of the messages/gradients that need to be compressed, such a separation can be made \citep{beznosikov2020biased}. While insightful, this is not satisfactory. Indeed, the gradients produced by methods such as gradient descent evolve in a non-stationary way, and hence modeling them as samples coming from a fixed distribution raises questions. Further, gradient compression affects the iterates and hence also the gradients that will be produced in all subsequent iterations, which is another phenomenon not captured by the aforementioned approach.}

\subsection{Error feedback: what it is good for, and what we still do not know}

The difference between what we know about unbiased and biased compressors is larger still in the distributed setting. 

In particular, {\em unbiasedness} turns out to be a very effective tool facilitating the analysis of distributed first order methods utilizing unbiased compressors, and for this reason, the landscape of methods using such compressors is very rich and relatively well understood. For example, using unbiased compressors we know how to 
\begin{itemize} 
\item [i)] analyze distributed compressed gradient decent~\citep{DCGD,99percent}, 
\item [ii)] remove the variance introduced by compression  to achieve faster convergence~\citep{DIANA,DIANA2,99percent}, 
\item [iii)] perform bidirectional compression at the workers and also at the master~\citep{Cnat,Artemis2020,Lin_EC_SGD}, 
\item [iv)] develop a general theory for SGD which, besides more standard methods, also includes variants using unbiased compression of (stochastic) gradients~\citep{sigma_k, sigma_k-convex, Nonconvex-sigma_k}, 
\item [v)] achieve Nesterov acceleration in the strongly convex regime~\citep{ADIANA}, 
\item [vi)] how to analyze these methods in the nonconvex regime~\citep{DIANA, Cnat, Nonconvex-sigma_k}, 
\item [vii)]  achieve acceleration in the nonconvex regime~\citep{MARINA}, and even how to 
\item [viii)] apply unbiased compressors to Hessian matrices to obtain communication-efficient second-order methods~\citep{NL2021}.
\end{itemize}

The situation with {\em general}  biased compressors (i.e., those that do not arise from unbiased compressors via scaling) is much more challenging. The key complication comes from the fact that their naive use within first order methods, such as gradient descent, can lead to divergence. We refer the reader to  \citep[Example 1]{beznosikov2020biased} for a simple example where gradient descent ``enhanced'' with the Top-1 compressor leads to {\em exponential divergence} when applied to the problem of minimizing the average of three strongly convex quadratics in $\R^3$. However, divergence of gradient descent enhanced with biased compressors such as Top-$k$ was observed empirically much sooner, and a fix for this problem, known as {\em error feedback} (\algname{EF}), or {\em error compensation} (\algname{EC}), was suggested by~\citet{Seide2014}. This fix remained a heuristic until very recently. 

The first theoretical breakthroughs focused on the simpler single-node setting \citep{Stich-EF-NIPS2018, Alistarh-EF-NIPS2018}.  The first analysis in the general distributed heterogeneous data\footnote{Problem~\eqref{eq:finite_sum} is in the {\em heterogeneous data regime} if no similarity among the functions (and hence among the data stored across different nodes giving rise to these functions) is assumed. 
% A typical similarity assumption  would be to require that  the inequality $\norm{\nabla f_i(x) - \nabla f_j(x)} \leq C$ holds for all $x\in \R^d$ and all $i,j\in \{1,2,\dots,n\}$, and some constant $C>0$. This assumption is too strong, and does not even hold for convex quadratics.
}
setting was performed by \citet{beznosikov2020biased}, and was confined to the strongly convex regime. While without compression, one can expect a linear rate, the rate in \citep{beznosikov2020biased} is linear only in the special case of an over-parameterized regime (i.e., regime in which the loss functions on all nodes share a common minimizer) with a requirement of full gradient computations on each node. These deficiencies were later fixed by \citet{Lin_EC_SGD}, who developed the first linearly convergent methods \algname{EC-GD-DIANA} and \algname{EC-LSVRG-DIANA}, and also analyzed the convex case. 
\notneeded{Further, \citet{EC-Katyusha} showed that error-compensated methods can be {\em accelerated} in the sense of Nesterov~\citep{NesterovBook}. }
However, this advance was achieved through the use of additional unbiased compressors, and hence via an increase in communication in each round. 

\begin{quote}{\em In particular, whether it is possible to obtain a linearly convergent error-compensated method in the general heterogeneous data setting, relying on biased compressors only, is still an open problem. }
\end{quote}

The current state-of-the-art theoretical result for error-compensated methods in the smooth non-convex regime are due to \citet[Theorem 4.1]{Koloskova2019DecentralizedDL}, who consider the more general problem of decentralized optimization over a network. In the case when full (as opposed to stochastic) gradients are computed on each node, they  show that after $T$ communication rounds it is possible to find a random vector $\hat{x}^T$ with the guarantee \begin{equation}\label{eq:T^{2/3}} \squeeze \Exp{\sqnorm{\nabla f(\hat{x}^T)}} =\cO\left(\frac{G^{2/3}}{T^{2/3}}\right),\end{equation} under the {\em bounded gradient} assumption which requires the existence of a constant $G>0$  such that
\begin{equation}\label{eq:G}\sqnorm{\nabla f_i(x)} \leq G^2\end{equation}
holds for all $x\in \R^d$ and all $i\in \{1,2,\dots,n\}$. This was a slight improvement in rate over an result obtained by \citet{Lian2017}, who instead use the bounded dissimilarity assumption
\begin{equation}\label{eq:bounded-diss} \squeeze \frac{1}{n}\sum \limits_{i=1}^n \sqnorm{\nabla f_i(x) - \nabla f(x)} \leq G^2. \end{equation}
A summary of the limitations of known results for \algname{EF}-based methods is provided in Table~\ref{tbl:many_methods}.

	\begin{table}[t]
		\caption{Known results for first order methods using biased compressors. sCVX = supports strongly convex functions, nCVX= supports nonconvex functions, DIST = works in the distributed regime. ${}^\dagger$decentralized method}\label{tbl:many_methods}
		\centering		
				 \footnotesize
		\begin{tabular}{ccccc}
			%    \hline 
			\bf Algorithm &  \bf sCVX &  \bf nCVX &  \bf DIST &  \bf key limitation \\
%%%%%%%%%%%%%%%%%%%%%%%%%%%		
			\hline 
			\begin{tabular}{c} \algname{EF} \\ \cite{Stich-EF-NIPS2018} \end{tabular}& \cmark & \xmark & \cmark & \begin{tabular}{c}bounded gradients; \\ sublinear rate in sCVX case\end{tabular} \\
%%%%%%%%%%%%%%%%%%%%%%%%%%%								
			\hline
			 \begin{tabular}{c}\algname{EF-SGD} \\\cite{Stich2019TheEF} \end{tabular}& \cmark & \cmark & \xmark & single node only \\ 	
%%%%%%%%%%%%%%%%%%%%%%%%%%%	
			\hline \begin{tabular}{c} \algname{EF} \\ \cite{stich2020biased} \end{tabular} & \cmark & \cmark & \xmark & single node only \\ 
%%%%%%%%%%%%%%%%%%%%%%%%%%%			
			\hline
			\begin{tabular}{c}\algname{SignSGD}\\ \cite{Karimireddy_SignSGD} \end{tabular} & \xmark & \cmark & \xmark & \begin{tabular}{c} moment  bound; \\ single node only \end{tabular} \\	
%%%%%%%%%%%%%%%%%%%%%%%%%%%			
			\hline \begin{tabular}{c} \algname{EC-SGD} \\ \cite{beznosikov2020biased} \end{tabular}& \cmark & \xmark & \cmark & \begin{tabular}{c}linear rate only \\ if $\nabla f_i(x^\star)=0\; \forall i$ \end{tabular} \\ 
					 		
%%%%%%%%%%%%%%%%%%%%%%%%%%%				
			\hline \begin{tabular}{c} \algname{EC-SGD} \\ \cite{Lin_EC_SGD} \end{tabular} & \cmark & \xmark & \cmark & \begin{tabular}{c} linear rate only using  \\  an extra unbiased compressor \end{tabular}\\ 
%%%%%%%%%%%%%%%%%%%%%%%%%%%							 
			 \hline
			\begin{tabular}{c}\algname{DoubleSqueeze} \\ \cite{DoubleSqueeze} \end{tabular}& \xmark & \cmark & \cmark & \begin{tabular}{c}bounded compression error; \\ slow $O(\nicefrac{1}{T^{2/3}})$ rate in nCVX case \end{tabular} \\			
%%%%%%%%%%%%%%%%%%%%%%%%%%%						
			\hline  \begin{tabular}{c} \algname{Qsparse-SGD}, \algname{CSER}  \\\cite{Qsparse_local_SGD,CSER} \end{tabular}&   \cmark & \cmark & \cmark & \begin{tabular}{c}bounded gradients; \\ slow $O(\nicefrac{1}{T^{1/2}})$ rate in nCVX case \end{tabular} \\
%%%%%%%%%%%%%%%%%%%%%%%%%%%			
			\hline \begin{tabular}{c} \algname{EC-SGD} \\\cite{Koloskova2019DecentralizedDL} \end{tabular} & \xmark & \cmark & \cmark$^\dagger$ & \begin{tabular}{c}bounded gradients; \\ slow $O(\nicefrac{1}{T^{2/3}})$ rate in nCVX case\end{tabular} \\
%%%%%%%%%%%%%%%%%%%%%%%%%%%			
			\hline        
		\end{tabular}      		   
	\end{table}

\begin{quote}{\em In this work we argue that  the bounded gradients \eqref{eq:G} and bounded dissimilarity \eqref{eq:bounded-diss} assumptions are too strong\footnote{The bounded gradient \eqref{eq:G} and bounded dissimilarity \eqref{eq:bounded-diss} assumptions are too strong as they are rarely satisfied. For example, neither hold even for simple quadratic functions. To see this, let $f_i(x)=x^\top \bA_i x$, where $\bA_i\in \R^{d\times d}$. Since $\nabla f_i(x)= \mB_i x$, where $\mB_i=\bA_i + \bA_i^\top$, the bounded gradient assumption requires the vectors $\sup_{x} \max_{i}\norm{\mB_i x}$ to be bounded, which is not the case, unless all matrices $\mB_i$ are zero. The bounded dissimilarity assumption \eqref{eq:bounded-diss}, which can be written in the form $\frac{1}{n} \sum_{i=1}^n  \| (\mB_i - \frac{1}{n}\sum_{j=1}^n \mB_j )x \|^2 \leq G^2,$
also does not hold, unless $\mB_i = \mB_j$ for all $i,j$, which reduces to the identical data regime, which is of limited interest.}, and that the sublinear rate \eqref{eq:T^{2/3}} is not what one should expect from a good analysis of a well designed error-compensated first-order method. Instead, one could hope for the faster $\cO(\nicefrac{1}{T})$ rate, which is what one obtains with methods using unbiased compressors \cite[Theorem 2.1]{MARINA}. The resolution of these issues is an open problem.  }
\end{quote}

\subsection{Summary of contributions} \label{sec:contributions}

In this work we address and resolve the aforementioned challenges. Our key contributions are:

\begin{itemize}
\item [] {\bf A. New error feedback mechanism.} We propose a new error feedback (resp.\ error compensation) mechanism, which we call \algname{EF21} (resp.\ \algname{EC21}) -- see Algorithms~\ref{alg:EF21-singlenode} and~\ref{alg:EF21}.  Unlike most results on error compensation, \algname{EF21} naturally works in the distributed heterogeneous data setting.

\begin{table}[]
\caption{Summary of complexity results obtained in this paper. Quantities: $\mu$ = PL constant; $\gamma$ = stepsize; $G^0$ = see \eqref{eq:G^t}; $\Psi^t =$ Lyapunov function defined in Theorem~\ref{thm:PL-main}.  }
\label{tbl:complexity}
% \footnotesize
\centering
\begin{tabular}{|c|c|c|c|}
\hline
 \bf Assumptions & \bf Complexity & \bf Theorem \\
\hline
 \begin{tabular}{c}$f_i$ is $L_i$-smooth \\ $f$ is lower bounded by $\finf$ \end{tabular}  &$\Exp{\sqnorm{\nabla f(\hat{x}^{T})} } \leq \frac{2\left(f(x^{0})-f^{\text {inf }}\right)}{\gamma T} + \frac{\Exp{G^0}}{\theta T}$ & \ref{thm:main-distrib} \\
\hline
  \begin{tabular}{c}$f_i$ is $L_i$-smooth \\ $f$ is lower bounded by $\finf$ \\ $f$ satisfies PL condition \end{tabular}  & $\Exp{ \Psi^T } \leq (1- \gamma \mu )^T \Exp{\Psi^0}$ & \ref{thm:PL-main} \\
\hline
\end{tabular}
\end{table}

\item []
 {\bf B. Standard assumptions and fast rates.} Our theoretical analysis of \algname{EF21} relies on standard assumptions only, which are: \begin{itemize}
 \item [i)] $L_i$-smoothness of the individual functions $f_i$, and
 \item [ii)] existence of a global lower bound $\finf \in \R$ on $f$. 
 \end{itemize}
 We prove that under these assumptions, \algname{EF21} enjoys the desirable $\cO(\nicefrac{1}{T})$ convergence rate, which improves upon the previous $\cO(\nicefrac{1}{T^{2/3}})$ state-of-the-art result of \citet{Koloskova2019DecentralizedDL} both in terms of the rate, and in terms of the strength of the  assumptions needed to obtain this result.   These complexity results are summarized in the first row of Table~\ref{tbl:complexity}.

\item []
{\bf  C. Linear rate for Polyak-Lojasiewicz functions.} We show that under the additional assumption that $f$ satisfies the Polyak-Lojasiewicz inequality, \algname{EF21} enjoys a linear convergence rate.  This improves upon the results of \citet{beznosikov2020biased}, who only obtain a linear rate in the case when $\nabla f_i(x^\star)=0$ for all $i$, where $x^\star=\arg\min f$, and provides an alternative to the linear convergence results of \citet{Lin_EC_SGD}, who needed to introduce additional unbiased compressors into their scheme, and hence additional communication, in order to obtain their results. 
% Our rates are  at the same time better than current state of the art, apply to a much wider class of problems, and are also more meaningful because the classical \algname{EF} mechanism is often applied to problems that do not satisfy  bounded  gradient or bounded dissimilarity assumptions.  
Our complexity results are summarized in  second row of Table~\ref{tbl:complexity}.

 \item []
{\bf D. Empirical superiority.} We show through extensive numerical experimentation on  both synthetic problems and deep learning benchmarks that \algname{EF21}  consistently and substantially outperforms \algname{EF} in practice. One of the reasons behind this  is the fact that our method is able to admit much larger learning rates. Since \algname{EF} has a large number of applications where it reigns supreme, we believe that \algname{EF21} will have a large impact on the practice of communication efficient distributed learning. 

\item []
{\bf E. A more aggressive variant.} We further propose a more aggressive variant, \algname{EF21+} (see Section~\ref{sec:E21+}), which has an even better empirical behavior. We show that if $\cC$ is deterministic, the same theorems capturing the convergence of \algname{EF} hold for \algname{EF21} as well.

\item []
 {\bf F. Stochastic setting.} We describe an extension to the stochastic setting, i.e., when each node computes a stochastic gradient instead of the exact/full gradient,  in Appendix~\ref{sec:extensions}.

\end{itemize}

\section{Main Results} \label{sec:results}

Since we are about to re-engineer the classical error feedback technique, it will be useful to take a step back and re-examine the issues inherent to the simplest first order method which uses biased compressors but  does {\em not} employ error feedback: distributed compressed gradient descent (\algname{DCGD}). 

Let $x^t$ be the $t$-th iterate, shared by all $n$ nodes. Each node $i$ first computes its local gradient $\nabla f_i(x^t)$, compresses is using some $\cC \in \bB(\alpha)$, and sends the compressed gradient $\cC(\nabla f_i(x^t))$ to the master. The master aggregates all $n$ messages via averaging, and performs the optimization step
\begin{equation} \squeeze x^{t+1} = x^t - \frac{\gamma}{n} \sum \limits_{i=1}^n \cC(\nabla f_i(x^t)).\label{eq:DCGD}\end{equation}
As mentioned before, this method can diverge, even in simple quadratic problems in low dimensions~\citep{beznosikov2020biased}. Let us look at this problem from a different angle. Assume, for the sake of an intuitive argument, that the sequence of iterates actually converges to some $x^\dagger$. Since in general there is no reason for the gradients $\nabla f_i(x^\dagger)$ to be all zero, even if $x^\dagger$ is the minimizer of $f$,  the application of $\cC$ to the gradients $\nabla f_i(x^t)$ will introduce a nonzero distortion even if 
$x^t\approx x^\dagger$. Indeed, in view of \eqref{eq:b_compressor}, all that can be guaranteed is that $$\Exp{\sqnorm{\cC(\nabla f_i(x^t))-\nabla f_i(x^t)}} \leq (1-\alpha) \sqnorm{\nabla f_i(x^t)},$$ which can be large if the norm of $\nabla f_i(x^t)$ is large. So, the method is intrinsically unstable around $x^\dagger$, and hence can not converge to $x^\dagger$.

Our idea is to fix this issue by {\em compressing different vectors} instead of the gradients, vectors that would hopefully converge to zeros instead. Since in view of \eqref{eq:b_compressor} the application  of $\cC$ to progressively vanishing vectors introduces progressively vanishing distortion, the stabilization problem would be solved. But what vectors should we compress? In order to answer this question, it will be useful to consider a simpler and more abstract setting first, which we shall do next.

\subsection{Markov compressors}\label{sec:Markov}

Assume we are given a sequence of input vectors $\{v^t\}_{t\geq 0}$  (e.g., gradients) generated by some algorithm. This sequence does not necessarily converge to zero. Our goal is to produce a sequence of ``good'' and ``easy to communicate'' (to some entity, which we shall call the ``master'') estimates of these vectors, making use of a compressor $\cC\in \bB(\alpha)$.  Let us proceed through several steps of discovery.

\paragraph{Naive idea.} The first and naive approach, described above, is to simply output the sequence of compressed inputs: $\{\cC(v^t)\}_{t\geq 0}$. However, while these estimates can be communicated efficiently, they are not getting ``better''. That is, the distortion $\Exp{\sqnorm{\cC(v^t) -v^t}}$ is not necessarily improving.
 
\paragraph{Good but not implementable idea.} What can we do better? Consider the following idea. If we new, hypothetically, the limit of this sequence, $v^*$, we could  output  $v^* + \cC(v^t - v^*)$ at iteration $t$ instead. Since $v^t\to v^*$, the distortion between the input and the output at iteration $t$ is
\[\Exp{\sqnorm{v^* + \cC(v^t - v^*) - v^t} } = \Exp{\sqnorm{ \cC(v^t - v^*) - (v^t-v^*)} } \overset{\eqref{eq:b_compressor}}{\leq}  (1-\alpha)\sqnorm{v^t-v^*} \to 0. \]
So, the distortion issue is fixed! Moreover, if we assume the master knows $v^*$, then the output vector at each iteration can be communicated cheaply as well, since all we need to communicate is the compressed vector $\cC(v^t - v^*)$. It will be useful to think of this operation as a new compressor, called $\cC_{v^*}$, one that takes $v^t$ as an input, and gives $v^* + \cC(v^t - v^*)$ as its output. That is, we can define \begin{equation}\label{eq:C-star}\cC_{v^*}(v) \eqdef v^* + \cC(v - v^*).\end{equation}
While the compressor $\cC_{v^*}$ satisfies all our requirements, it is not implementable, since the vector $v^*$ is not known. We will now use this intuition to construct an implementable mechanism.

\paragraph{Good and implementable idea.} In the above construction, we have used the fact that $v^t- v^* \to 0$ to construct a good mechanism, but one that is not implementable. How can we fix this issue? The rescue comes from the {\em recursive} observation that if we indeed succeed in constructing a compressor, let's call it $\cM$, such that the distortion between $\cM(v^t)$ and $v^t$ vanishes as $t\to \infty$, then it must be the case that $v^t - \cM(v^t) \to 0$. So, we can compress {\em this} vanishing vector instead. This idea gives rise to the following recursive definition of $\cM$: 
		\begin{eqnarray}  \cM(v^0) &\eqdef& \cC(v^0) \label{eq:cM-init} \\ \cM(v^{t+1}) &\eqdef & \cM(v^t) + \cC(v^{t+1} - \cM(v^t)), \quad t\geq 0 \label{eq:cM-recursion}\end{eqnarray}
Note that \eqref{eq:cM-recursion} is similar to \eqref{eq:C-star}, with one key difference: we are using the previously compressed vector $\cM(v^t)$ instead of the limit vector $v^*$. This property also makes our new compressor non-stationary, i.e., it has a Markov property. 

It is easy to establish that (see Appendix~\ref{sec:markov-proofs}) under some assumptions about the speed at which the input sequence $v^t$ converges to $v^*$, it will be the case that $\Exp{\sqnorm{\cM(v^t) - v^t}} \to 0$. For instance, if the convergence rate of the input sequence is linear, then the distortion will converge to  0. While this is interesting on its own, let us deploy our new tool, which we call {\em Markov compressor}, in the context of gradient descent, and then in the context of distributed gradient descent.

\subsection{Compressed gradient descent using the Markov compressor}

			\begin{algorithm}[t]
		\centering
		\caption{\algname{EF21} (Single node)}\label{alg:EF21-singlenode}
		\begin{algorithmic}[1]
			\State \textbf{Input:} starting point $x^{0} \in \R^d$, learning rate $\gamma>0$, $g^0 = \cC(\nabla f(x^0))$ 
			\For{$t=0,1, 2, \dots, T-1 $}
			\State $x^{t+1} = x^t - \gamma  g^t$ 
			\State $g^{t+1} = g^t + \cC( \nabla f(x^{t+1}) - g^t)$
			\EndFor
		\end{algorithmic}
	\end{algorithm}
	
For simplicity, consider solving  problem \eqref{eq:finite_sum} in $n=1$ case, i.e., the problem \begin{equation}\label{eq:problem-1-node}\min_{x\in\R^d} f(x),\end{equation} using the compressed gradient descent method featuring the Markov compressor. Start with $x^0\in \R^d$, stepsize $\gamma>0$, and let $$\cM(\nabla f(x^0) ) = \cC(\nabla f(x^0) ).$$ After this, for $t\geq 0$ iterate:
\begin{eqnarray} x^{t+1} & = & x^t - \gamma \cM( \nabla f(x^t) ) \label{eq:89f9dh8fd} \\
\cM( \nabla f(x^{t+1}) ) & = & \cM(\nabla f(x^t)) + \cC(\nabla f(x^{t+1}) - \cM(\nabla f(x^t))). \label{eq: uyd8s7tg9fid}
\end{eqnarray}

Note that the situation here is more complicated than the abstract setting described earlier since now there is {\em interaction} between the input sequence $\{\nabla f(x^t)\}_{t\geq 0}$ of gradients  and the sequence $\cM(\nabla f(x^t))$ of compressed gradients via the Markov compressor. Indeed, the output of $\cM$ at iteration $t$ influences the next iterate $x^{t+1}$ (via \eqref{eq:89f9dh8fd}), which in turn defines the next input vector $v^{t+1} = \nabla f(x^{t+1})$ in the sequence, and so on.

To lighten up the heavy notation in \eqref{eq:89f9dh8fd} and \eqref{eq: uyd8s7tg9fid}, it will be useful to write $g^t  = \cM(\nabla f(x^t))$. Using this new notation that hides the fact that $g^t$ is the application of the Markov compressor to the gradient, the method described above is formalized as Algorithm~\eqref{alg:EF21-singlenode}. This is precisely our proposed new variant of error feedback, \algname{EF21}, specialized to the  single node problem \eqref{eq:problem-1-node}.

\subsection{Distributed variant of \algname{EF21}} 

The main method of this paper, which we now present as Algorithm~\ref{alg:EF21}, is an extension  of Algorithm~\ref{alg:EF21-singlenode} to the general finite-sum problem \eqref{eq:finite_sum}. In particular, we apply the Markov compressor individually on each node to the local gradients $\nabla f_i(x^t)$, and communicate the compressed gradients to the master. Recall that we only need to communicate the vectors  $\cC(\nabla f_i(x^{t+1}) - g_i^t)$ since the additive terms $g_i^t$ appearing in the Markov compressor were communicated in the previous round. Master then averages all gradient estimators, obtaining $g^{t+1} = \frac{1}{n} \sum_{i=1}^n  g_i^{t+1}$, which can be done by performing the calculation $g^{t+1} = g^t + \frac{1}{n} \sum_{i=1}^n c_i^t $, where $g^t$ is the average from the previous round which the master maintains, and 
$c_i^t = \cC(\nabla f_i(x^{t+1}) - g_i^t)$ are the compressed messages. After this, the master takes a gradient-like step, and broadcasts the new model to all nodes.

				\begin{algorithm}[t]
		\centering
		\caption{\algname{EF21} (Multiple nodes)}\label{alg:EF21}
		\begin{algorithmic}[1]
			\State \textbf{Input:} starting point $x^{0} \in \R^d$;  $g_i^0 = \cC(\nabla f_i(x^0))$ for $i=1,\dots, n$ (known by nodes and the master); learning rate $\gamma>0$; $g^0 = \frac{1}{n}\sum_{i=1}^n g_i^0$ (known by master)
			\For{$t=0,1, 2, \dots , T-1 $}
			\State Master computes $x^{t+1} = x^t - \gamma g^t$ and broadcasts $x^{t+1}$ to all nodes
			\For{{\bf all nodes $i =1,\dots, n$ in parallel}}
			\State Compress $c_i^t = \cC(\nabla f_i(x^{t+1}) - g_i^t)$ and send $c_i^t $ to the master
			\State Update local state $g_i^{t+1} = g_i^t + \cC( \nabla f_i(x^{t+1}) - g_i^t)$
			\EndFor
			\State Master computes $g^{t+1} = \frac{1}{n} \sum_{i=1}^n  g_i^{t+1}$ via  $g^{t+1} = g^t + \frac{1}{n} \sum_{i=1}^n c_i^t $
			\EndFor
		\end{algorithmic}
	\end{algorithm}

	%%%%%%%%%%%%%%%%%%%%%%%%
%%%%%%%%%%%%%%%%%%%%%%%%
\subsection{Theory}\label{sec:theory}
%%%%%%%%%%%%%%%%%%%%%%%%
%%%%%%%%%%%%%%%%%%%%%%%%

	We make the following assumption throughout:	
	\begin{assumption}[Smoothness and lower boundedness]\label{as:main}	 
Every $f_i$ has $L_i$-Lipschitz gradient, i.e., $$\norm{\nabla f_i(x) - \nabla f_i(y)} \le L_i\norm{x - y}$$ for all $x, y\in \R^d$, and $\finf \eqdef \inf_{x\in \R^d} f(x)>-\infty $. 	\end{assumption}

If each $f_i$ has $L_i$-Lipschitz gradient, then it is straightforward to check by Jensen's inequality that $f$ is $L$-Lipschitz, with $L$ satisfying the inequality $L\leq \frac{1}{n}\sum_i L_i$. It will be also useful to define 
$\wL \eqdef (\frac{1}{n}\sum_{i=1}^n L_i^2)^{1/2}.$ By the arithmetic-quadratic mean inequality, we have $\frac{1}{n}\sum_i L_i \leq \wL$.	Let  \begin{equation}\label{eq:G^t} \squeeze G^t \eqdef \frac{1}{n} \sum \limits_{i=1}^n \sqnorm{ g_i^t-\nabla f_i(x^{t}) },\end{equation}	
a quantity which will appear in both our theorems.  In \algname{EF21} we use $\cC \in \bB(\alpha)$, where $0<\alpha\leq 1$, and define $\theta = 1-\sqrt{1-\alpha}$ and  $\beta = \frac{1-\alpha}{1-\sqrt{1-\alpha}}$. We now formulate our first complexity result.

	\begin{theorem}\label{thm:main-distrib}
		Let Assumption~\ref{as:main} hold, and let the stepsize in Algorithm~\ref{alg:EF21} be set as
\begin{equation} \label{eq:manin-nonconvex-stepsize}
		0<\gamma \leq \rb{L + \wL\sqrt{\frac{\beta}{\theta}}}^{-1}.
\end{equation}
		Fix $T \geq 1$ and let $\hat{x}^{T}$ be chosen from the iterates $x^{0}, x^{1}, \ldots, x^{T-1}$  uniformly at random. Then
\begin{equation} \label{eq:main-nonconvex}		
\squeeze \Exp{\sqnorm{\nabla f(\hat{x}^{T})} } \leq \frac{2\left(f(x^{0})-f^{\text {inf }}\right)}{\gamma T} + \frac{\Exp{G^0}}{\theta T} . 
\end{equation}		
	\end{theorem}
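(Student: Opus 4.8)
The plan is to analyze \algname{EF21} through a Lyapunov function that adds to $f(x^t)-\finf$ a suitable multiple of the compression-error quantity $G^t$ from \eqref{eq:G^t}, prove a one-step decrease, and telescope.

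\emph{Step 1 (descent).} First I would apply $L$-smoothness of $f$ to the step $x^{t+1}=x^t-\gamma g^t$, where $g^t=\frac1n\sum_i g_i^t$, and split the cross term with the identity $\langle a,b\rangle=\tfrac12(\sqnorm{a}+\sqnorm{b}-\sqnorm{a-b})$ applied to $\langle\nf{x^t},g^t\rangle$. Since $g^t-\nf{x^t}$ is the average of the $g_i^t-\nfi{x^t}$, Jensen's inequality gives $\sqnorm{g^t-\nf{x^t}}\le G^t$, and one arrives at
\[
f(x^{t+1}) \le f(x^t) - \tfrac{\gamma}{2}\sqnorm{\nf{x^t}} - \tfrac{\gamma}{2}(1-L\gamma)\sqnorm{g^t} + \tfrac{\gamma}{2}G^t .
\]

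\emph{Step 2 (contraction of $G^t$ -- the crux).} For each node, $g_i^{t+1}-\nfi{x^{t+1}}=\cC(\nfi{x^{t+1}}-g_i^t)-(\nfi{x^{t+1}}-g_i^t)$, so \eqref{eq:b_compressor} yields, in conditional expectation, $\Exp{\sqnorm{g_i^{t+1}-\nfi{x^{t+1}}}}\le(1-\alpha)\sqnorm{\nfi{x^{t+1}}-g_i^t}$. I would then write $\nfi{x^{t+1}}-g_i^t=(\nfi{x^{t+1}}-\nfi{x^t})+(\nfi{x^t}-g_i^t)$, bound the first summand by $L_i\|x^{t+1}-x^t\|=L_i\gamma\|g^t\|$ via $L_i$-smoothness, and apply Young's inequality $\sqnorm{a+b}\le(1+s)\sqnorm{a}+(1+\tfrac1s)\sqnorm{b}$. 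The key computation is to choose $s=\theta/\sqrt{1-\alpha}$, which is precisely the value that turns $(1-\alpha)(1+s)$ into $\sqrt{1-\alpha}=1-\theta$ and $(1-\alpha)(1+\tfrac1s)$ into $\beta$. Averaging over $i$ and using $\frac1n\sum_i L_i^2=\wL^2$ then gives
\[
\Exp{G^{t+1}} \le (1-\theta)\Exp{G^t} + \beta\wL^2\gamma^2\,\Exp{\sqnorm{g^t}} .
\]
I expect this step to be the main obstacle: everything hinges on picking the Young parameter so the recursion constants match the definitions $\theta=1-\sqrt{1-\alpha}$, $\beta=(1-\alpha)/(1-\sqrt{1-\alpha})$.

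\emph{Step 3 (Lyapunov combination).} Define $\Phi^t\eqdef f(x^t)-\finf+\frac{\gamma}{2\theta}G^t$. Adding $\frac{\gamma}{2\theta}$ times the $G^t$-recursion to the descent inequality, the $G^t$ terms combine with coefficient $\frac{\gamma}{2}+\frac{\gamma}{2\theta}(1-\theta)=\frac{\gamma}{2\theta}$ (which is why this weight is the right one), while the $\sqnorm{g^t}$ coefficient becomes $-\frac{\gamma}{2}(1-L\gamma)+\frac{\beta\wL^2\gamma^3}{2\theta}$, which is $\le0$ exactly when $L\gamma+\frac{\beta\wL^2}{\theta}\gamma^2\le1$. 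To see that the stepsize rule \eqref{eq:manin-nonconvex-stepsize} suffices, note it forces $\wL\sqrt{\beta/\theta}\,\gamma\le1$, hence $\frac{\beta\wL^2}{\theta}\gamma^2=\big(\wL\sqrt{\beta/\theta}\,\gamma\big)^2\le\wL\sqrt{\beta/\theta}\,\gamma$, so $L\gamma+\frac{\beta\wL^2}{\theta}\gamma^2\le L\gamma+\wL\sqrt{\beta/\theta}\,\gamma\le1$. This produces the clean one-step bound $\Exp{\Phi^{t+1}}\le\Exp{\Phi^t}-\frac{\gamma}{2}\Exp{\sqnorm{\nf{x^t}}}$.

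\emph{Step 4 (telescoping).} Summing over $t=0,\dots,T-1$, dropping $\Exp{\Phi^T}\ge0$, and noting that $g_i^0=\cC(\nfi{x^0})$ is random so $\Exp{\Phi^0}=f(x^0)-\finf+\frac{\gamma}{2\theta}\Exp{G^0}$, I would divide by $\gamma T/2$ and identify $\frac1T\sum_{t=0}^{T-1}\Exp{\sqnorm{\nf{x^t}}}$ with $\Exp{\sqnorm{\nf{\hat x^T}}}$ by the uniform random choice of $\hat x^T$, which is exactly \eqref{eq:main-nonconvex}. Apart from Step 2, the remaining manipulations are routine bookkeeping.
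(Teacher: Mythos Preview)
Your proposal is correct and follows essentially the same route as the paper: the same descent inequality (the paper states it via $\sqnorm{x^{t+1}-x^t}$ while you write it via $\sqnorm{g^t}=\gamma^{-2}\sqnorm{x^{t+1}-x^t}$, which is identical), the same Young-parameter $s=1/\sqrt{1-\alpha}-1$ to produce the $(1-\theta)/\beta$ contraction for $G^t$, the same Lyapunov weight $\gamma/(2\theta)$, and the same stepsize reduction $L\gamma+\frac{\beta\wL^2}{\theta}\gamma^2\le L\gamma+\wL\sqrt{\beta/\theta}\,\gamma\le 1$ (the paper packages this as a separate lemma). There is nothing materially different between the two arguments.
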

	
Note that $\sqrt{\nicefrac{\beta}{\theta}}=\frac{(1+\sqrt{1-\alpha})}{\alpha} -1 \leq \frac{2}{\alpha}-1$ is decreasing in $\alpha$. This makes sense since larger $\alpha$ means less dramatic compression, which leads to smaller $\sqrt{\nicefrac{\beta}{\theta}}$, and this through \eqref{eq:manin-nonconvex-stepsize} allows for larger stepsize, and hence fewer communication rounds.
	We now introduce the PL assumption, which enables us to obtain a linear convergence result.
	\begin{assumption}[Polyak-Lojasiewicz] \label{ass:PL} There exists $\mu>0$ such that $f(x) - f(x^\star) \leq  \frac{1}{2 \mu}\sqnorm{\nabla f(x)}$ for all $x\in \R^d$, where $x^\star = \arg\min f$.
	\end{assumption}
	
	\begin{theorem} \label{thm:PL-main} Let Assumptions~\ref{as:main} and~\ref{ass:PL} hold, and 	 let the stepsize in Algorithm~\ref{alg:EF21} be set as 
\begin{equation} \label{eq:PL-stepsize}
\squeeze 		0<\gamma \leq \min \left\{ \rb{L + \wL\sqrt{\frac{2 \beta}{\theta}}}^{-1} , \frac{\theta}{2 \mu}\right\}.
\end{equation}
		Let $\Psi^t\eqdef f(x^{t})-f(x^\star)+ \frac{\gamma}{\theta}G^t$. Then for any $T\geq 0$, we have
\begin{equation} \label{PL-main}
		\Exp{ \Psi^T } \leq (1- \gamma \mu )^T \Exp{\Psi^0}.
\end{equation}
	\end{theorem}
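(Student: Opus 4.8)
\textbf{Proof proposal for Theorem~\ref{thm:PL-main}.}

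The plan is to build a one-step descent recursion for the Lyapunov function $\Psi^t = f(x^t) - f(x^\star) + \frac{\gamma}{\theta} G^t$ and then iterate it. The argument has two coupled pieces: a descent estimate for the function values, and a contraction-type estimate for the compression error $G^t$. First I would handle the function value. By $L$-smoothness of $f$ and the update $x^{t+1} = x^t - \gamma g^t$, a standard descent lemma gives $f(x^{t+1}) \le f(x^t) - \gamma\langle \nabla f(x^t), g^t\rangle + \frac{L\gamma^2}{2}\|g^t\|^2$. Writing $g^t = \nabla f(x^t) + (g^t - \nabla f(x^t))$ and using $\langle a, a+b\rangle = \|a\|^2 + \langle a,b\rangle$ together with Young's inequality, one converts this into a bound of the form $f(x^{t+1}) - f(x^\star) \le f(x^t) - f(x^\star) - \frac{\gamma}{2}(1-L\gamma)\|\nabla f(x^t)\|^2 + \frac{\gamma}{2}\|g^t - \nabla f(x^t)\|^2$ (roughly; the exact constants depend on how the cross term is split). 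Crucially $\|g^t - \nabla f(x^t)\|^2 \le G^t$ by Jensen applied to \eqref{eq:G^t}, so this term is controlled by the second component of $\Psi^t$.

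The second piece is the recursion for $G^{t+1}$. Here I would work node-wise: $g_i^{t+1} - \nabla f_i(x^{t+1}) = g_i^t + \cC(\nabla f_i(x^{t+1}) - g_i^t) - \nabla f_i(x^{t+1})$, which is exactly $\cC(w) - w$ for $w = \nabla f_i(x^{t+1}) - g_i^t$. Taking conditional expectation and using \eqref{eq:b_compressor}, $\Exp{\|g_i^{t+1} - \nabla f_i(x^{t+1})\|^2} \le (1-\alpha)\|\nabla f_i(x^{t+1}) - g_i^t\|^2$. Now insert $\nabla f_i(x^t)$: $\|\nabla f_i(x^{t+1}) - g_i^t\| \le \|g_i^t - \nabla f_i(x^t)\| + \|\nabla f_i(x^{t+1}) - \nabla f_i(x^t)\| \le \|g_i^t - \nabla f_i(x^t)\| + L_i\|x^{t+1}-x^t\| = \|g_i^t - \nabla f_i(x^t)\| + \gamma L_i\|g^t\|$. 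Squaring with the weighted Young's inequality $(p+q)^2 \le (1+s)p^2 + (1+s^{-1})q^2$ for a parameter $s>0$ to be optimized, averaging over $i$, and using $\frac{1}{n}\sum_i L_i^2 = \wL^2$, I get $\Exp{G^{t+1}} \le (1-\alpha)(1+s) G^t + (1-\alpha)(1+s^{-1})\gamma^2 \wL^2 \|g^t\|^2$. The choice $s = \frac{\theta}{1-\theta} = \frac{1-\sqrt{1-\alpha}}{\sqrt{1-\alpha}}$ makes $(1-\alpha)(1+s) = 1-\theta$ and $(1-\alpha)(1+s^{-1}) = \beta$, giving the clean form $\Exp{G^{t+1}} \le (1-\theta)G^t + \beta\gamma^2\wL^2\|g^t\|^2$. (This mirrors the analysis behind Theorem~\ref{thm:main-distrib}; I would reuse it verbatim.)

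Combining: $\Exp{\Psi^{t+1}} \le f(x^t)-f(x^\star) - \frac{\gamma}{2}(1-L\gamma)\|\nabla f(x^t)\|^2 + \frac{\gamma}{2}G^t + \frac{\gamma}{\theta}\big[(1-\theta)G^t + \beta\gamma^2\wL^2\|g^t\|^2\big]$. The $G^t$ coefficient is $\frac{\gamma}{2} + \frac{\gamma(1-\theta)}{\theta} = \frac{\gamma}{\theta} - \frac{\gamma}{2}$, so to extract a $(1-\gamma\mu)$ contraction of the $\frac{\gamma}{\theta}G^t$ term I need $\frac{\gamma}{\theta} - \frac{\gamma}{2} \le (1-\gamma\mu)\frac{\gamma}{\theta}$, i.e. $\gamma\mu \le \frac{\theta}{2}$, which is precisely the second constraint in \eqref{eq:PL-stepsize}. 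For the function-value term I use PL: $-\frac{\gamma}{2}(1-L\gamma)\|\nabla f(x^t)\|^2 \le -\gamma\mu(1-L\gamma)(f(x^t)-f(x^\star))$; to make the coefficient of $f(x^t)-f(x^\star)$ at most $1-\gamma\mu$ it suffices that the leftover $\|g^t\|^2$ and $\|\nabla f(x^t)\|^2$ terms are nonpositive — and the stepsize bound $\gamma \le (L + \wL\sqrt{2\beta/\theta})^{-1}$ is exactly what is needed so that $\frac{\beta\gamma^3\wL^2}{\theta}\|g^t\|^2$ is dominated by the slack in $-\frac{\gamma}{2}(1-L\gamma)\|\nabla f(x^t)\|^2$ after bounding $\|g^t\|^2 \le 2\|\nabla f(x^t)\|^2 + 2G^t$ (the factor $2$ is the source of the $\sqrt{2}$, versus $\sqrt{1}$ in Theorem~\ref{thm:main-distrib}). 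Assembling these yields $\Exp{\Psi^{t+1}} \le (1-\gamma\mu)\Psi^t$ after taking full expectation, and unrolling over $t=0,\dots,T-1$ gives \eqref{PL-main}.

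The main obstacle is the bookkeeping in the last step: one must choose how to split $\|g^t\|^2$ between the $\|\nabla f(x^t)\|^2$ slack (which feeds the PL contraction) and the $G^t$ slack (which must stay within the $\frac{\gamma}{\theta}$ budget), and simultaneously satisfy \emph{both} stepsize constraints, so that every residual quadratic term cancels with a nonpositive coefficient. Getting the two Young's-inequality parameters and the stepsize thresholds to line up so that the coefficients collapse exactly to $1-\gamma\mu$ is the delicate part; everything else is the standard smoothness/compression machinery already used for Theorem~\ref{thm:main-distrib}.
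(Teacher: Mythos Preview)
Your overall scaffolding---couple a descent inequality with the $G^t$ recursion inside the Lyapunov $\Psi^t = f(x^t)-f(x^\star)+\frac{\gamma}{\theta}G^t$, apply PL to the gradient-norm term, and verify the two stepsize constraints---is exactly the paper's argument. Your derivation of the $G^{t+1}$ recursion and of the condition $\gamma\mu\le\theta/2$ for the $G^t$ contraction matches the paper. However, two concrete errors prevent your bookkeeping from closing with the stated stepsize bound.

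First, your descent inequality is misstated. The correct form (this is the paper's Lemma~\ref{lem:80hf08dgfd}, obtained via the polarization identity rather than via Young on the cross term) is
\[
f(x^{t+1}) \le f(x^{t}) - \tfrac{\gamma}{2}\|\nabla f(x^t)\|^2 - \bigl(\tfrac{1}{2\gamma}-\tfrac{L}{2}\bigr)\|x^{t+1}-x^t\|^2 + \tfrac{\gamma}{2}\|g^t-\nabla f(x^t)\|^2,
\]
i.e.\ the coefficient $(1-L\gamma)$ sits on $\|g^t\|^2=\gamma^{-2}\|x^{t+1}-x^t\|^2$, \emph{not} on $\|\nabla f(x^t)\|^2$. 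If you absorb the $\|g^t\|^2$ term into $\|\nabla f(x^t)\|^2$ and $G^t$ as you suggest, then after PL you only extract $-\gamma\mu(1-L\gamma)(f(x^t)-f(x^\star))$ at best, and there is no nonnegative slack that can restore the full $(1-\gamma\mu)$ contraction on the function-value part; in fact, to match $(1-\gamma\mu)$ you would need $-\frac{L\gamma^2}{2}-\frac{2\beta\gamma^3\wL^2}{\theta}\ge 0$, which is impossible.

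Second, and consequently, the splitting $\|g^t\|^2 \le 2\|\nabla f(x^t)\|^2 + 2G^t$ is unnecessary and is \emph{not} where the $\sqrt{2}$ comes from. With the correct descent lemma, the positive $\frac{\gamma}{\theta}\beta\wL^2\|x^{t+1}-x^t\|^2$ from the $G$-recursion is cancelled directly against the negative $-(\tfrac{1}{2\gamma}-\tfrac{L}{2})\|x^{t+1}-x^t\|^2$; nonnegativity of this combined coefficient is precisely $\frac{2\beta\wL^2}{\theta}\gamma^2 + L\gamma \le 1$, hence $\gamma \le (L+\wL\sqrt{2\beta/\theta})^{-1}$. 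The factor $2$ (versus $1$ in Theorem~\ref{thm:main-distrib}) appears because the Lyapunov weight on $G^t$ is $\frac{\gamma}{\theta}$ here---double the $\frac{\gamma}{2\theta}$ used in Theorem~\ref{thm:main-distrib}---which is exactly what is required so that $\frac{\gamma}{2}+\frac{\gamma(1-\theta)}{\theta}=\frac{\gamma}{\theta}(1-\frac{\theta}{2})\le(1-\gamma\mu)\frac{\gamma}{\theta}$. Once you keep the $\|x^{t+1}-x^t\|^2$ term in the descent lemma, the ``delicate'' simultaneous tuning you worry about disappears: the two stepsize conditions handle the two cancellations independently, with no extra Young parameter needed.
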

	
Our theorems hold for an arbitrary choice of the initial vectors $\{g_i^0\}$, and not just for $g_i^0 = \cC(\nabla f_i(x^0))$. For instance, if $g_i^0=\nabla f_i(x^0)$ is used, then $\Exp{G^0}=0$, and the second term in \eqref{eq:main-nonconvex} vanishes.

\subsection{\algname{EF21+}: Use $\cC$ or the Markov compressor, whichever is better} 	\label{sec:E21+}

We now briefly describe a new hybrid method, called \algname{EF21+} (Algorithm~\ref{alg:EF21+}), which often performs particularly well in practice. We also show that both Theorems~\ref{thm:main-distrib} and \ref{thm:PL-main} still apply.
	
\paragraph{The \algname{EF21+} Algorithm.}	 In every communication round, \algname{EF21+}  allows each node  to compress using the ``best'' of $\cC$ and  the Markov compressor generated. So, \algname{EF21+} can be thought of as a hybrid between \algname{DCGD} (see \eqref{eq:DCGD}) and \algname{EF21}. The decision about which compressor to use is made by each node $i$ individually, based on which of the distortions $\norm{\cC(s) - s}$ and $\norm{\cM(s) - s}$ is smaller, where $s=\nabla f_i(x^{t+1})$. 

The method is formally described formally as Algorithm~\ref{alg:EF21+}.
	
\begin{algorithm}[h]
		\centering
		\caption{\algname{EF21+} (Multiple nodes)}\label{alg:EF21+}
		\begin{algorithmic}[1]
			\State \textbf{Input:} starting point $x^{0} \in \R^d$;  $g_i^0 = \cC(\nabla f_i(x^0))$ for $i=1,\dots, n$ (known by nodes and the master); learning rate $\gamma>0$; $g^0 = \frac{1}{n}\sum_{i=1}^n g_i^0$ (known by master)
			\For{$t=0,1, 2, \dots , T-1 $}
			\State Master computes $x^{t+1} = x^t - \gamma g^t$ and broadcasts $x^{t+1}$ to all nodes
			\For{{\bf all nodes $i =1,\dots, n$ in parallel}}
		        \State Compute gradient compressed by biased compressor $b_i^{t+1} = \cC(\nabla f_i(x^{t+1}))$
		         \State Compute gradient compressed my Markov compressor $m_i^{t+1} = g_i^t + \cC(\nabla f_i(x^{t+1}) - g_i^t)$
		         \State Compute distortions: $B_i^{t+1} = \sqnorm{b_i^{t+1} - \nabla f_i(x^{t+1})}$; $M_i^{t+1} = \sqnorm{m_i^{t+1} - \nabla f_i(x^{t+1})}$
		         \State Set $g_i^{t+1} = \begin{cases}m_i^{t+1} & \text{if} \quad M_i^{t+1} \leq B_i^{t+1} \\  b_i^{t+1} &  \text{if} \quad   M_i^{t+1} > B_i^{t+1}\end{cases}$
		         \EndFor
			\State Master computes $g^{t+1} = \frac{1}{n} \sum_{i=1}^n  g_i^{t+1}$ 
			\EndFor
		\end{algorithmic}
	\end{algorithm}

		\paragraph{Analysis of \algname{EF21+}.}
It is easy to see that both Theorem~\ref{thm:main-distrib} and Theorem~\ref{thm:PL-main} apply for \algname{EF21+} as well, under the additional assumption that $\cC$ is deterministic, such as Top-$k$. Here we only outline the proof. Note that the properties of $\cC$ appear in the proofs only through Lemma~\ref{lem:theta-beta} (see the appendix), which in the language of Algorithm~\ref{alg:EF21+} says that
\[\Exp{ M_i^{t+1} \;|\; W^t} \leq  (1-\theta)   G_i^t + \beta   \sqnorm{\nabla f_i(x^{t+1}) - \nabla f_i(x^t)} ,\]	
where $G_i^{t} = \sqnorm{g_i^t - \nabla f_i(x^t)}$. On the other hand, due	 to Step 8 in Algorithm~\ref{alg:EF21+}, we know that
\[ G_i^{t+1} \leq \min\{B_i^{t+1}, M_i^{t+1}\} \leq M_i^{t+1}.\]
Now, due to to the assumption that $\cC$ is a deterministic compressor, we have $\Exp{G_i^{t+1} \mid W^t} \leq G_i^{t+1}$. By stringing these three inequalities together, we arrive at
\[\Exp{ G_i^{t+1} \;|\; W^t} \leq  (1-\theta)   G_i^t + \beta   \sqnorm{\nabla f_i(x^{t+1}) - \nabla f_i(x^t)} ,\]	
and this inequality can be used in the proofs instead. The rest of the proof is identical.

\subsection{Dealing with stochastic gradients}\label{sec:stoch_grad-Main}

In Section~\ref{sec:extensions} (see Algorithm~\ref{alg:EF21-stoch}) we describe a natural extension of \algname{EF21} to the setting where full gradient computations are replaced by stochastic gradient estimators, i.e., we use a random vector
$$\hat{g}_i^t \approx	\nabla f_i(x^t).$$
We also outline how convergence analysis is performed in this regime.
	
				%%%%%%%%%%%%%%%%%%%%%%%%
	%%%%%%%%%%%%%%%%%%%%%%%%
\section{Relationship between \algname{\normalsize EF} and \algname{\normalsize EF21}} \label{sec:equivalence}
			%%%%%%%%%%%%%%%%%%%%%%%%
	%%%%%%%%%%%%%%%%%%%%%%%%
		
While this is not at all apparent at first sight, it turns out that \algname{EF} and \algname{EF21} are closely related. Before we describe this connection, however, we will first review the original \algname{EF} method.

\subsection{The original error feedback method}

The \algname{EF} method is described in Algorithm~\ref{alg:EF-original}. We write it in a slightly non-conventional but equivalent form which facilitates comparison with \algname{EF21}.

\begin{algorithm}[h]
			\centering
			\caption{\algname{EF} (Original error feedback)}\label{alg:EF-original}
			\begin{algorithmic}[1]
                  		\State Each node  $i=1,\dots, n$ sets the initial error to zero: {\color{green}$e^0_i = 0$} 
				\State Each node $i=1,\dots, n$ computes ${\color{blue}w_i^0} = \cC ( {\color{red}\gamma \nabla f_i(x^0)})$ and sends this to the master

				\For{$t=0, 1, 2, \dots, T-1 $}
				\State Master computes $x^{t+1} = x^t - \suminn {\color{blue}w_i^t}$
				\For{{\bf all nodes $i =1,\dots, n$ in parallel}}
				\State  Compute current error: ${\color{green}e_i^{t+1}} = {\color{green}e_i^t} + {\color{red}\gamma  \nabla f_i(x^t)} - {\color{blue}w_i^t} $
				\State Compute new local gradient $\nabla f_i(x^{t+1})$
				\State Compute error-compensated (stepsize-scaled) gradient ${\color{blue}w_i^{t+1}} = \cC({\color{green}e^{t+1}_i} + {\color{red}\gamma \nabla f_i(x^{t+1}))}$
				\State Send {\color{blue}$w_i^{t+1}$} to the master
				\EndFor
				\EndFor
			\end{algorithmic}
		\end{algorithm}	

 \algname{EF} works as follows. In iteration $t=0$, each node $i$ computes its local gradient $\nabla f_i(x^0)$, and ``would like'' to communicate the vector ${\color{red} \gamma \nabla f_i(x^0)}$ to the master, which is supposed to perform an aggregation of these vectors via averaging, and perform the gradient-type step $$x^1 = x^0 - \frac{1}{n} \sum_{i=1}^n {\color{red} \gamma \nabla f_i(x^0)}.$$ 
 This, in fact, is one step of gradient descent. However, the vector ${\color{red} \gamma \nabla f_i(x^0)}$ is hard to communicate. For this reason, this vector needs to be compressed, and the compressed version needs to be communicated instead. This would lead to the iteration
 $$x^1 = x^0 - \frac{1}{n} \sum_{i=1}^n {\color{blue} w_i^0}, \qquad \text{where} \qquad {\color{blue} w_i^0} = \cC( {\color{red}\gamma \nabla f_i(x^0)}),$$
which is a variant\footnote{This method {\em is}  \algname{DCGD} if $\cC$ is positively homogeneous, i.e., of $\cC(\gamma g) = \gamma \cC(g)$ for every $\gamma>0$ and $g\in \R^d$. However, even without positive homogeneity, this variant has the same theoretical properties as standard \algname{DCGD}.} of distributed \algname{CGD} (\algname{DCGD}). 

However, it is well known that \algname{DCGD}  may diverge. The key idea of error feedback is to compute the {\em \color{green}error}  $${\color{green}e_i^{1}} = {\color{red}\gamma  \nabla f_i(x^0)} -  \cC( {\color{red}\gamma \nabla f_i(x^0)}) =  {\color{red}\gamma  \nabla f_i(x^0)} - {\color{blue}w_i^0} ,$$
which is the difference between  the {\color{red} message $\gamma  \nabla f_i(x^0)$ we {\em want} to communicate}, and  the {\color{blue}compressed message $w_i^0$ we {\em actually} communicate.} This error is then {\em added} to the message $\color{red}\gamma  \nabla f_i(x^1)$ we would normally want to communicate in the {\em next} iteration, providing feedback/compensation for the error incurred. That is, in the next iteration, node $i$ communicates the compressed vector 
$${\color{blue}w_i^{1}} = \cC({\color{green}e^{1}_i} + {\color{red}\gamma \nabla f_i(x^{1}))}$$
instead. Note that since in iteration $1$ we wanted to communicate the vector ${\color{green}e^{1}_i} + {\color{red}\gamma \nabla f_i(x^{1})}$, the error in the next iteration becomes
$${\color{green}e_i^{2}} ={\color{green}e^{1}_i} +  {\color{red}\gamma  \nabla f_i(x^1)} - \cC({\color{green}e^{1}_i} +  {\color{red}\gamma  \nabla f_i(x^1)}) = {\color{green}e^{1}_i} +  {\color{red}\gamma  \nabla f_i(x^1)} - {\color{blue}w_i^1}.$$

This process is repeated, leading to Algorithm~\ref{alg:EF-original}.

			\subsection{Restricted equivalence of \algname{\normalsize EF} and \algname{\normalsize EF21}}
			
We now show that under certain restrictive conditions on the compressor $\cC$, which are not met for compressors used in practice,  \algname{EF} and \algname{EF21} are identical methods.
	\begin{theorem}\label{thm:equivalence}
		Assume that $\cC$ is deterministic, positively homogeneous and additive. Then \algname{EF} (Algorithm~\ref{alg:EF-original}; see appendix) and \algname{EF21} (Algorithm~\ref{alg:EF21})  produce the same sequences of iterates $\{x^t\}_{t\geq 0}$.
	\end{theorem}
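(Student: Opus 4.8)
The plan is to show that the two algorithms, when unrolled, produce iterates governed by the same recursion, by identifying the quantity $g_i^t$ from \algname{EF21} with the quantity $\frac{1}{\gamma}w_i^t$ from \algname{EF} and showing these identifications propagate through the update rules. The key structural facts to exploit are the three hypotheses: positive homogeneity ($\cC(\gamma v) = \gamma \cC(v)$ for $\gamma>0$), additivity ($\cC(u+v) = \cC(u)+\cC(v)$), and determinism (so no expectations are needed and all identities are exact). Because both methods share the same starting point $x^0$, it suffices to prove by induction on $t$ that $x^t$ agrees across the two methods and that $w_i^t = \gamma g_i^t$ for every node $i$, where $w_i^t$ is the \algname{EF} iterate and $g_i^t$ is the \algname{EF21} iterate.

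First I would set up the base case: in \algname{EF}, $w_i^0 = \cC(\gamma\nabla f_i(x^0)) = \gamma\cC(\nabla f_i(x^0)) = \gamma g_i^0$ by positive homogeneity, and $e_i^0 = 0$. Then for the inductive step, I would first establish a closed form for the \algname{EF} error sequence. From the update $e_i^{t+1} = e_i^t + \gamma\nabla f_i(x^t) - w_i^t$, one sees by telescoping (using $e_i^0=0$) that $e_i^{t+1} = \sum_{s=0}^{t}\big(\gamma\nabla f_i(x^s) - w_i^s\big)$; equivalently, and more usefully, the single-step identity $e_i^{t+1} + \gamma\nabla f_i(x^{t+1})$ — the vector that \algname{EF} compresses at round $t+1$ — equals $\gamma\nabla f_i(x^{t+1}) + e_i^t + \gamma\nabla f_i(x^t) - w_i^t$. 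The goal is to show this compresses, under $\cC$, to $w_i^{t+1} = \gamma g_i^{t+1} = \gamma\big(g_i^t + \cC(\nabla f_i(x^{t+1}) - g_i^t)\big)$.

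The heart of the argument is the following manipulation. Using additivity of $\cC$ repeatedly, I claim $\cC(e_i^{t+1} + \gamma\nabla f_i(x^{t+1})) = \cC(e_i^t + \gamma\nabla f_i(x^t) - w_i^t) + \cC(\gamma\nabla f_i(x^{t+1}))$; but the first summand, by additivity again, is $\cC(e_i^t + \gamma\nabla f_i(x^t)) - \cC(w_i^t) = w_i^t - w_i^t = 0$, where I used that $w_i^t = \cC(e_i^t + \gamma\nabla f_i(x^t))$ is itself in the image of $\cC$ and that additivity forces $\cC(\cC(u)) = \cC(u)$ on such vectors (this needs a short check: apply additivity to $u = \cC(u) + (u - \cC(u))$... actually the cleanest route is to carry the error in ``compressed coordinates'' and avoid needing idempotence — I would instead substitute $e_i^t = \gamma g_i^t - \cC(\gamma g_i^t + \text{stuff})$... ). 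Let me restate: the robust version is to prove the invariant $e_i^t + \gamma\nabla f_i(x^t) = \gamma\nabla f_i(x^t) - \gamma g_i^{t-1} + (\text{image terms that } \cC \text{ annihilates via additivity})$, so that compressing yields $\gamma\cC(\nabla f_i(x^t) - g_i^{t-1})$ plus the already-compressed $\gamma g_i^{t-1}$, matching the \algname{EF21} update. Once $w_i^{t+1} = \gamma g_i^{t+1}$ is established, the master steps coincide: \algname{EF} computes $x^{t+1} = x^t - \frac1n\sum_i w_i^t = x^t - \gamma\cdot\frac1n\sum_i g_i^t$, which is exactly the \algname{EF21} master step, closing the induction.

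The main obstacle I anticipate is the bookkeeping in the inductive step: additivity of a compressor is a delicate hypothesis (Top-$k$ is emphatically \emph{not} additive, as the paper itself flags), and one must be careful that every application of additivity is to a genuine decomposition of the \emph{same} vector being compressed, not an illegitimate splitting. The safest implementation is to track, by induction, the exact identity $e_i^{t} = \gamma g_i^{t-1} - \gamma\, p_i^{t-1}$ where $p_i^{t-1}$ is whatever \algname{EF} actually sent divided by $\gamma$, showing these collapse; equivalently, unroll everything into a single telescoped sum and apply additivity once to the whole sum. I expect the positive-homogeneity hypothesis to be used only to strip the $\gamma$ out of the initial compression and out of $\cC(\gamma\nabla f_i(x^t))$, while additivity does the real work of making the accumulated error ``pass through'' $\cC$ transparently, and determinism ensures the identifications are exact rather than merely in expectation.
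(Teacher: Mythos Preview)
Your overall plan---prove by induction that $w_i^t = \gamma g_i^t$ for all $t$ and all $i$, with the base case handled by positive homogeneity---is exactly the paper's approach. The gap is in the inductive step, where your first decomposition goes astray and the subsequent attempts stay vague.

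Specifically, you split the argument of $\cC$ as $\bigl(e_i^t + \gamma\nabla f_i(x^t) - w_i^t\bigr) + \gamma\nabla f_i(x^{t+1})$ and then try to show the first piece compresses to zero. That would require $\cC(w_i^t) = w_i^t$, i.e.\ idempotence on the image of $\cC$, which does \emph{not} follow from additivity and positive homogeneity (any non-idempotent linear map is a counterexample). You notice this and back off, but the replacement sketch (``image terms that $\cC$ annihilates via additivity'') inherits the same problem: nothing is being annihilated.

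The clean split---which you are circling but never state---is the paper's: write
\[
e_i^{t+1} + \gamma\nabla f_i(x^{t+1}) \;=\; \underbrace{\bigl(e_i^t + \gamma\nabla f_i(x^t)\bigr)}_{\text{this is the \emph{input} to $\cC$ at round $t$}} \;+\; \bigl(\gamma\nabla f_i(x^{t+1}) - w_i^t\bigr),
\]
apply additivity once, and use that $\cC\bigl(e_i^t + \gamma\nabla f_i(x^t)\bigr) = w_i^t$ \emph{by the definition of $w_i^t$}. This gives $w_i^{t+1} = w_i^t + \cC\bigl(\gamma\nabla f_i(x^{t+1}) - w_i^t\bigr)$, and now positive homogeneity plus the inductive hypothesis $w_i^t = \gamma g_i^t$ finish the job. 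The point is not that $\cC$ kills anything, but that the first summand is \emph{already} a known $\cC$-value from the previous round; no idempotence is needed.
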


	\begin{proof}
		To prove this result, it suffices to show that $w_i^t = \gamma g_i^t $ for all $t \geq 0$.
		We perform this proof by induction. 
		
		{\em Base case ($t=0$):} Recall that $w_i^0 = \cC(\gamma \nabla f_i(x^0))$ and $g_i^0 = \cC(\nabla f_i(x^0))$. By positive homogeneity of $\cC$, we have $$w_i^0 = \cC(\gamma \nabla f_i(x^0)) = \gamma \cC( \nabla f_i(x^0)) = \gamma g_i^0.$$

		{\em Inductive step:} Assume that $w_i^t = \gamma g_i^t $ holds for  some $t \geq 0$. Note that in view of how \algname{EF} operates, we have
\[w_i^{t+1} = \cC\rb{e_i^{t+1} + \gamma \nabla f_i(x^{t+1})}
			= \cC \rb{e_i^t + \gamma \nabla f_i(x^{t}) - w_i^t + \gamma \nabla f_i(x^{t+1}) }.\]
Since we assume that $\cC$					is additive, and because $w_i^t=\cC(e_i^t + \gamma \nabla f_i(x^t))$, we can write
\begin{eqnarray*}w_i^{t+1}
			&=& \cC \rb{e_i^t + \gamma \nabla f_i(x^{t})}  + \cC \rb{ \gamma \nabla f_i(x^{t+1}) - w_i^t } \\
&=& w_i^t  + \cC \rb{ \gamma \nabla f_i(x^{t+1}) - w_i^t } 	.		
			\end{eqnarray*}
Finally, using positive homogeneity, our inductive hypothesis, and the way $g_i^t$ is updated in \algname{EF21}, we can write
\begin{eqnarray*}w_i^{t+1}
		 &=& \gamma \rb{\frac{1}{\gamma} w_i^t + \cC\rb{\nabla f_i(x^{t+1}) - \frac{1}{\gamma} w_i^t }}\notag\\
			&=& \gamma \rb{g_i^t+ \cC\rb{\nabla f_i(x^{t+1}) - g_i^t}}\notag\\
			&=&  \gamma \gitpo,		
			\end{eqnarray*}
 					which concludes our proof.
	\end{proof}

Note that while the Top-$k$ compressor is deterministic and positively homogeneous, it is not additive. Likewise, compressors arising via rescaling of unbiased compressors are randomized, and hence do not satisfy the first condition. Still, the above theorem sheds some (at least to us) unexpected light on the close connection between  \algname{EF} and our new variant, \algname{EF21}. This connection is also what justifies our naming decision: \algname{EF21} -- error feedback mechanism from the year 2021.			

	%%%%%%%%%%%%%%%%%%%%%%%%
	%%%%%%%%%%%%%%%%%%%%%%%%
\section{Experiments} \label{sec:experiments}
%%%%%%%%%%%%%%%%%%%%%%%%
%%%%%%%%%%%%%%%%%%%%%%%%

We first consider solving a logistic regression problem with a non-convex regularizer,  \begin{equation}\label{eq:log-loss-main}\squeeze f(x) = \frac{1}{n} \sum\limits_{i=1}^{n} \log \left(1+\exp \left(-y_{i}a_i^{\top} x\right)\right) + \lam \sum \limits_{j=1}^d \frac{x_j^2}{1 + x_j^2},\end{equation} where $a_{i}  \in \mathbb{R}^{d}, y_{i} \in\{-1,1\}$ are the training data, and $\lambda>0$ is the regularizer parameter. We used $\lambda =0.1$ in all experiments.

\subsection{Datasets, hardware and code}	The datasets were taken from LibSVM \citep{chang2011libsvm}, and were split into $n=20$ equal parts, each associated with one of $20$ clients. The last part, of size $N - 20\cdot\lfloor\nicefrac{N}{20}\rfloor$, was assigned to the last worker. That is, we consider the heterogeneous data distributed regime.  A summary can be found in Table~\ref{tbl:datasets_summary}.   The code was written in Python 3.8 and we used 3 different CPU cluster node types in all experiments (here and in the Appendix): 1) AMD EPYC 7702 64-Core; 2) Intel(R) Xeon(R) Gold 6148 CPU @ 2.40GHz; 3) Intel(R) Xeon(R) Gold 6248 CPU @ 2.50GHz.
	
	\begin{table}[h]
		\caption{Summary of the datasets and splitting of the data among clients.}
		\label{tbl:datasets_summary}
		\centering
		% \footnotesize
		\begin{tabular}{l l r r r}
			\toprule
			Dataset  & $n$ & $N$ (total \# of datapoints) & $d$ (\# of features)  & $N_i$ (\# of datapoints per client)\\
			\midrule
			\texttt{phishing} & 20  &11,055 & 68 & 552 \\ 			
			\texttt{mushrooms} & 20 & 8,120 & 112  & 406\\ 
			\texttt{a9a} & 20  &32,560 & 123  &1,628 \\ 
			\texttt{w8a} & 20  &49,749 & 300 &  2,487\\ 			
			\bottomrule
		\end{tabular}
	\end{table}

\subsection{Experiment 1: Stepsize tolerance}	In our first experiment (see Figure~\ref{fig:3plots}) we  test the robustness/tolerance of  \algname{EF}, \algname{EF21}, and \algname{EF21+} to large stepsizes, using  Top-$k$ with $k=1$ \citep{alistarh2017qsgd} as a canonical example of biased compressor $\cC$. 
 Note that while for all stepsize choices, \algname{EF} gets stuck at a certain accuracy level, \algname{EF21} and \algname{EF21+} do not suffer from this issue, and are hence able to work with  larger or even much larger stepsizes. 
 \begin{figure}[H]
 \centering
	\includegraphics[width=0.9\linewidth]{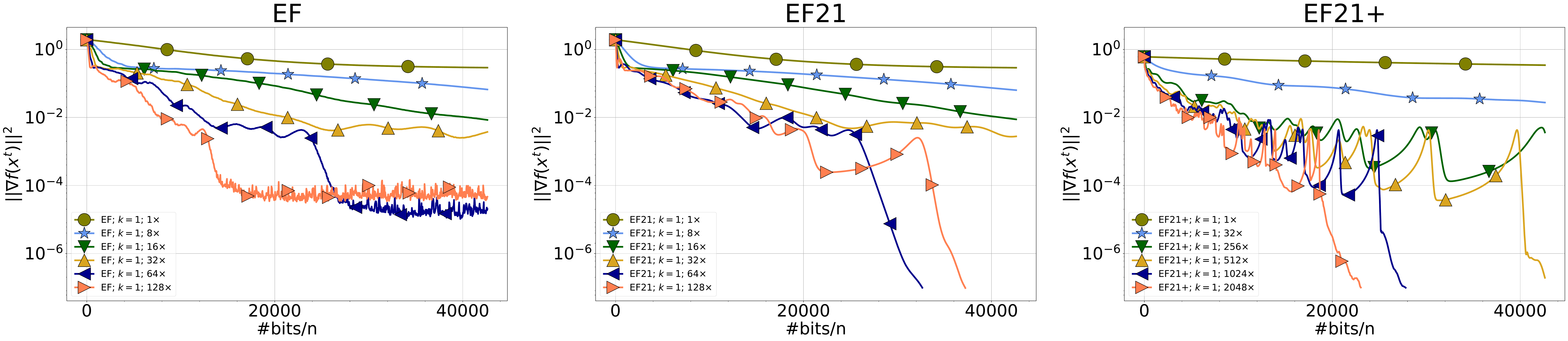} 
	\caption{The performance of \algname{EF}, \algname{EF21}, and \algname{EF21+}  with Top-$1$ compressor, and for increasing stepsizes. Representative dataset used: \texttt{a9a}. By $1\times, 2\times, 4\times$ (and so on) we indicate that the stepsize was set to a multiple of  the largest stepsize predicted by our theory. }\label{fig:3plots}
\end{figure}

\subsection{Experiment 2: Fine-tuning $k$ and the stepsizes}  We now showcase the superior communication efficiency  of \algname{EF21} and \algname{EF21+} over  classical \algname{EF}, again using the Top-$k$ compressor. However, this time we 
fine-tuned $k$ and stepsizes individually  for each methods (details are given in Appendix~\ref{sec:exp_extra}). For reference, we also included distributed gradient descent (\algname{GD}), which can be thought of as \algname{EF21} with $k=d$ (no compression), into the mix.  
%We also include \algname{GD} as a weak baseline for reference, which can be thought of as \algname{EF21} with $k=d$, illustrating that compression indeed helps.

\begin{figure}[H]
	\includegraphics[width=\linewidth]{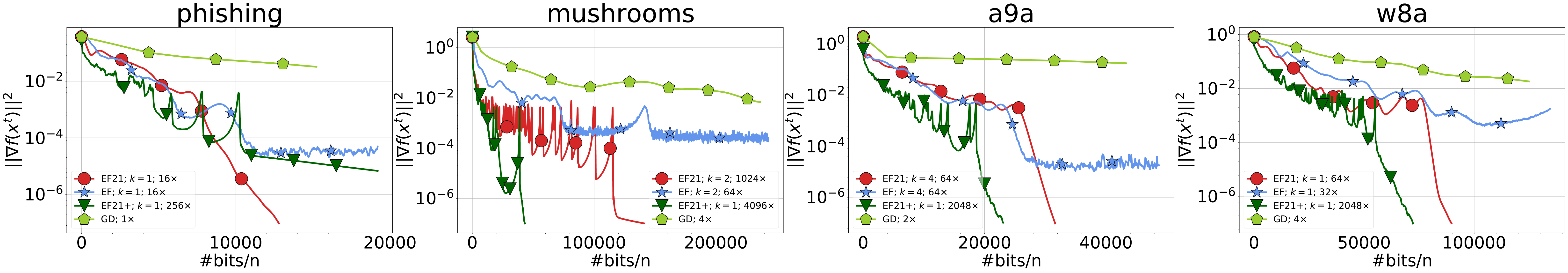} 
%%%%%%%
	\caption{Comparison of \algname{EF21}, \algname{EF21+}  to \algname{EF} with Top-$k$ for individually fine-tuned $k$ and fine-tuned stepsizes for all methods.  }\label{fig:1-in-1}
\end{figure}

In Figure~\ref{fig:1-in-1} we can see that in all cases, the proposed methods outperform \algname{EF} in terms of the \# of bits sent to the server per client on the horizontal axis (${\rm bits}/n$), and rapidly converge to the desired accuracy, whereas \algname{EF} is stuck at some accuracy levels in all cases. Moreover, in all experiments, classical \algname{GD} shows the worst convergence rate.  Note that \algname{EF21} tolerates larger, and \algname{EF21+} much larger, stepsizes than \algname{EF}.

\subsection{Further experiments} Further experiments, including deep learning experiments, are presented in Appendix~\ref{sec:exp_extra}.

	\bibliographystyle{plainnat}	
	\bibliography{bibliography.bib}

	\appendix
	
	\part*{Appendix}

	%%%%%%%%%%%%%%%%%%%%%%%%
	%%%%%%%%%%%%%%%%%%%%%%%%
	% \clearpage
	\section{Extra Experiments}\label{sec:exp_extra}
%%%%%%%%%%%%%%%%%%%%%%%%
%%%%%%%%%%%%%%%%%%%%%%%%

We now present several additional experiments. First, in Section~\ref{sec:exp:logistic} we comment on experiments with nonconvex logistic regression (see \eqref{eq:log-loss-main}), in Section~\ref{sec:exp-LS}  we perform experiments on least-squares (as an example of a function that is not strongly convex but satisfies the PL inequality),  and finally, in Section~\ref{sec:exp-DL} we conduct several deep learning experiments.

\subsection{Experiments with nonconvex logistic regression} \label{sec:exp:logistic}

\subsubsection{Experiment 1: Stepsize tolerance (extension)}\label{sec:stepsize_tollerance_extension}

This sequence of experiments extends the results presented in the corresponding paragraph of Section~\ref{sec:experiments}. 
For each dataset, we select the parameter $k$ (varied by rows) within the powers of $2$. For each plot, we vary the stepsize within the powers of $2$ starting from the largest theoretically accepted $\g$.

For example, for $k=2$ and \algname{EF21+} with \texttt{mushrooms} dataset we consider factors from the set $$\cb{1, 2, 4, 8, 16, 32, 64, 128, 256, 512, 1024, 2048}$$ and select the stepsize as a multiple of the upper bound stated in Theorem~\ref{thm:main-distrib}. 

Red diamond markers indicate the iterations at which \algname{EF21+} method uses mostly \algname{DCGD} steps. Precisely, the red diamond marker appears on the plot if the distortion  $\norm{\cC(s) - s}$ is smaller that $\norm{\cM(s) - s}$ for at least half of the workers, where $s=\nabla f_i(x^{t+1})$.
For more details, see figures below, where parameter $k$ is fixed within each row and each column corresponds to a particular method.

All of the figures above illustrate that \algname{EF21} and \algname{EF21+} tolerates much larger stepsizes, which makes them more efficient in practice. Moreover, in all experiments with large stepsizes ($16\times$--$128\times$), \algname{EF} starts oscillating, which hinders the convergence to the desired tolerance

\begin{figure}[H]
\centering
	\includegraphics[width=0.8\linewidth]{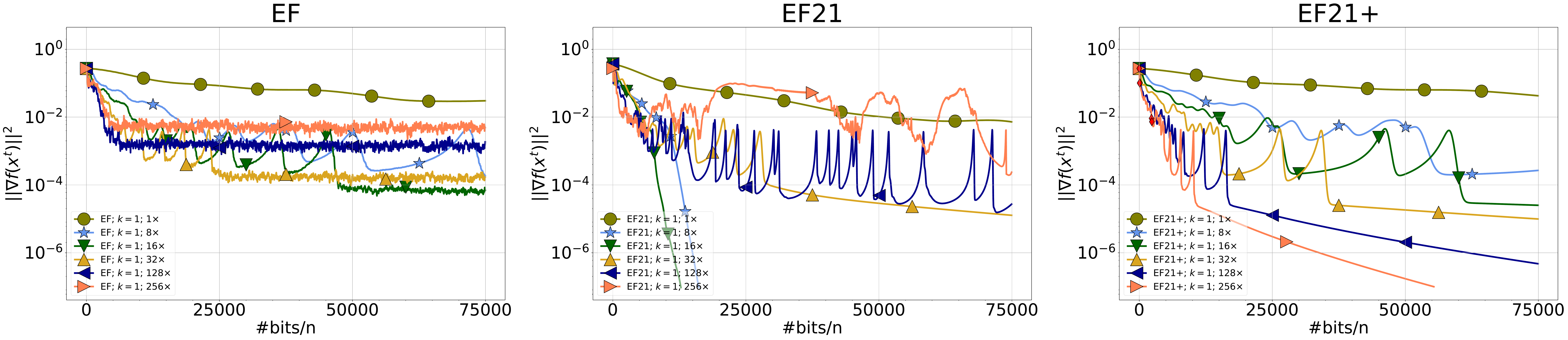} 
	\includegraphics[width=0.8\linewidth]{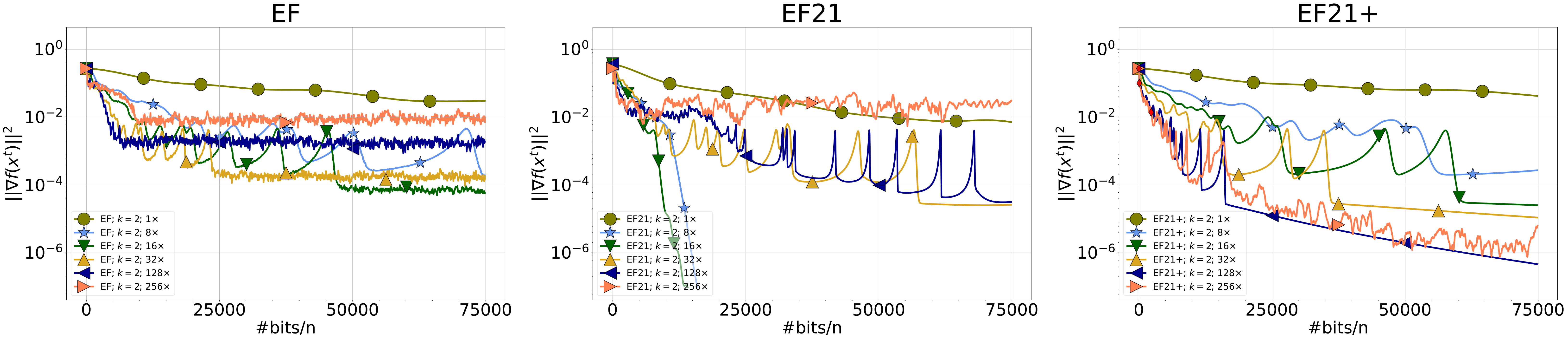}
	\includegraphics[width=0.8\linewidth]{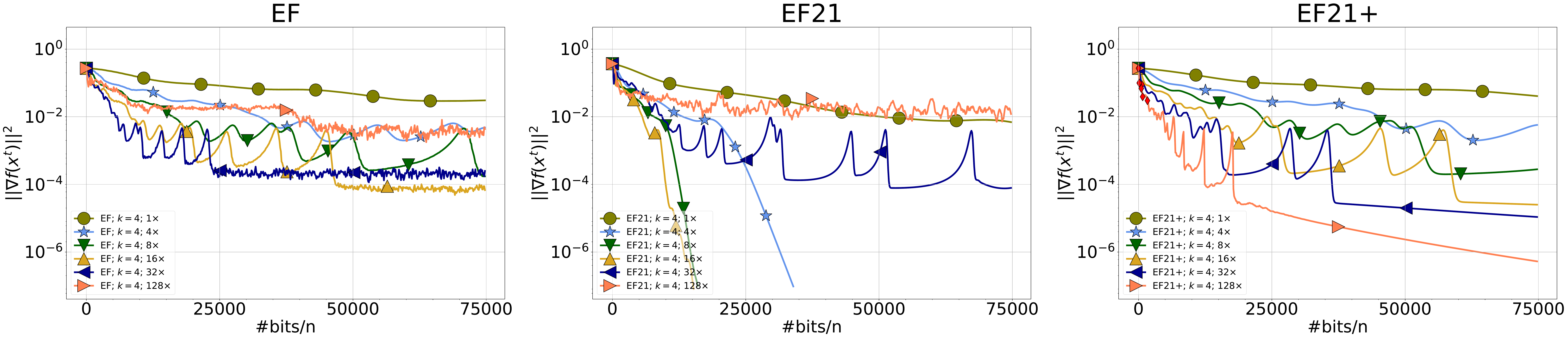}
	\includegraphics[width=0.8\linewidth]{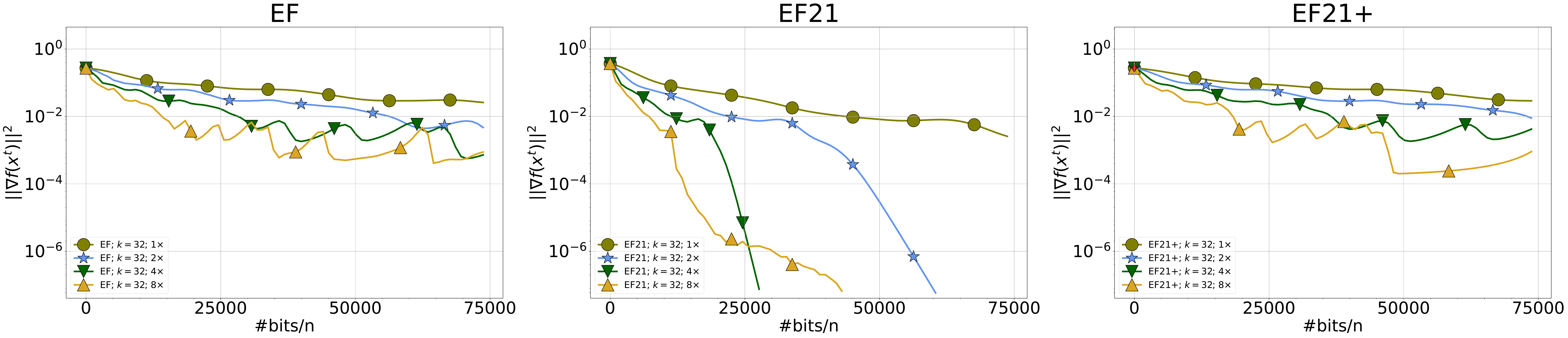}   
	%%%%%%%
	%\hline
	%%%%%%%
	
	\caption{The performance of \algname{EF}, \algname{EF21}, and \algname{EF21+}  with Top-$k$ compressor, and for increasing stepsizes. Each row corresponds to a different value of $k \in \cb{1, 2, 4, 32}$. The dataset used: \texttt{phishing}. By $1\times, 2\times, 4\times$ (and so on) we indicate that the stepsize was set to a multiple of  the largest stepsize predicted by our theory.} \label{fig:3plots_more_phishing}
\end{figure}

\begin{figure}[H]
\centering
	\includegraphics[width=0.8\linewidth]{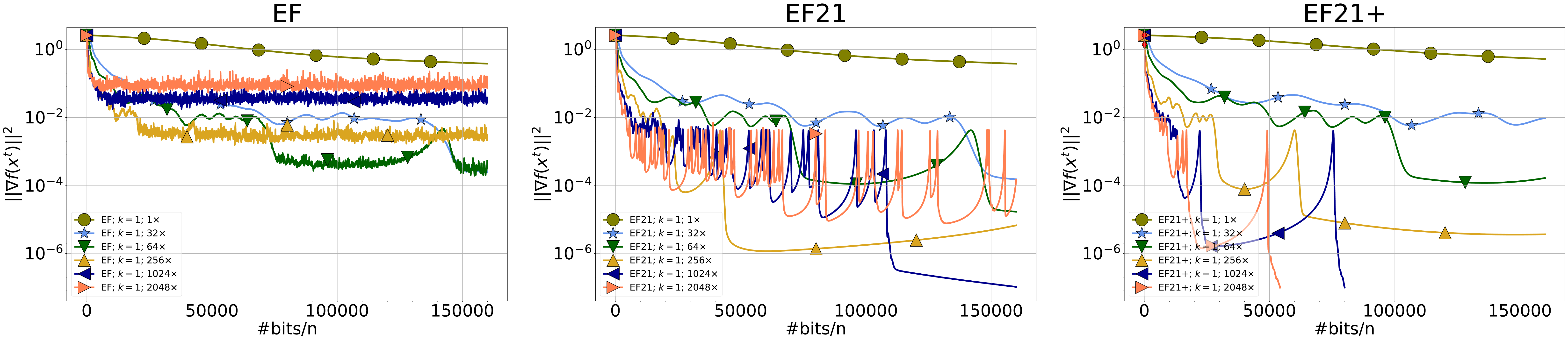}
	\includegraphics[width=0.8\linewidth]{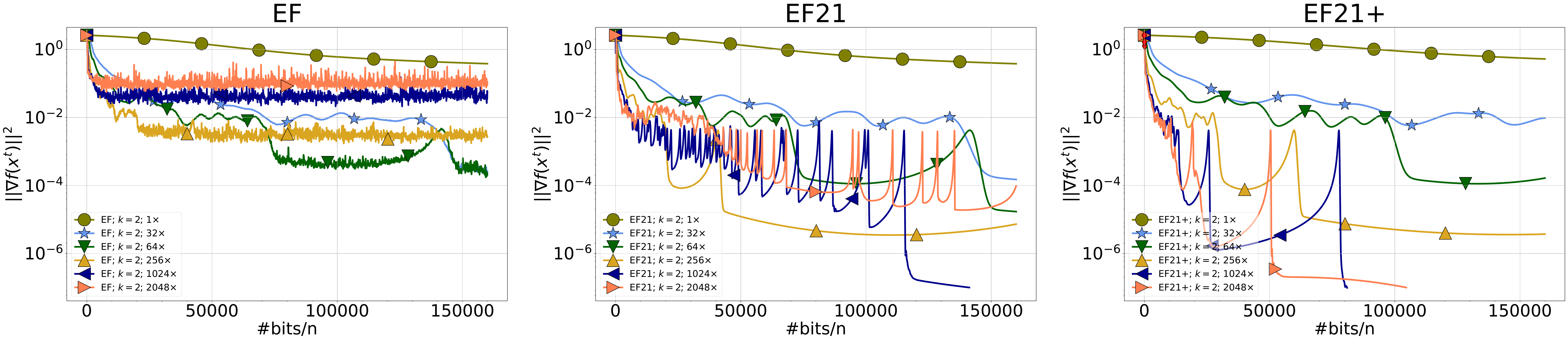}
	\includegraphics[width=0.8\linewidth]{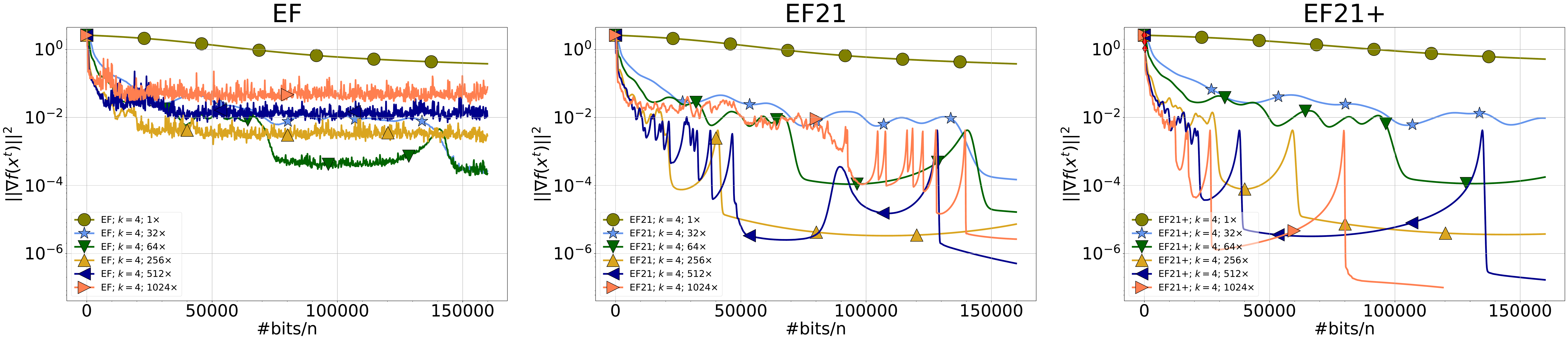}
	\includegraphics[width=0.8\linewidth]{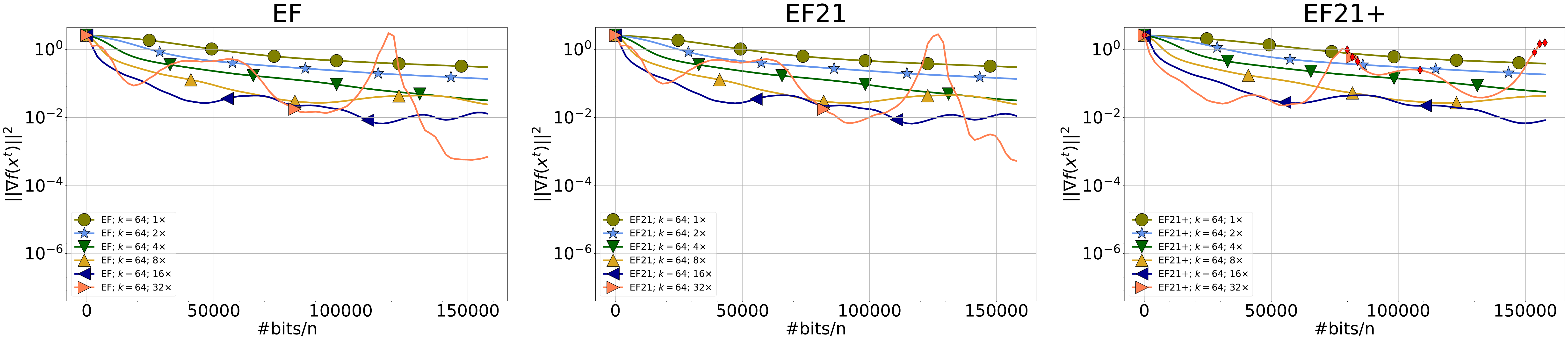}
	%%%%%%%
	%\hline
	%%%%%%%
	
	\caption{The performance of \algname{EF}, \algname{EF21}, and \algname{EF21+}  with Top-$k$ compressor, and for increasing stepsizes. Each row corresponds to a different value of $k \in \cb{1, 2, 4, 64}$. The dataset used: \texttt{mushrooms}. By $1\times, 2\times, 4\times$ (and so on) we indicate that the stepsize was set to a multiple of  the largest stepsize predicted by our theory.} \label{fig:3plots_more_mushrooms}
\end{figure}

\begin{figure}[H]
\centering
	\includegraphics[width=0.8\linewidth]{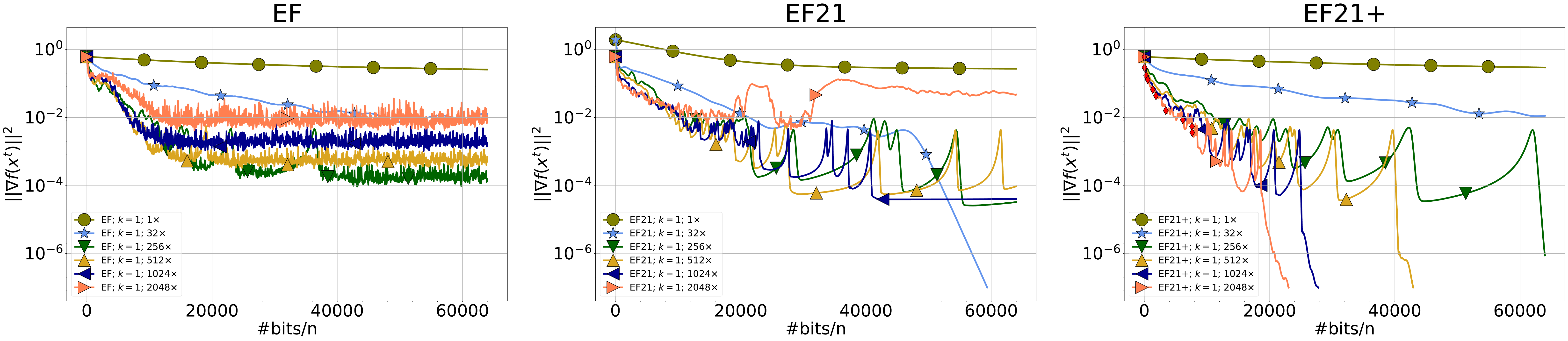}
	\includegraphics[width=0.8\linewidth]{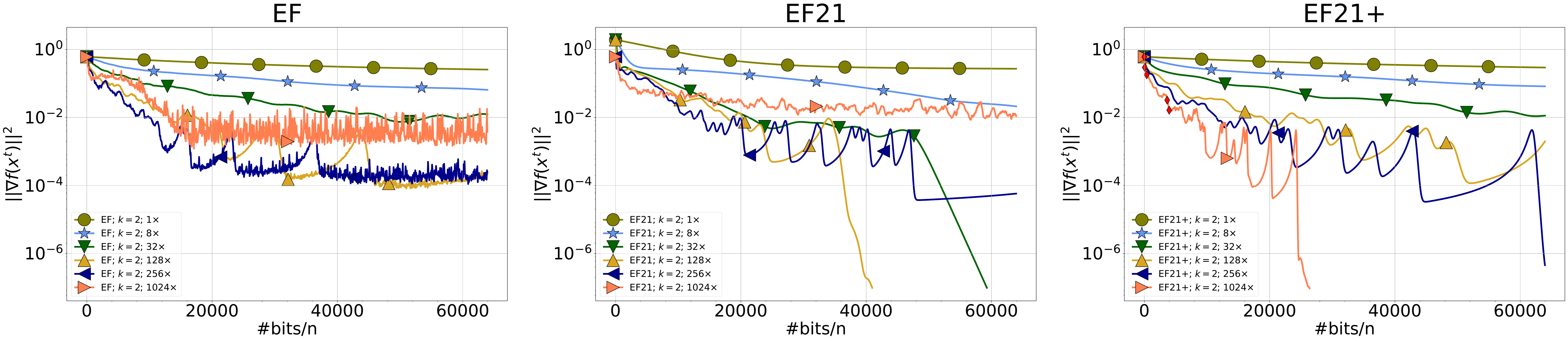}
	\includegraphics[width=0.8\linewidth]{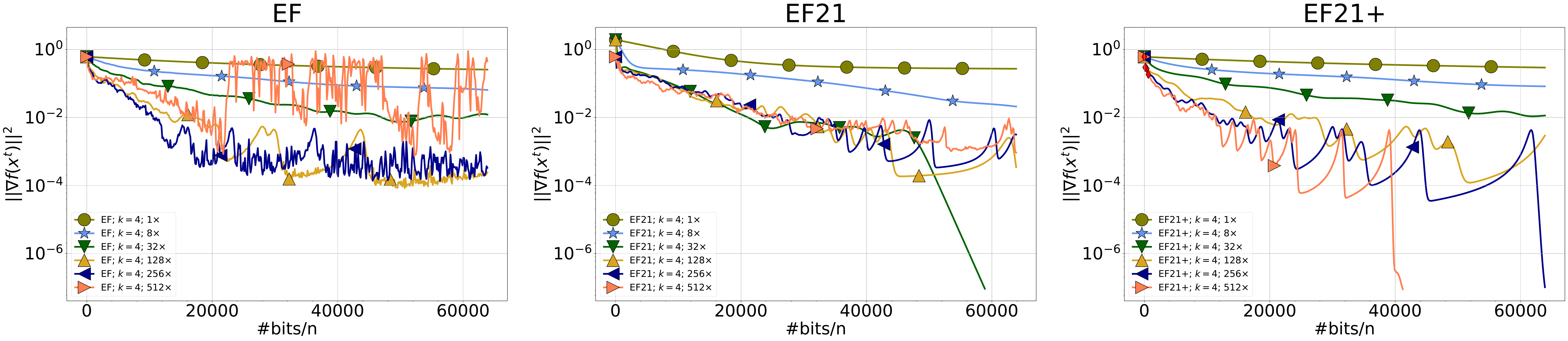}
	\includegraphics[width=0.8\linewidth]{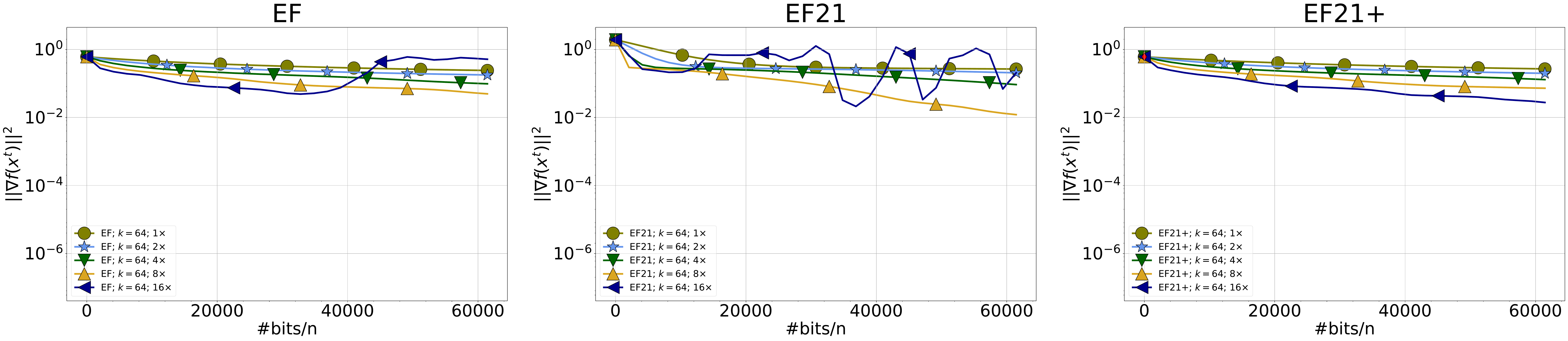}
	%%%%%%%
	%\hline
	%%%%%%%
	
	\caption{The performance of \algname{EF}, \algname{EF21}, and \algname{EF21+}  with Top-$k$ compressor, and for increasing stepsizes. Each row corresponds to a different value of $k \in \cb{1, 2, 4, 64}$. The dataset used: \texttt{a9a}. By $1\times, 2\times, 4\times$ (and so on) we indicate that the stepsize was set to a multiple of  the largest stepsize predicted by our theory.} \label{fig:3plots_more_a9a}
\end{figure}

\begin{figure}[H]
\centering
	\includegraphics[width=0.8\linewidth]{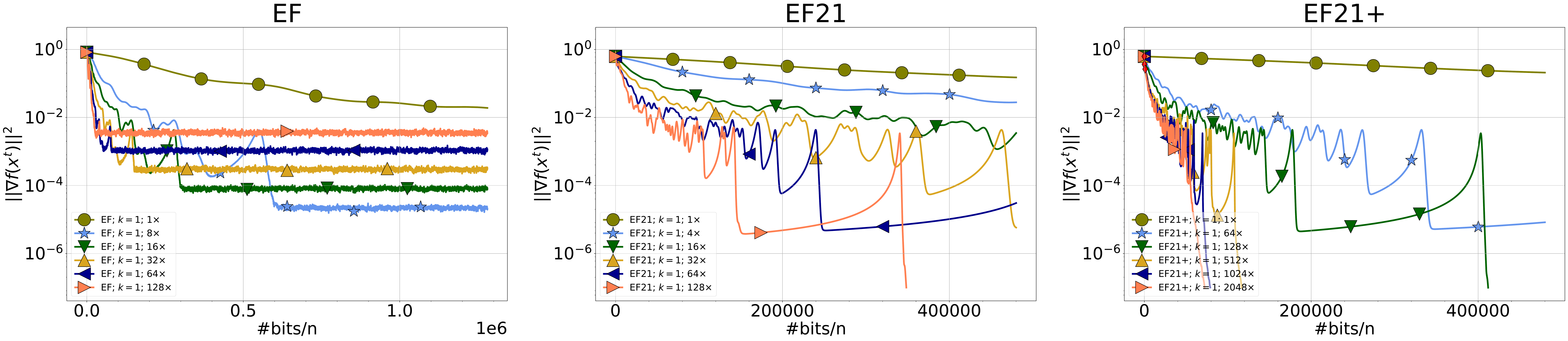}
	\includegraphics[width=0.8\linewidth]{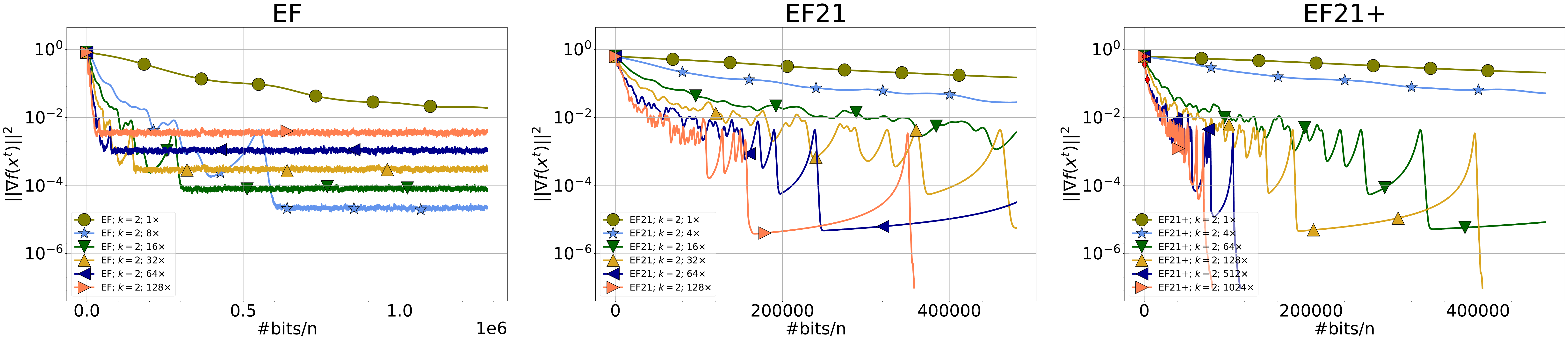}
	\includegraphics[width=0.8\linewidth]{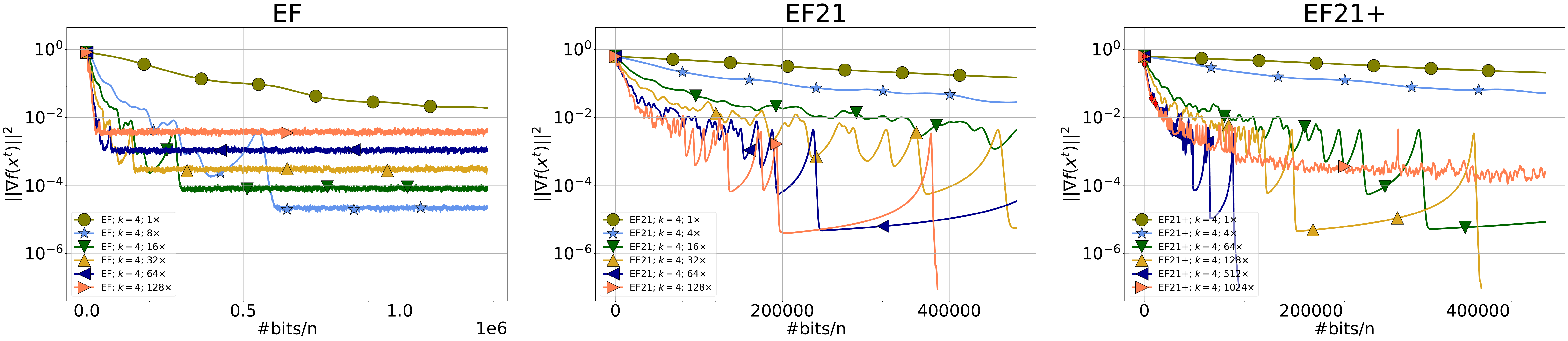}
	\includegraphics[width=0.8\linewidth]{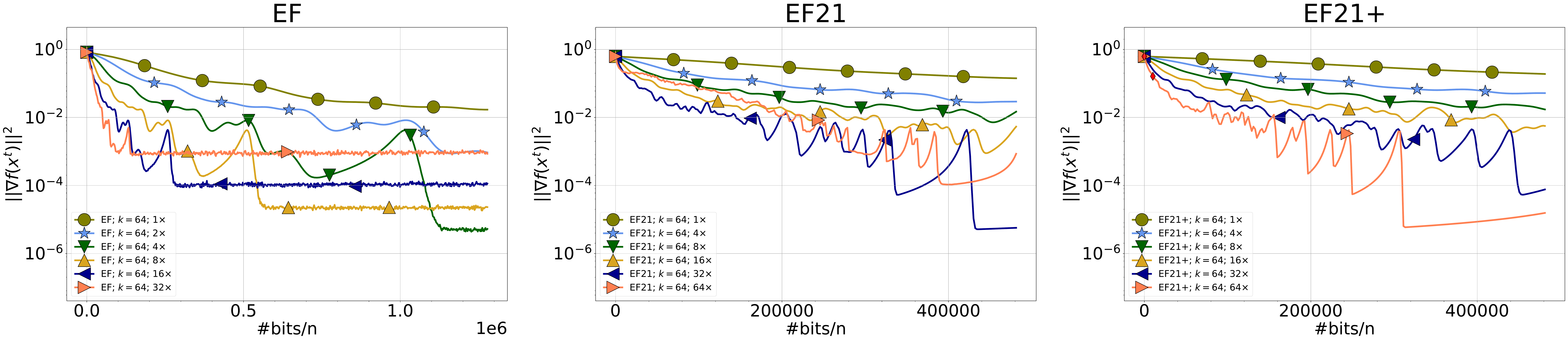}
	%%%%%%%
	%\hline
	%%%%%%%
	
	\caption{The performance of \algname{EF}, \algname{EF21}, and \algname{EF21+}  with Top-$k$ compressor, and for increasing stepsizes. Each row corresponds to a different value of $k \in \cb{1, 2, 4, 64}$. The dataset used: \texttt{w8a}. By $1\times, 2\times, 4\times$ (and so on) we indicate that the stepsize was set to a multiple of  the largest stepsize predicted by our theory.} \label{fig:3plots_more_w8a}
\end{figure}
%%%%%%%

%%%%%%%

\subsubsection{Experiment 2: Fine-tuning $k$ and the stepsizes (extension)}
%\begin{figure}[H]
%\includegraphics[width=\linewidth]{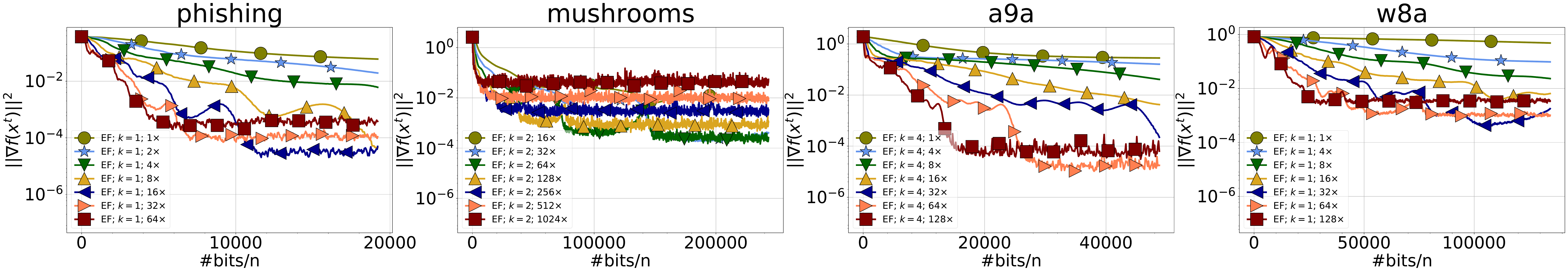} 
%\includegraphics[width=\linewidth]{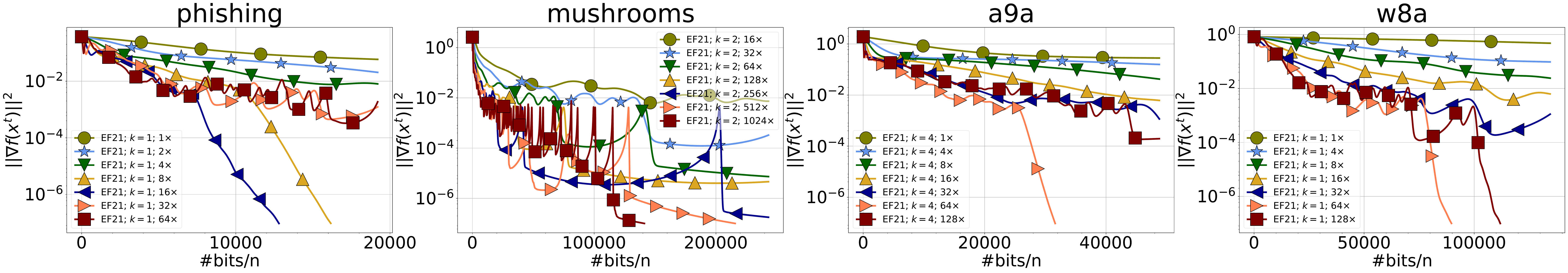} 
%\caption{{\bf Rows 1-2:} Stepsize fine-tuning for \algname{EF21} and \algname{EF}.  For each method and dataset, we choose the best $k$, and vary stepsize for this $k$. {\bf Rows 3-4:}}
%\end{figure}
This sequence of experiments extends the results presented in the similar paragraph of Section~\ref{sec:experiments}. In these plots we focus on the effect of the parameter $k$ on convergence. For each method, dataset, and $k$, the stepsize is fine-tuned
(based on the fine-tuning results from Section~\ref{sec:stepsize_tollerance_extension}). Note that the theoretical stepsize allowed by Theorem~\ref{thm:main-distrib}  increases by itself with the increase of  $k$.

\begin{figure}[H]
	\includegraphics[width=\linewidth]{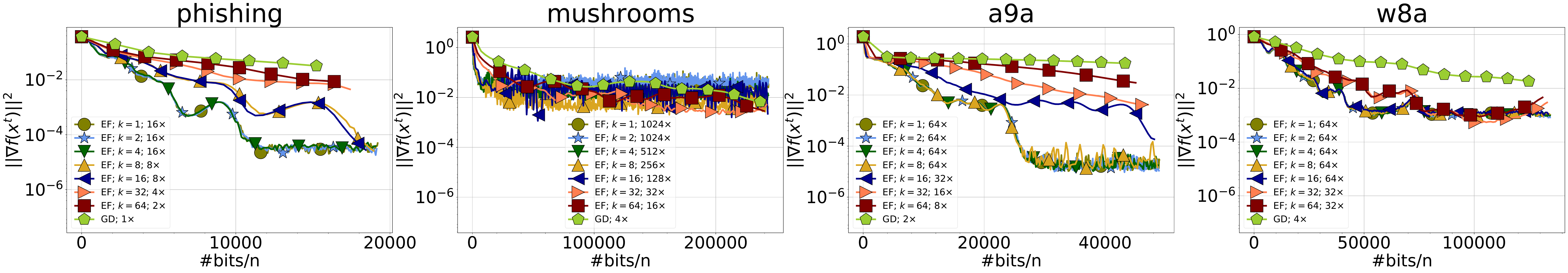} 
	\includegraphics[width=\linewidth]{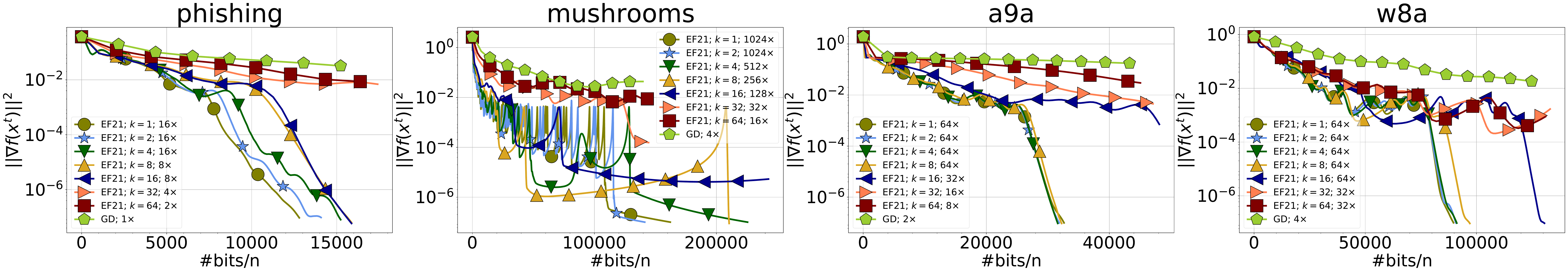} 
	%%%%%%%
	\caption{Effect of the parameter $k$ on convergence. For each method, dataset and $k$ the stepsize is fine-tuned. By $1\times, 2\times, 4\times$ (and so on) we indicate that the stepsize was set to a multiple of  the largest stepsize predicted by our theory.} \label{fig:4-in-1}
\end{figure}

%In Figure~\ref{fig:5-in-1} (Rows 1-2), for each $k$, the fine-tuned stepsizes are applied (e.g., $8\times$ means that we choose the stepsize which is $8$ times larger than the maximal value prescribed by Theorem~\ref{thm:main-distrib} for certain $k$). 
\begin{figure}[H]
	\includegraphics[width=\linewidth]{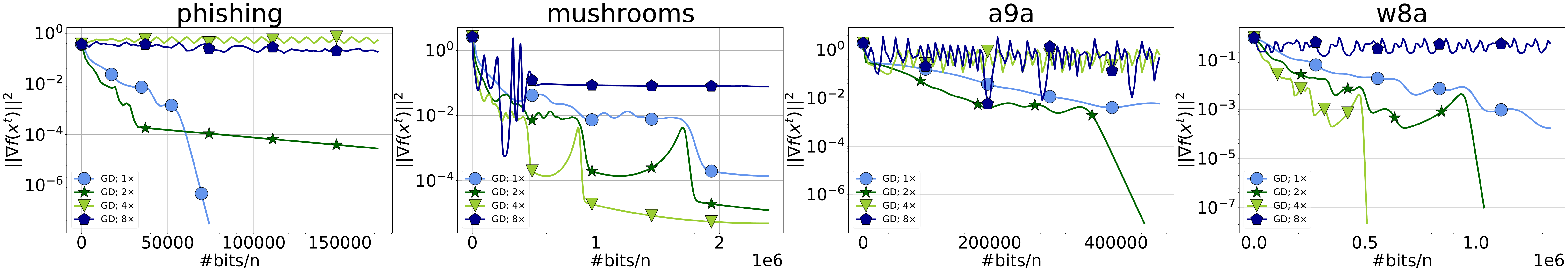} 
	%%%%%%%
	\caption{GD tuning.} \label{fig:GD_fine_tuning}
\end{figure}

We see that the best choice of $k$ relates to $1, 2$ or $4$, which confirms that both \algname{EF21} and \algname{EF} are more communication efficient compared to \algname{GD}.

%{\bf Robustness to even larger stepsizes.}
%Next, we are interested in how robust \algname{EF21} and \algname{EF} are to the stepsizes beyond the predicted theory. Thus, we fix the fine-tuned $k$ (from the previous experiment) for each method and dataset. Figure~\ref{fig:step_size_tuning} illustrates that \algname{EF21} tolerates much larger stepsizes, which makes it more efficient in practice. Moreover, in all experiments with large stepsizes ($16\times$--$128\times$), \algname{EF} starts oscillating, which hinders the convergence to the desired tolerance.

\subsection{Experiments with least squares} \label{sec:exp-LS}

In this section we will test on a function satisfying the PL condition (see Assumption \ref{ass:PL}). In particular, we consider the function
$$ f(x) = \frac{1}{n}\sum\limits_{i=1}^{n}(a_{i}^{\top}x  - b_i )^2 ,$$ where $a_{i}  \in \mathbb{R}^{d}, y_{i} \in\{-1,1\}$ are the training data.	We consider the same datasets as for the logistic regression problem.

\subsubsection{ Experiment 1: Stepsize tolerance}
In this set of experiments we test the robustness/tolerance of \algname{EF}, \algname{EF21}, and \algname{EF21+} to large stepsizes, using  Top-$k$ \citep{alistarh2017qsgd} as a canonical example of biased compressor $\cC$. 
For each plot, we vary the stepsize within the powers of $2$ starting from the largest theoretically accepted $\g$.
For example, for $k=2$ and \algname{EF21+} with \texttt{mushrooms} dataset we consider factors from the set $\cb{1, 4, 64, 256, 1024}$ and select the stepsize as a multiple of the upper bound stated in Theorem~\ref{thm:main-distrib}. Red diamond markers indicate the iterations at which \algname{EF21+} method uses mostly \algname{DCGD} steps. More precisely, the red diamond marker appears on the plot if the distortion  $\norm{\cC(s) - s}$ is smaller that $\norm{\cM(s) - s}$ for at least half of the workers, where $s=\nabla f_i(x^{t+1})$.
For more details, see Figures~\ref{fig:3plots_more_phishing_PL}--\ref{fig:3plots_more_w8a_PL}, where parameter $k$ is fixed within each row and each column correspond to a particular method.
\begin{figure}[H]
\centering
	\includegraphics[width=0.8\linewidth]{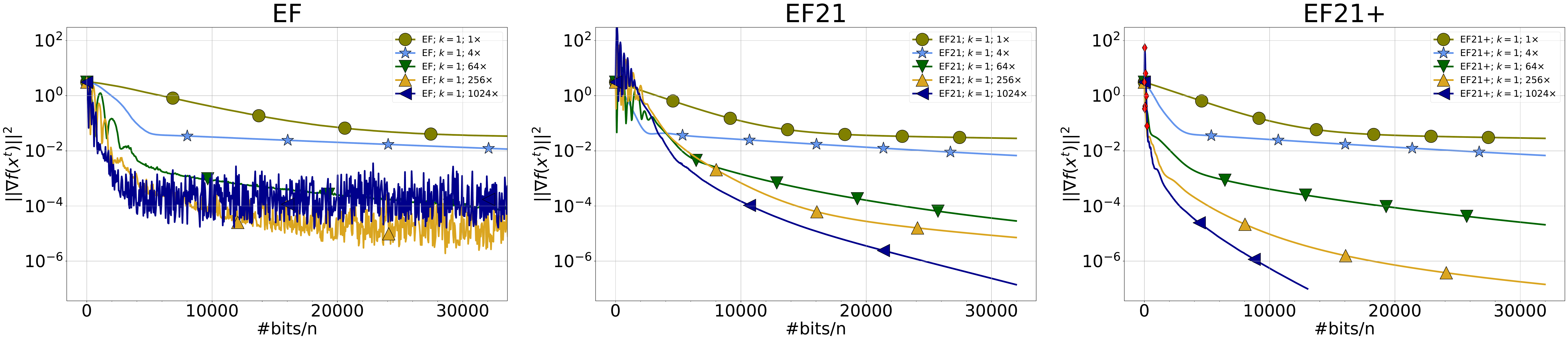}
	\includegraphics[width=0.8\linewidth]{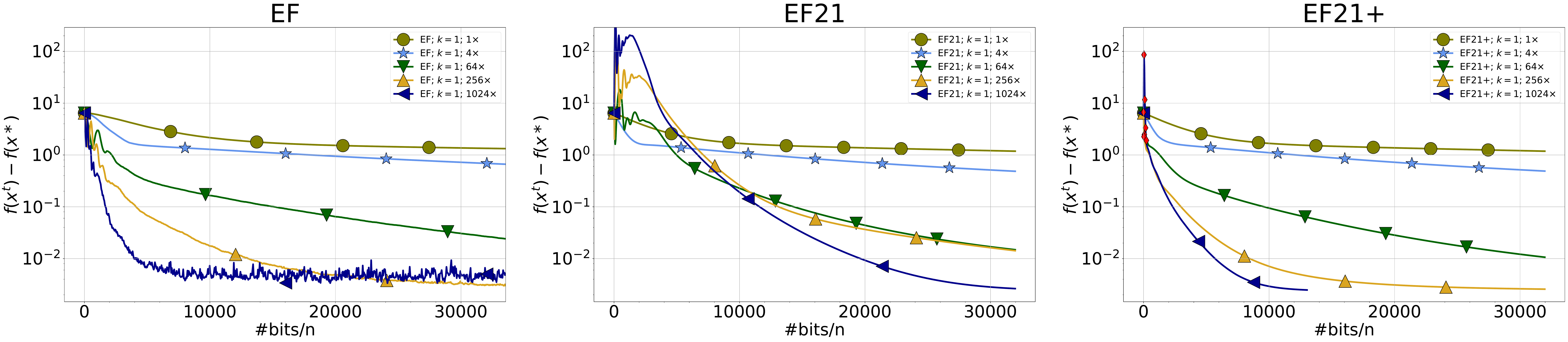}
	\includegraphics[width=0.8\linewidth]{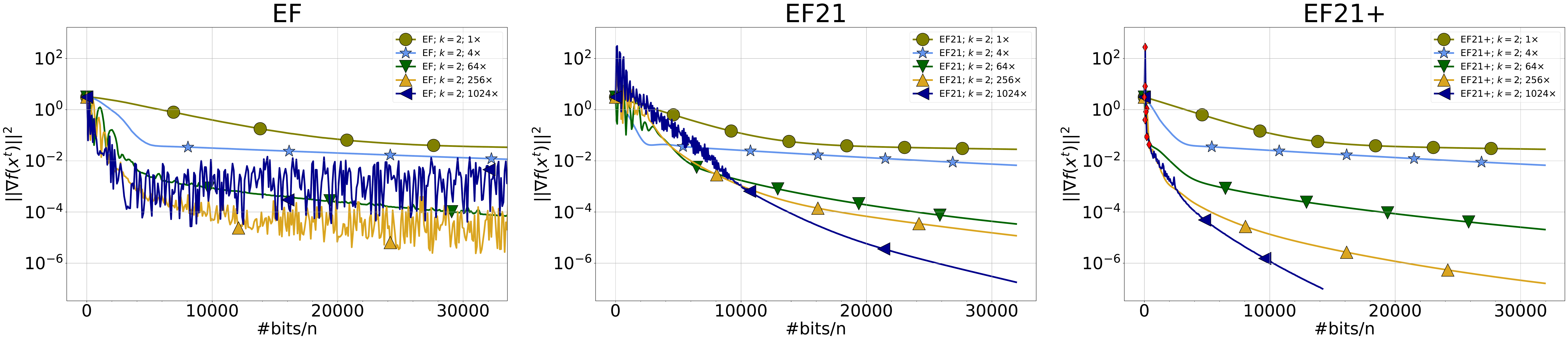}
	\includegraphics[width=0.8\linewidth]{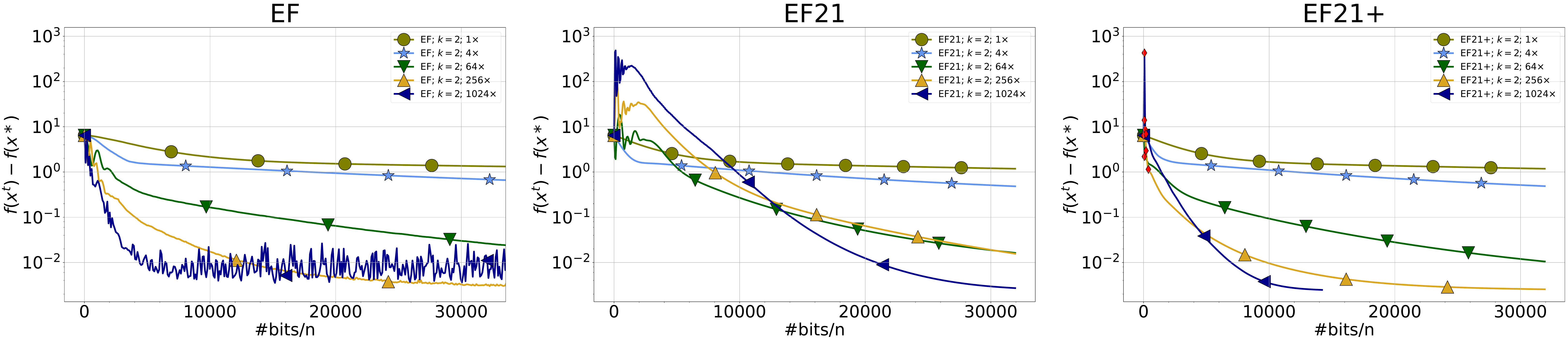}
	\includegraphics[width=0.8\linewidth]{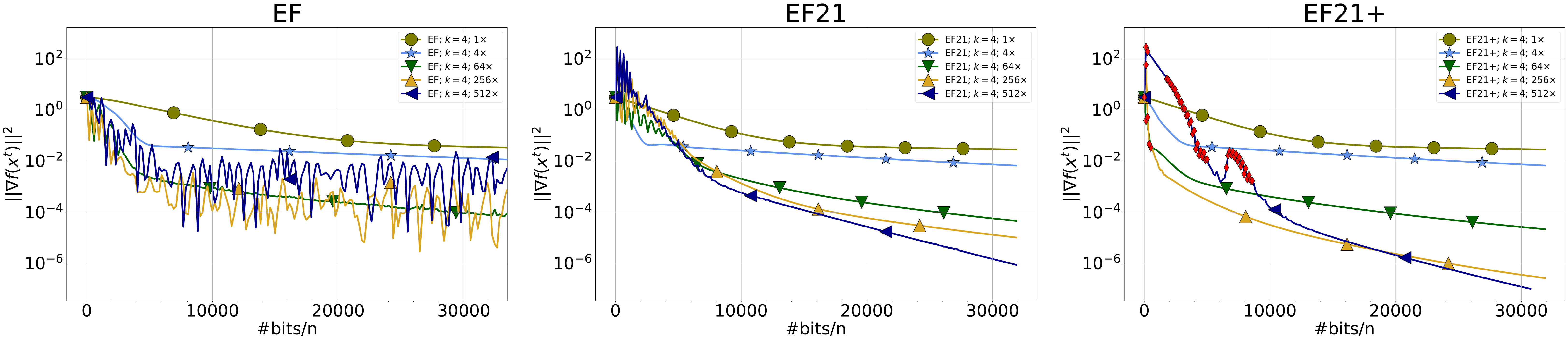}
	\includegraphics[width=0.8\linewidth]{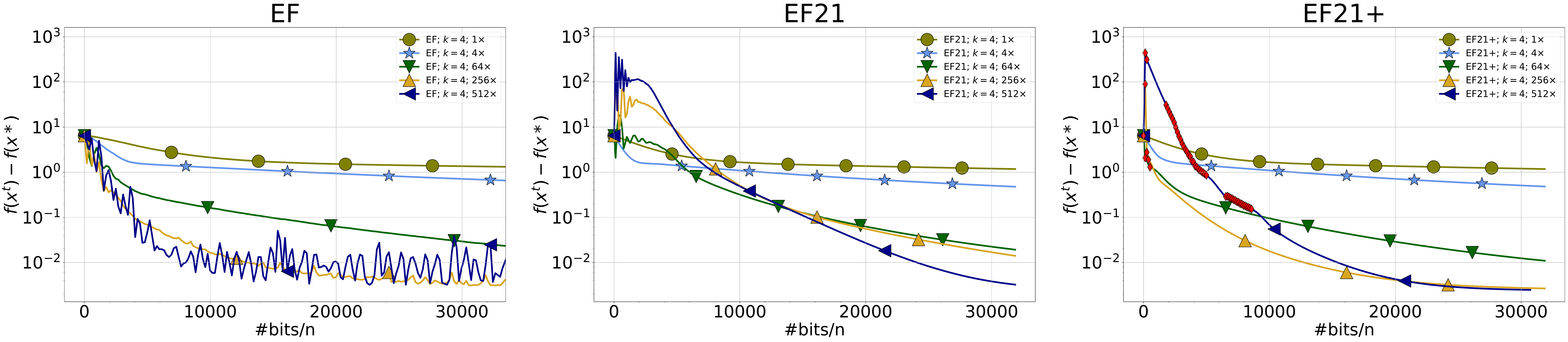}
	%%%%%%%
	%\hline
	%%%%%%%
	
	\caption{The performance of \algname{EF}, \algname{EF21}, and \algname{EF21+}  with Top-$k$ compressor, and for increasing stepsizes. The dataset used: \texttt{phishing}. By $1\times, 2\times, 4\times$ (and so on) we indicate that the stepsize was set to a multiple of  the largest stepsize predicted by our theory.} \label{fig:3plots_more_phishing_PL}
\end{figure}

\begin{figure}[H]
\centering
	\includegraphics[width=0.8\linewidth]{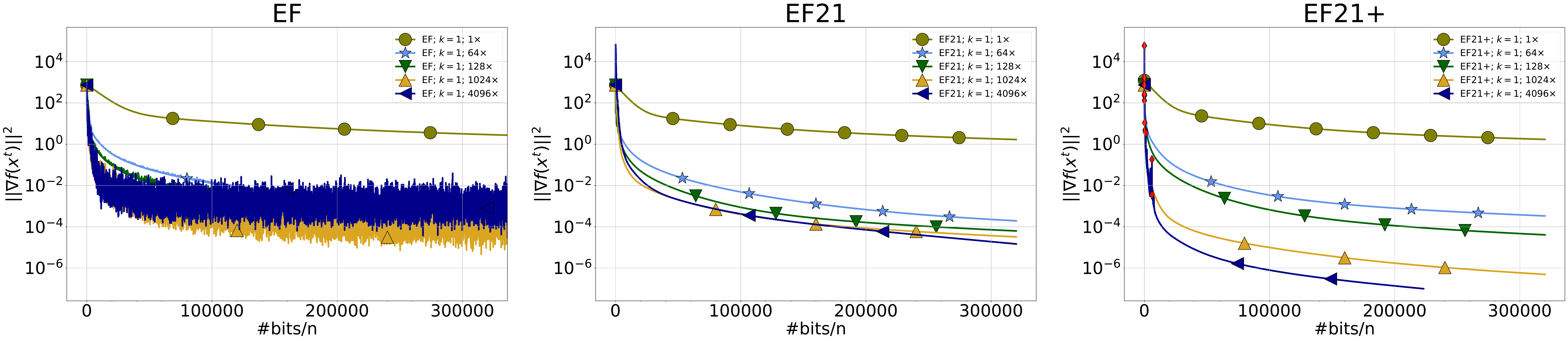}
	\includegraphics[width=0.8\linewidth]{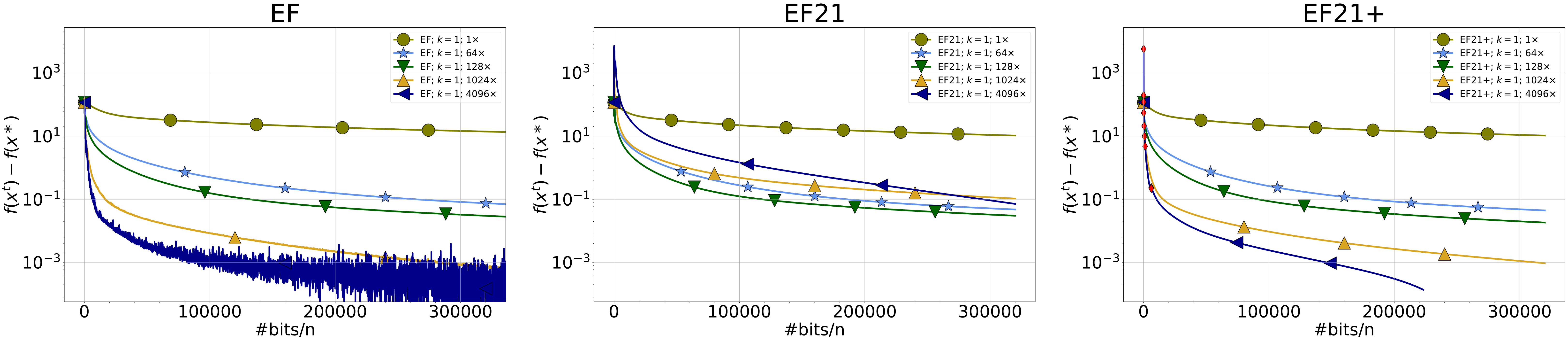}
	\includegraphics[width=0.8\linewidth]{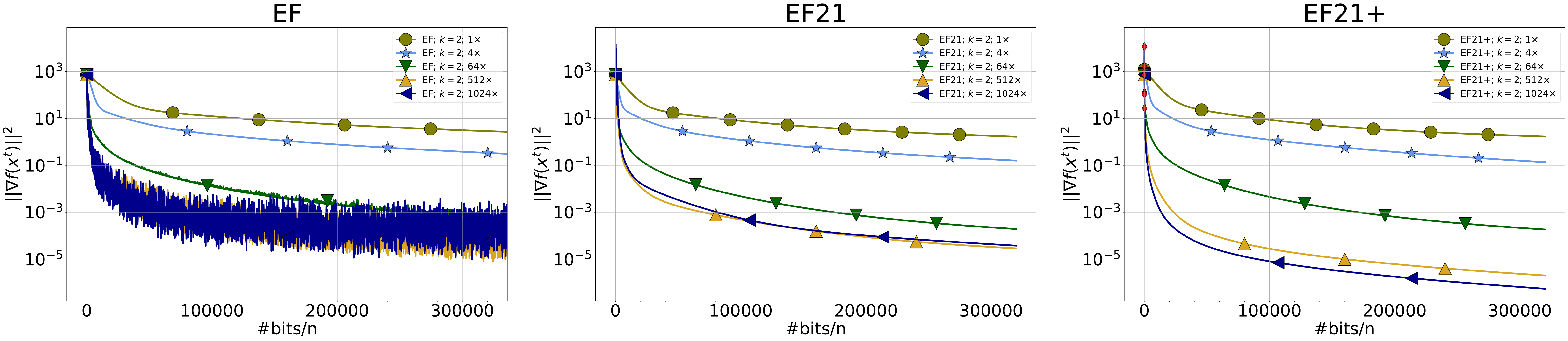}
	\includegraphics[width=0.8\linewidth]{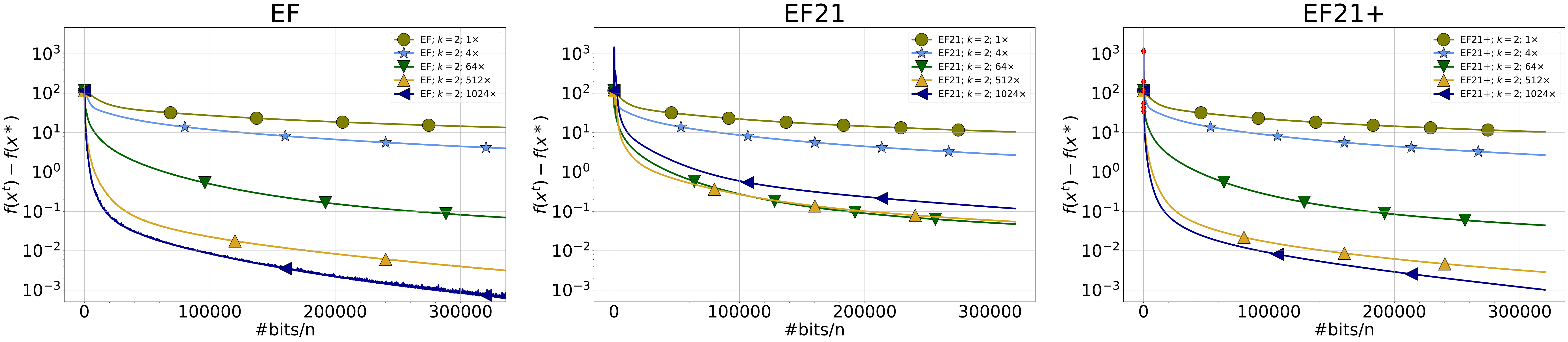}
	\includegraphics[width=0.8\linewidth]{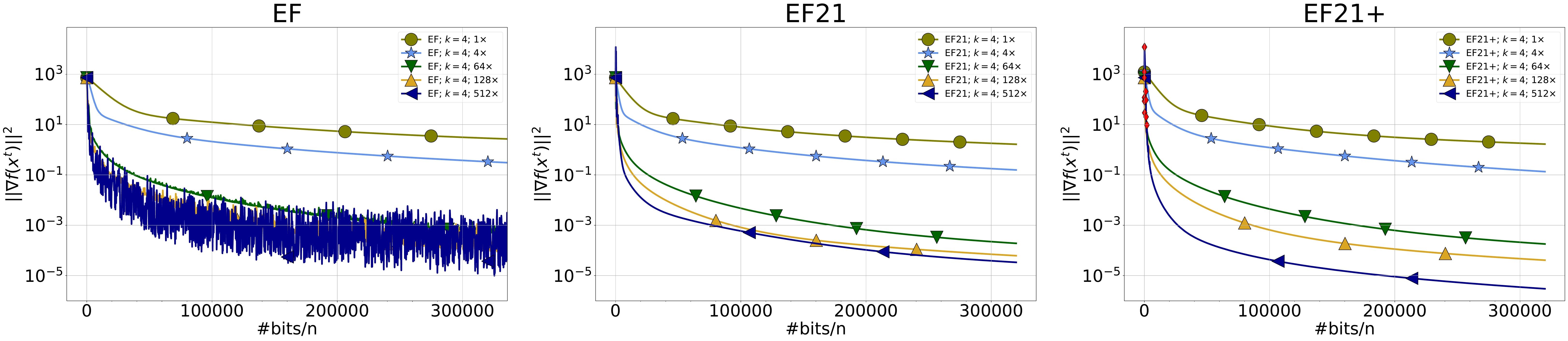}
	\includegraphics[width=0.8\linewidth]{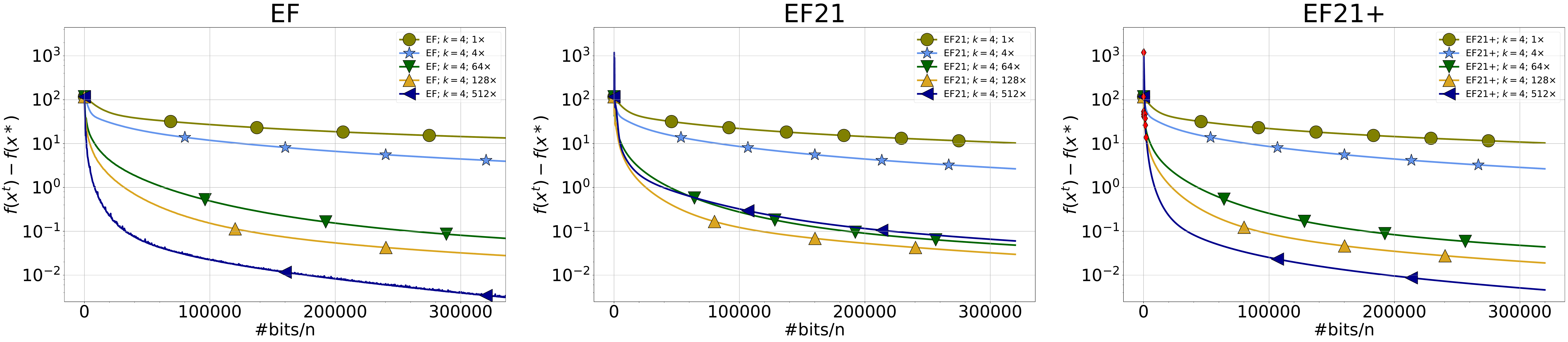}
	%%%%%%%
	%\hline
	%%%%%%%
	
	\caption{The performance of \algname{EF}, \algname{EF21}, and \algname{EF21+}  with Top-$k$ compressor, and for increasing stepsizes. The dataset used: \texttt{mushrooms}. By $1\times, 2\times, 4\times$ (and so on) we indicate that the stepsize was set to a multiple of  the largest stepsize predicted by our theory.} \label{fig:3plots_more_mushrooms_PL}
\end{figure}

\begin{figure}[H]
\centering
	\includegraphics[width=0.8\linewidth]{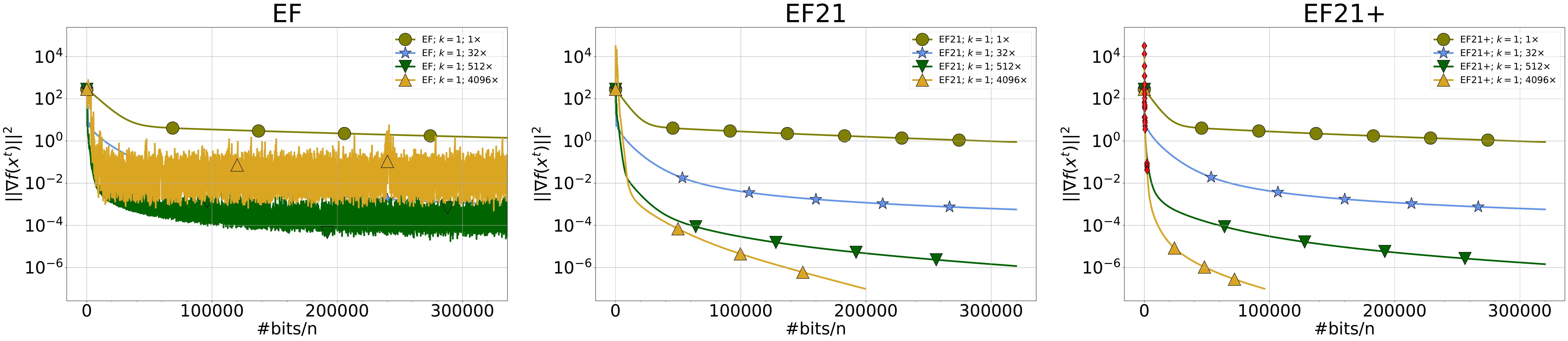}
	\includegraphics[width=0.8\linewidth]{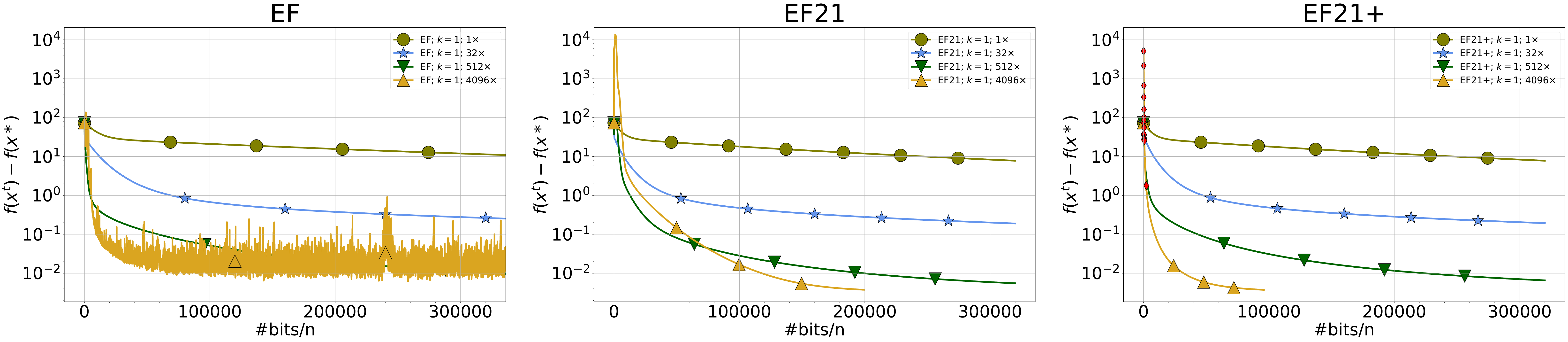}
	\includegraphics[width=0.8\linewidth]{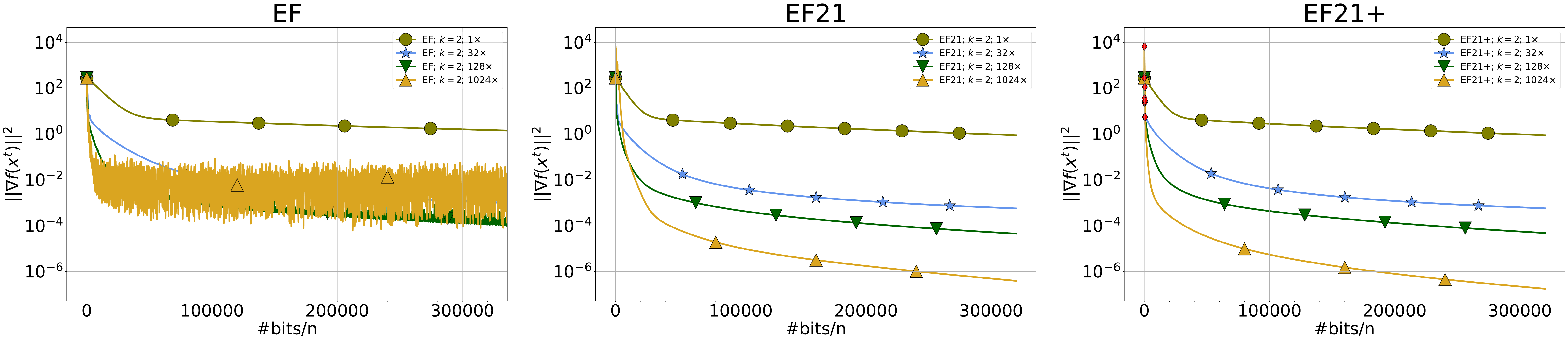}
	\includegraphics[width=0.8\linewidth]{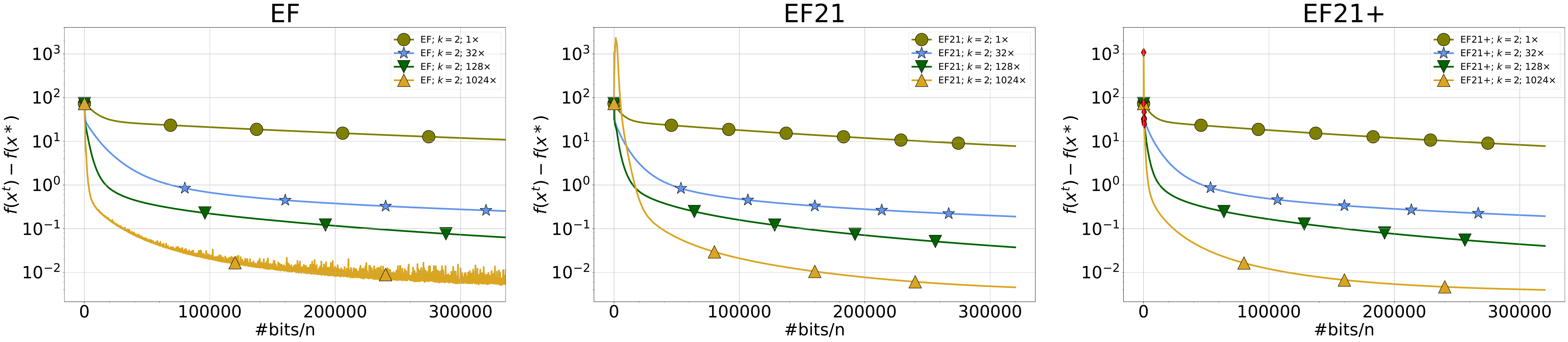}
	\includegraphics[width=0.8\linewidth]{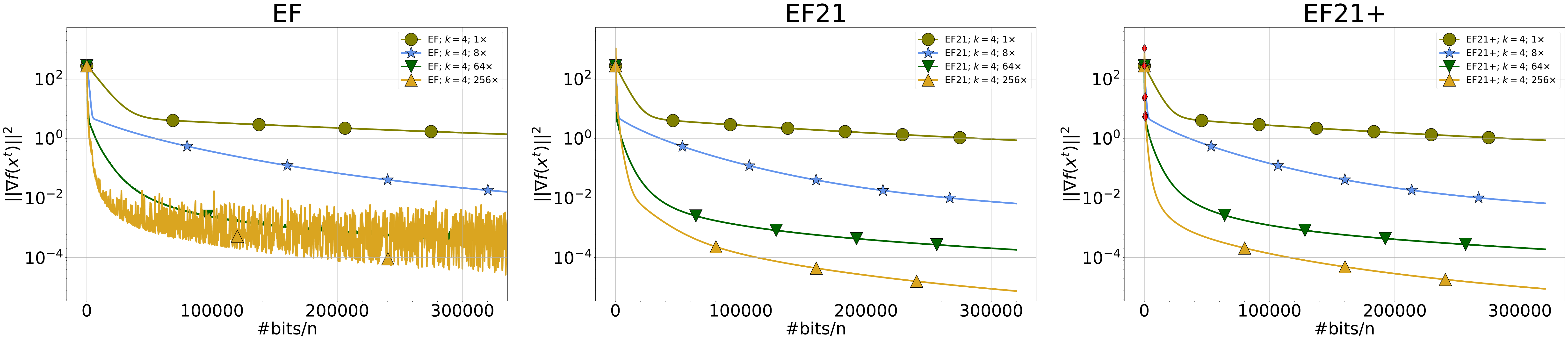}
	\includegraphics[width=0.8\linewidth]{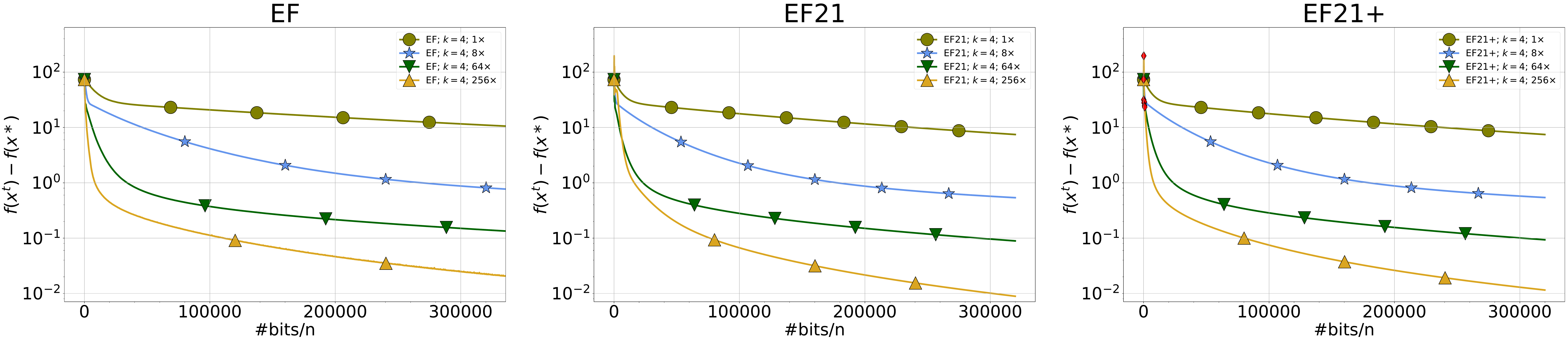}
	%%%%%%%
	%\hline
	%%%%%%%
	
	\caption{The performance of \algname{EF}, \algname{EF21}, and \algname{EF21+}  with Top-$k$ compressor, and for increasing stepsizes. The dataset used: \texttt{a9a}. By $1\times, 2\times, 4\times$ (and so on) we indicate that the stepsize was set to a multiple of  the largest stepsize predicted by our theory.} \label{fig:3plots_more_a9a_PL}
\end{figure}

\begin{figure}[H]
\centering
	\includegraphics[width=0.8\linewidth]{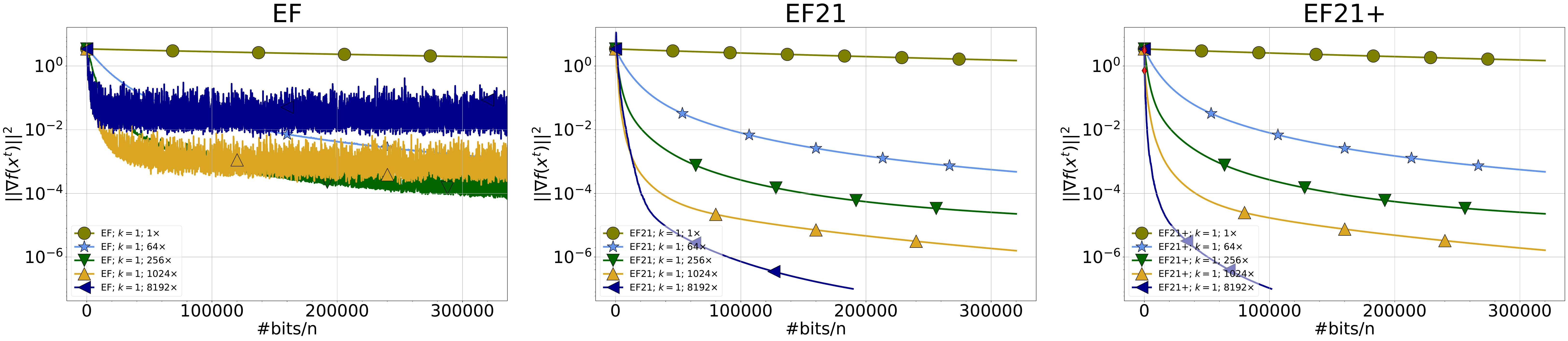}
	\includegraphics[width=0.8\linewidth]{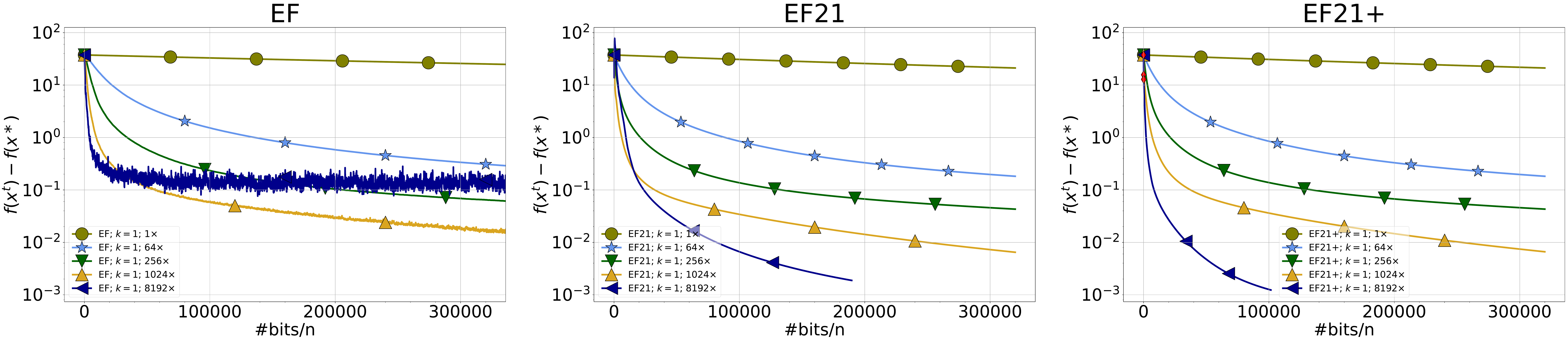}
	\includegraphics[width=0.8\linewidth]{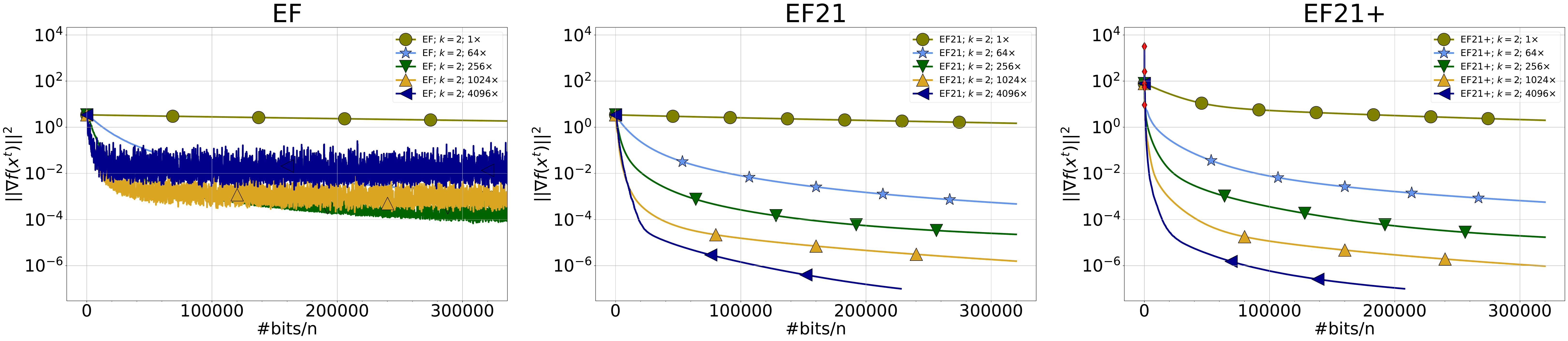}
	\includegraphics[width=0.8\linewidth]{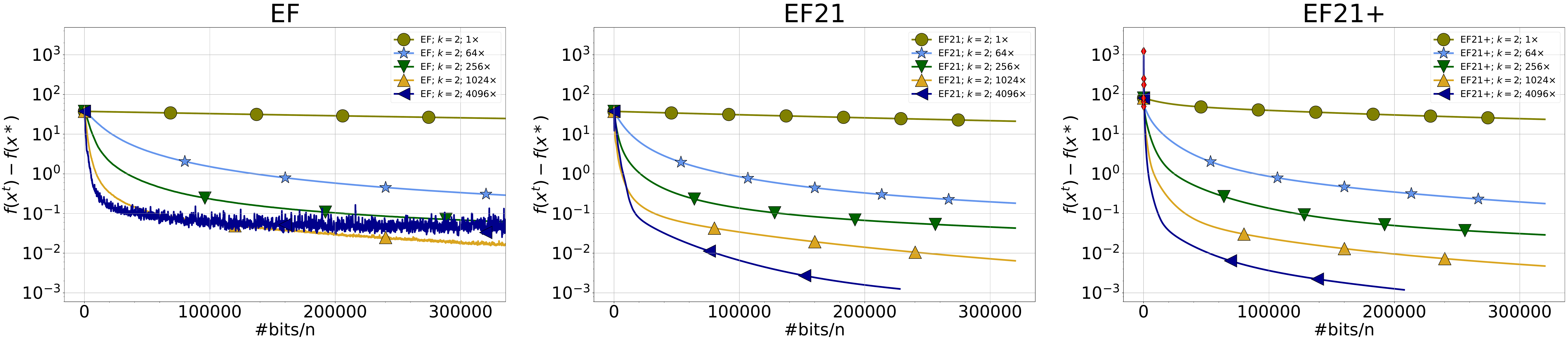}
	\includegraphics[width=0.8\linewidth]{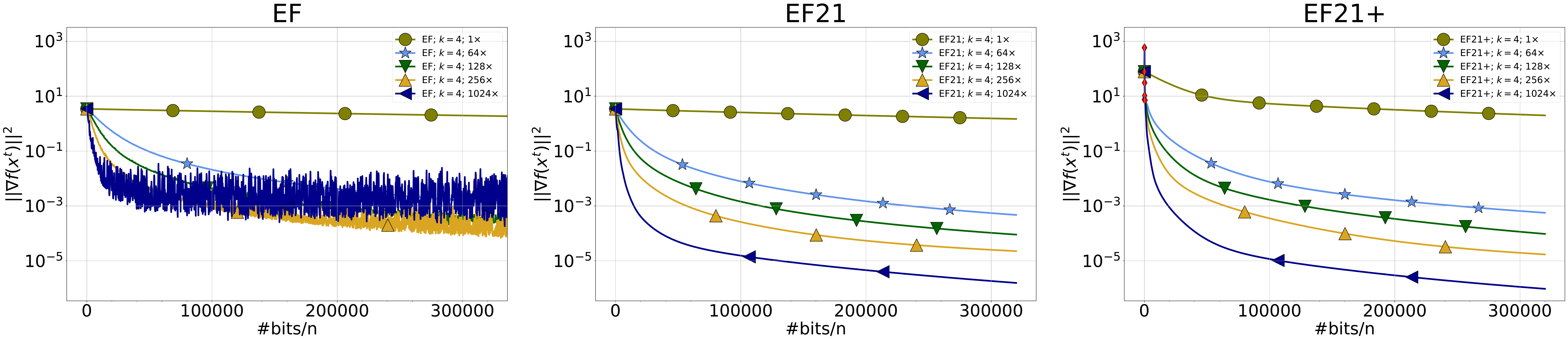}
	\includegraphics[width=0.8\linewidth]{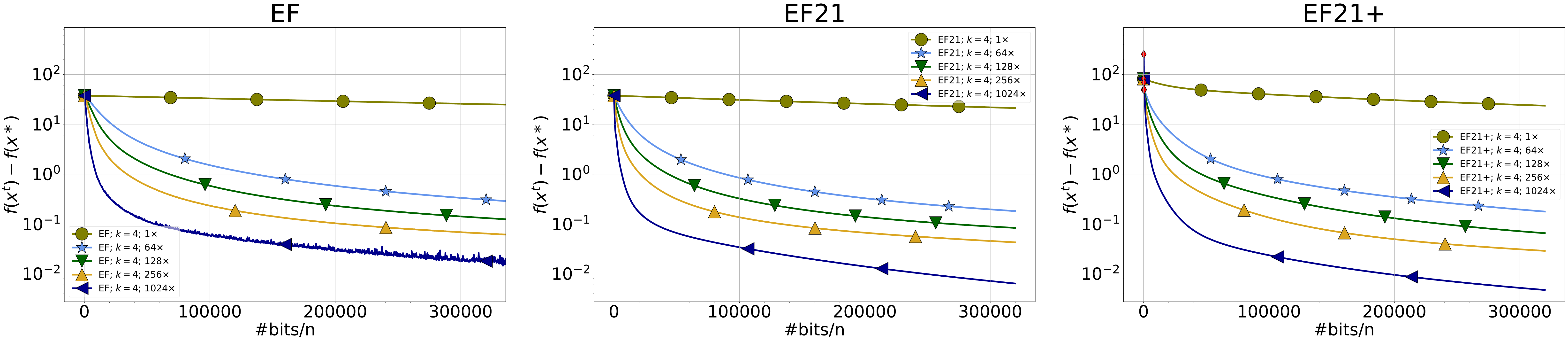}
	%%%%%%%
	%\hline
	%%%%%%%
	
	\caption{The performance of \algname{EF}, \algname{EF21}, and \algname{EF21+}  with Top-$k$ compressor, and for increasing stepsizes. The dataset used: \texttt{w8a}. By $1\times, 2\times, 4\times$ (and so on) we indicate that the stepsize was set to a multiple of  the largest stepsize predicted by our theory.} \label{fig:3plots_more_w8a_PL}
\end{figure}

All of the figures above illustrate that in the PL setting, \algname{EF21} and \algname{EF21+} tolerate much larger stepsizes than \algname{EF}, which makes them more efficient in practice. Moreover, in all experiments with large stepsizes ($512\times$--$4096\times$), \algname{EF} starts oscillating, which hinders the convergence to the desired tolerance.

\subsection{Deep learning experiments}\label{sec:exp-DL}

In this section, we replace full gradient $\nabla f_i(x^{k+1})$ in the algorithms \algname{EF21} and \algname{EF} by its stochastic estimator (minibatch without replacement), and conduct several deep learning experiments for multi-class image classification. In particular, we compare our \algname{EF21} method to \algname{EF} by running  ResNet18 \citep{he2016deep} and VGG11 models on the CIFAR-10 \citep{krizhevsky2009learning} dataset. 

We implement the algorithms in PyTorch \citep{paszke2019pytorch} and run the experiments on several GPUs. We used 3 different GPU cluster node types in total within all experiments: \begin{enumerate}
\item NVIDIA GeForce GTX 1080 Ti; \item NVIDIA GeForce RTX 2080 Ti; \item  NVIDIA Tesla V100. \end{enumerate}

The dataset is split into $n = 5$ equal parts. Total train set size for CIFAR-10 is  $50,000$. The test set for evaluation has $10,000$ data points. The train set is split into batches of size $\tau \in \cb{128, 1024}$. The first four workers own equal number of batches of data, while the last worker has the rest. 

\subsubsection{Tuned stepsizes} 

%In our first experiments  Figures \ref{fig:DL}, \ref{fig:DL_vgg11} we fix $k \approx 0.05D $, and $\tau = 1024$ for ResNet18 and $\tau = 128$ for VGG11.\footnote{$D$ is a number of model parameters. For ResNet18, $D = 11,511,784$, and for VGG11, $D = 132,863,336$.} We tune stepsize starting from $10^{-3}$ as a baseline and increase it by a factor of $2$. In Figure \ref{fig:DL} we compare \algname{EF}, \algname{EF21}, \algname{EF21+}, and \algname{SGD} with the best tuned stepsizes. The experiment shows that during the training, both \algname{EF} and \algname{EF21} (\algname{EF21+}) perform similarly with a slight improvement in the new \algname{EF21} method. Moreover, \algname{EF21} achieves better test accuracy for both NN architechtures.
% 
%	\begin{figure}[H] 
%			\includegraphics[width=\linewidth]{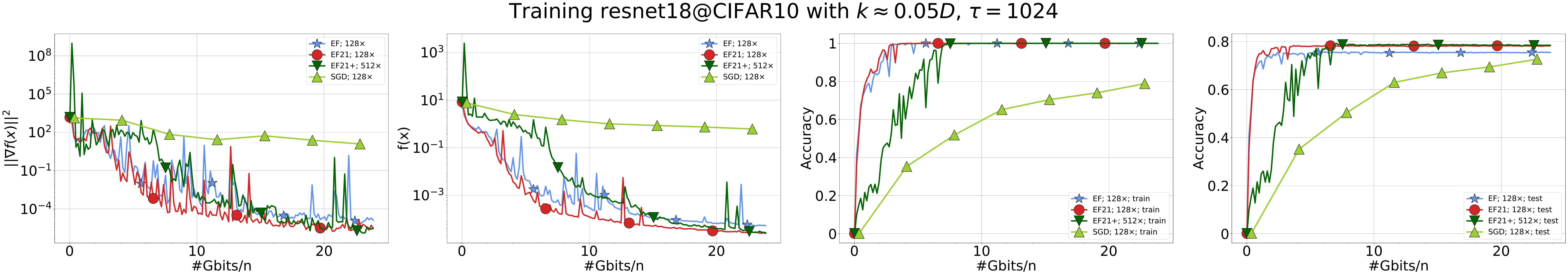} 
%		\caption{ResNet18 on CIFAR-10.}
%		\label{fig:DL}
%	\end{figure}

In our first experiments, summarized in  Figures \ref{fig:DL} and \ref{fig:DL_vgg11}, we fix $k \approx 0.05D $ and $\tau = 1024$ for ResNet18, and $\tau = 128$ for VGG11.\footnote{$D$ is the number of model parameters. For ResNet18, $D = 11,511,784$, and for VGG11, $D = 132,863,336$.} We tune the stepsize starting from $10^{-3}$ as a baseline, and progressively increase it by a factor of $2$. In Figure~\ref{fig:DL} we compare \algname{EF}, \algname{EF21}, \algname{EF21+}, and \algname{SGD} with the best tuned stepsizes. The experiment shows that during the training, both \algname{EF} and \algname{EF21} (\algname{EF21+}) perform similarly with a slight improvement in the new \algname{EF21} method. Moreover, \algname{EF21} achieves better test accuracy for both NN architectures.

\begin{figure}[H] 
	\includegraphics[width=\linewidth]{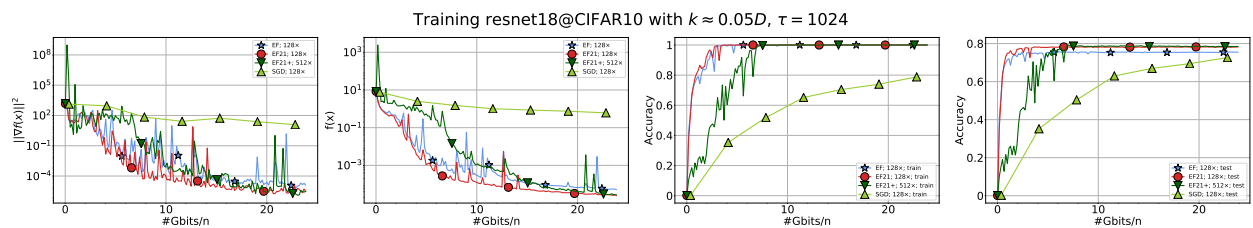} 
	\caption{ResNet18 on CIFAR-10.}
	\label{fig:DL}
\end{figure}

\begin{figure}[H] 
	\includegraphics[width=\linewidth]{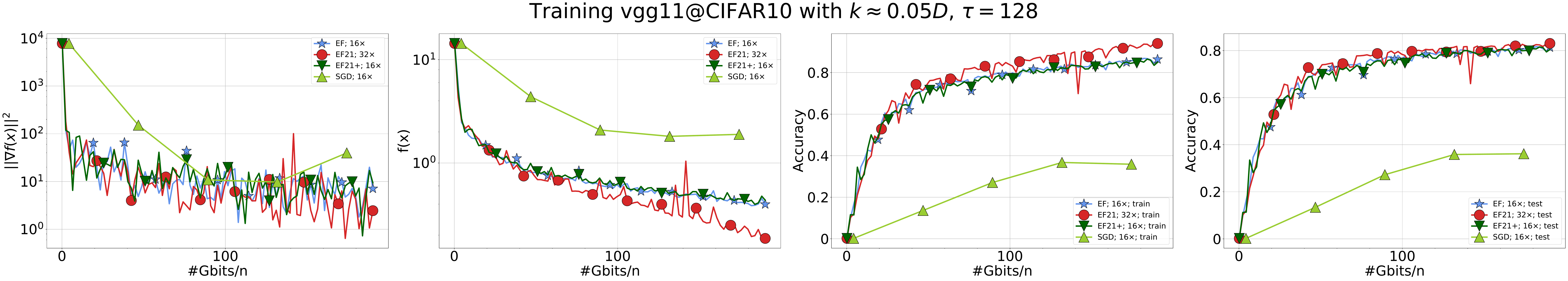} 
	\caption{VGG11 on CIFAR-10.}
	\label{fig:DL_vgg11}
\end{figure}

\subsubsection{Dependence on $k$} 
In this experiment, we fix the batch size $\tau = 1024$ and a medium stepsize $\g = 1.6 \cdot 10^{-2}$. We demonstrate that choosing smaller $k$ in the Markov compressor makes the method more communication efficient, and helps it to more quickly achieve higher test accuracy.

\begin{figure}[H] 
	\includegraphics[width=\linewidth]{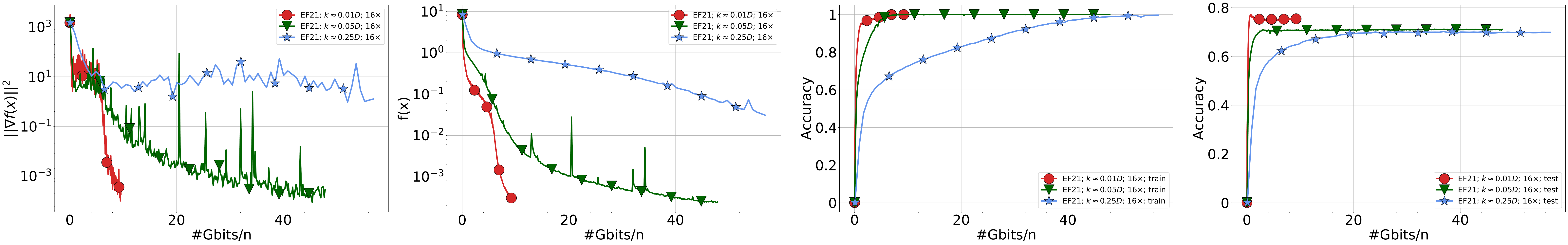} 
	\caption{ResNet18 on CIFAR-10, minibatch size $\tau = 1024$.}
	\label{fig:DL_vary_K}
\end{figure}

		%%%%%%%%%%%%%%%%%%%%%%%%
	%%%%%%%%%%%%%%%%%%%%%%%%
		% \clearpage		
		\section{Proofs for Section~\ref{sec:Markov}: Distortion of Markov Compressor} \label{sec:markov-proofs}
	%%%%%%%%%%%%%%%%%%%%%%%%
	%%%%%%%%%%%%%%%%%%%%%%%%	
	
We have made a couple statements, without proof, at the end of Section~\ref{sec:Markov} which were not critical to the development of our results. Here we provide the justification.
	
	\begin{lemma} \label{lem:Markov-property}Let $\{v^t\}_{t\geq 0}$ be any sequence of vectors in $\R^d$. Let  \begin{equation}D^t\eqdef \sqnorm{\cM(v^{t+1}) - v^t}\label{eq:D^t}\end{equation} be the distortion of the Markov compressor $\cM$ on input $v^t$. Then
	\begin{equation}\Exp{D^t} \leq (1-\theta)^t \Exp{D^0} + \beta \sum_{i=0}^{t-1} (1-\theta)^i \Delta^{t-i},\label{eq:u9h98gfdfddf}\end{equation}
	where $\Delta^t\eqdef \sqnorm{v^{t+1} - v^t}$.
	\end{lemma}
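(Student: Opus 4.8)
The plan is to distil from the recursion defining $\cM$ a one-step contraction for the distortion, and then iterate it. Since in this abstract setting the input sequence $\{v^t\}$ is fixed (deterministic), the only randomness is the internal randomness of $\cC$; write $\mathcal F^t$ for the $\sigma$-algebra it generates over rounds $0,\dots,t-1$, so that $\cM(v^t)$ and $w^t\eqdef v^{t+1}-\cM(v^t)$ are $\mathcal F^t$-measurable while each $\Delta^t$ is a constant. I first aim to prove the one-step inequality
\[
\Exp{\sqnorm{\cM(v^{t+1}) - v^{t+1}} \,\big|\, \mathcal F^t}\;\leq\;(1-\theta)\,\sqnorm{\cM(v^{t}) - v^{t}}\;+\;\beta\,\Delta^{t},\qquad t\geq 0,
\]
i.e.\ the distortion of $\cM$ at round $t{+}1$ is, in expectation, a $(1-\theta)$-contraction of the distortion at round $t$ plus a $\beta\Delta^t$ drift term. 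Taking total expectations reduces this to a scalar recursion $b_{t+1}\leq(1-\theta)b_t+\beta\Delta^t$ for $b_t\eqdef\Exp{\sqnorm{\cM(v^t)-v^t}}$; unrolling it (with the base value $b_0$ supplied by \eqref{eq:cM-init} together with \eqref{eq:b_compressor}) produces a geometric sum of the $\Delta$-terms, which after the appropriate reindexing is exactly \eqref{eq:u9h98gfdfddf}.

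The one-step inequality is where essentially all the work sits, and it is short. From \eqref{eq:cM-recursion},
\[
\cM(v^{t+1}) - v^{t+1}\;=\;\cM(v^t)+\cC\!\big(v^{t+1}-\cM(v^t)\big)-v^{t+1}\;=\;\cC(w^t)-w^t,\qquad w^t=v^{t+1}-\cM(v^t).
\]
Conditioning on $\mathcal F^t$ (so that $w^t$ is fixed) and invoking the contraction bound \eqref{eq:b_compressor} gives $\Exp{\sqnorm{\cC(w^t)-w^t}\mid\mathcal F^t}\leq(1-\alpha)\sqnorm{w^t}$. Next I split $w^t=\big(v^t-\cM(v^t)\big)+\big(v^{t+1}-v^t\big)$ and apply Young's inequality $\sqnorm{a+b}\leq(1+s)\sqnorm{a}+(1+s^{-1})\sqnorm{b}$ with a free parameter $s>0$:
\[
(1-\alpha)\sqnorm{w^t}\;\leq\;(1-\alpha)(1+s)\,\sqnorm{v^t-\cM(v^t)}\;+\;(1-\alpha)(1+s^{-1})\,\Delta^t.
\]
Choosing $s=\tfrac{1}{\sqrt{1-\alpha}}-1>0$ turns the first coefficient into $\sqrt{1-\alpha}=1-\theta$ and the second into $\tfrac{1-\alpha}{1-\sqrt{1-\alpha}}=\beta$, which is precisely the one-step inequality; this is the same AM--GM bookkeeping that produces the constants $\theta$ and $\beta$ throughout the \algname{EF21} analysis.

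I expect the only points needing care to be bookkeeping rather than ideas. First, the choice of $s$ in Young's inequality is forced: it is the unique value making the two coefficients equal $1-\theta$ and $\beta$ simultaneously, and any other choice yields a weaker, non-matching recursion. Second, the reindexing in the unrolling step must be done carefully, i.e.\ one should confirm that iterating $b_{t+1}\leq(1-\theta)b_t+\beta\Delta^t$ across $t$ rounds produces exactly the sum $\beta\sum_{i=0}^{t-1}(1-\theta)^i\Delta^{t-i}$ as written in \eqref{eq:u9h98gfdfddf} (and not an off-by-one variant); the cleanest route is to prove \eqref{eq:u9h98gfdfddf} directly by induction on $t$, feeding the one-step inequality into the inductive step. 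Nothing beyond \eqref{eq:b_compressor}, \eqref{eq:cM-init}, \eqref{eq:cM-recursion} and Young's inequality is needed.
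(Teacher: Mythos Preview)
Your proposal is correct and follows essentially the same route as the paper's own proof: condition on the past, use the recursion \eqref{eq:cM-recursion} to rewrite $\cM(v^{t+1})-v^{t+1}=\cC(w^t)-w^t$, apply the contractive property \eqref{eq:b_compressor}, split $w^t$ via Young's inequality, take total expectation, and unroll. The only cosmetic differences are that the paper conditions directly on $\cM(v^t)$ rather than on a $\sigma$-algebra, and it leaves the Young parameter $s$ generic (defining $\theta,\beta$ in terms of $s$) rather than fixing $s=\tfrac{1}{\sqrt{1-\alpha}}-1$ up front; your caution about the off-by-one in the unrolled sum is well placed, since the paper simply writes ``It remains to unroll this recurrence.''
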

	\begin{proof} By conditioning on $\cM(v^t)$, we get
	\begin{eqnarray} 
	\Exp{D^{t+1} \mid  \cM(v^{t})} &=& \Exp{\sqnorm{\cM(v^{t+1}) - v^{t+1}} \mid  \cM(v^{t}) }  \notag \\
	&=& \Exp{ \sqnorm{\cM(v^{t}) + \cC \left( v^{t+1} - \cM(v^{t})  \right) - v^{t+1}}  \mid  \cM(v^{t})}  \notag \\
	&\overset{\eqref{eq:b_compressor}}{\leq} & (1-\alpha) \sqnorm{v^{t+1} - \cM(v^{t})} \notag \\
	&\leq &(1-\alpha) \left[(1+s) \sqnorm{v^{t} - \cM(v^{t})} + (1+s^{-1})\sqnorm{v^{t+1} - v^{t}}\right] \notag \\
	&=& (1-\theta)\sqnorm{v^{t} - \cM(v^{t})} + \beta \Delta^t,\label{eq:ni8f0dh9hf0d}
 	\end{eqnarray}
	where $s>0$ is small enough so that that $1-\theta = (1-\alpha) (1+s) <1$, and we define $\beta = (1-\alpha)(1+s^{-1})$.
	
	By applying the tower property, we get
	\begin{eqnarray*}
	\Exp{D^{t+1}} &=& \Exp{\Exp{D^{t+1} \mid  \cM(v^{t})}} \\
	&\overset{\eqref{eq:ni8f0dh9hf0d}}{=}&	(1-\theta)\Exp{\sqnorm{v^{t} - \cM(v^{t})}} + \beta \Delta^t \\
	&\overset{\eqref{eq:D^t}}{=}& (1-\theta) \Exp{D^t} + \beta \Delta^t.
	 	\end{eqnarray*}
	It remains to unroll this recurrence.	
		
	\end{proof}

\begin{corollary} Assume that $\Delta^t \leq (1-\phi)^t \Delta^0$ for all $t\geq 0$ and some $\phi>0$. Then
\[\lim \limits_{t\to \infty} \Exp{D^t} = 0.\]
\end{corollary}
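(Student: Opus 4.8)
The plan is to combine the recurrence bound from Lemma~\ref{lem:Markov-property} with the assumed linear decay of $\Delta^t$, and then show that the resulting upper bound tends to zero. Starting from \eqref{eq:u9h98gfdfddf}, we have
\[
\Exp{D^t} \leq (1-\theta)^t \Exp{D^0} + \beta \sum_{i=0}^{t-1} (1-\theta)^i \Delta^{t-i}.
\]
The first term goes to $0$ as $t\to\infty$ because $0<\theta\leq 1$ implies $0\leq 1-\theta<1$. (Strictly, one should note $1-\theta = 1-\sqrt{1-\alpha} \in [0,1)$, or appeal to the construction in the proof of Lemma~\ref{lem:Markov-property} where $1-\theta<1$ by choice of $s$.) So everything comes down to controlling the convolution sum.

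Next I would substitute the hypothesis $\Delta^{t-i} \leq (1-\phi)^{t-i}\Delta^0$ into the sum:
\[
\beta \sum_{i=0}^{t-1} (1-\theta)^i \Delta^{t-i} \leq \beta \Delta^0 \sum_{i=0}^{t-1} (1-\theta)^i (1-\phi)^{t-i} = \beta \Delta^0 (1-\phi)^t \sum_{i=0}^{t-1} \left(\frac{1-\theta}{1-\phi}\right)^i.
\]
Now there are two cases depending on the ratio $r \eqdef \frac{1-\theta}{1-\phi}$. If $r \neq 1$, the geometric sum equals $\frac{1 - r^t}{1-r}$, so the whole expression is bounded by $\beta \Delta^0 (1-\phi)^t \cdot \frac{1}{|1-r|}\max\{1, r^t\}$, which is $O\!\big(\max\{(1-\phi)^t, (1-\theta)^t\}\big) \to 0$. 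If $r=1$ (i.e.\ $\theta=\phi$), the sum equals $t$, giving the bound $\beta \Delta^0 \, t\,(1-\phi)^t \to 0$ since a polynomial times a geometric with ratio in $[0,1)$ vanishes. In all cases the bound tends to $0$, and since $\Exp{D^t}\geq 0$, the squeeze theorem gives $\lim_{t\to\infty}\Exp{D^t}=0$.

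The only mildly delicate point — the ``main obstacle,'' though it is minor — is handling the boundary/degenerate cases cleanly: the case $\theta=\phi$ where the geometric sum degenerates to $t$, and making sure we are not dividing by zero when $r=1$; a uniform way to dodge this is to pick any $\rho$ with $\max\{1-\theta,1-\phi\}<\rho<1$ and bound each term $(1-\theta)^i(1-\phi)^{t-i}\leq \rho^{i}\rho^{t-i}=\rho^t$, so the sum is at most $t\rho^t\to 0$ regardless of the relation between $\theta$ and $\phi$. I would likely present this $\rho$-argument as the clean one-line finish. A secondary point worth a sentence is ruling out $\theta=0$; since $\cC\in\bB(\alpha)$ with $\alpha>0$ we have $\theta>0$, and indeed the proof of Lemma~\ref{lem:Markov-property} chooses $s>0$ precisely so that $1-\theta<1$.
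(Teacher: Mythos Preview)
Your proposal is correct and takes essentially the same approach as the paper: start from Lemma~\ref{lem:Markov-property}, plug in the linear-decay hypothesis, and bound the convolution sum. In fact, your ``clean one-line finish'' is precisely what the paper does, with the concrete choice $\rho = 1-\min\{\theta,\phi\} = \max\{1-\theta,1-\phi\}$, yielding $\sum_{i=0}^{t-1}(1-\theta)^i(1-\phi)^{t-i}\leq t(1-\min\{\theta,\phi\})^t\to 0$; your case split on the ratio $r$ is just extra detail the paper skips.
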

\begin{proof}
Using Lemma~\ref{lem:Markov-property}, we get
\begin{eqnarray*}\Exp{D^t} &\overset{\eqref{eq:u9h98gfdfddf}}{\leq} & (1-\theta)^t \Exp{D^0} + \beta \sum_{i=0}^{t-1} (1-\theta)^i \Delta^{t-i}\\
&\leq & (1-\theta)^t \Exp{D^0} + \beta\Delta^0  \sum_{i=0}^{t-1} (1-\theta)^i (1-\phi)^{t-i} \\
&\leq & (1-\theta)^t \Exp{D^0} + \beta\Delta^0  \sum_{i=0}^{t-1} (1-\min\{\theta,\phi\})^t \\
&=& (1-\theta)^t \Exp{D^0} +   t (1-\min\{\theta,\phi\})^t    \beta\Delta^0.
\end{eqnarray*}
Clearly, the right hand side converges to $0$ as $t\to \infty$.
\end{proof}

	%%%%%%%%%%%%%%%%%%%%%%%%
	%%%%%%%%%%%%%%%%%%%%%%%%
		% \clearpage
	\section{Four Lemmas Needed in the Proofs of Theorems~\ref{thm:main-distrib} and~\ref{thm:PL-main} }
	%%%%%%%%%%%%%%%%%%%%%%%%
	%%%%%%%%%%%%%%%%%%%%%%%%

	We first state several auxiliary results we need for the proofs of our main theorems.
	
    \subsection{Compression distortion bound}
    
The following lemma play a key role in our analysis. It characterizes the change of the distortion imparted by  the Markov compressor in a single iteration.

	\begin{lemma}\label{lem:theta-beta} Let $\cC\in \bB(\alpha)$ for $0<\alpha\leq 1$. Define
	$G_i^t \eqdef  \sqnorm{ g_i^t - \nabla f_i(x^t) } $ and $W^t \eqdef \{g_1^t, \dots, g_n^t, x^t, x^{t+1}\}$. 		For any $s>0$ we have
		\begin{equation}\label{eq:90y0yfhdf} \Exp{ G_i^{t+1} \;|\; W^t} \leq (1-\theta(s))   G_i^t + \beta(s)  \sqnorm{\nabla f_i(x^{t+1}) - \nabla f_i(x^t)} ,\end{equation}
		where
		\begin{equation}\label{eq:theta-beta-def}\theta(s) \eqdef 1- (1- \alpha )(1+s), \qquad \text{and} \qquad \beta(s)\eqdef (1- \alpha ) \left(1+ s^{-1} \right).\end{equation}

		\end{lemma}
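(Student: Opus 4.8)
The plan is to expand $G_i^{t+1} = \sqnorm{g_i^{t+1} - \nabla f_i(x^{t+1})}$ using the \algname{EF21} update $g_i^{t+1} = g_i^t + \cC(\nabla f_i(x^{t+1}) - g_i^t)$, then invoke the contractivity property \eqref{eq:b_compressor} of $\cC$, and finally split the resulting term via Young's inequality with a free parameter $s>0$. Conditioning on $W^t$ fixes $g_i^t$, $x^t$, and $x^{t+1}$, so $\nabla f_i(x^t)$ and $\nabla f_i(x^{t+1})$ are deterministic given $W^t$, and the only randomness left is in the application of $\cC$ to the known vector $\nabla f_i(x^{t+1}) - g_i^t$.

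First I would write
\[
G_i^{t+1} = \sqnorm{g_i^t + \cC(\nabla f_i(x^{t+1}) - g_i^t) - \nabla f_i(x^{t+1})} = \sqnorm{\cC(u) - u},
\]
where $u \eqdef \nabla f_i(x^{t+1}) - g_i^t$, since $g_i^t - \nabla f_i(x^{t+1}) = -u$. Taking conditional expectation and applying \eqref{eq:b_compressor} to the vector $u$ gives
\[
\Exp{G_i^{t+1} \mid W^t} \leq (1-\alpha)\sqnorm{\nabla f_i(x^{t+1}) - g_i^t}.
\]
Next I would bound $\sqnorm{\nabla f_i(x^{t+1}) - g_i^t}$ by inserting $\pm \nabla f_i(x^t)$ and using the inequality $\sqnorm{a+b} \leq (1+s)\sqnorm{a} + (1+s^{-1})\sqnorm{b}$ with $a = \nabla f_i(x^t) - g_i^t$ (whose square norm is exactly $G_i^t$) and $b = \nabla f_i(x^{t+1}) - \nabla f_i(x^t)$:
\[
\sqnorm{\nabla f_i(x^{t+1}) - g_i^t} \leq (1+s) G_i^t + (1+s^{-1})\sqnorm{\nabla f_i(x^{t+1}) - \nabla f_i(x^t)}.
\]
Multiplying through by $(1-\alpha)$ and identifying $\theta(s) = 1 - (1-\alpha)(1+s)$ and $\beta(s) = (1-\alpha)(1+s^{-1})$ yields \eqref{eq:90y0yfhdf}.

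This proof is essentially routine; there is no real obstacle. The only points requiring a moment's care are (i) noting that conditioning on $W^t$ is exactly what makes $u$ a fixed vector so that \eqref{eq:b_compressor} applies directly, and (ii) the algebraic identity $\cC(u) - u = g_i^{t+1} - \nabla f_i(x^{t+1})$, which rewrites the quantity of interest into the form the compressor bound controls. The parameter $s$ is carried symbolically throughout; later (as in the statement just after Lemma~\ref{lem:Markov-property}) one specializes to the choice making $1-\theta(s) = (1-\alpha)(1+s) = \sqrt{1-\alpha}$, i.e.\ $1+s = (1-\alpha)^{-1/2}$, giving $\theta = 1-\sqrt{1-\alpha}$ and $\beta = \frac{1-\alpha}{1-\sqrt{1-\alpha}}$, but that specialization is not needed for the lemma itself.
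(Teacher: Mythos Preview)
Your proposal is correct and follows essentially the same route as the paper: substitute the \algname{EF21} update, apply the contractivity bound \eqref{eq:b_compressor} to $u=\nabla f_i(x^{t+1})-g_i^t$, and then split $\sqnorm{u}$ via Young's inequality with parameter $s$ into $G_i^t$ and $\sqnorm{\nabla f_i(x^{t+1})-\nabla f_i(x^t)}$. The paper's proof is line-for-line the same computation, with the same appeal to Young's inequality at the end.
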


 	\begin{proof}
 		\begin{eqnarray*}
 			\Exp{ G_i^{t+1} \;|\; W^t} & = & \Exp{  \sqnorm{g_i^{t+1} - \nabla f_i(x^{t+1})}  \;|\; W^t}	 \\
 			&=& \Exp{  \sqnorm{g_i^t + \cC ( \nabla f_i(x^{t+1}) - g_i^t) - \nabla f_i(x^{t+1})}  \;|\; W^t}	 \\
 			&\overset{\eqref{eq:b_compressor}}{\leq} &  (1-\alpha) \sqnorm{\nabla f_i(x^{t+1}) - g_i^t} \\
			&\leq & (1-\alpha)  (1+ s) \sqnorm{\nabla f_i(x^{t}) - g_i^t}  \\
			&& \qquad  + (1-\alpha)  \left(1+s^{-1}\right) \sqnorm{\nabla f_i(x^{t+1}) - \nabla f_i(x^t)} , 		\end{eqnarray*}
			where the last inequality follows from Young's inequality, which states that for any $a,b\in \R^d$ and any $s>0$ we have $\sqnorm{a+b}\leq (1+s)\sqnorm{a} + (1+s^{-1})\sqnorm{b}$.			
 	\end{proof}

In particular, consider node $i$ and iteration $t$. Applying Markov compressor specific to node $i$ (let us call it $\cM_i$) to $v_i^t = \nabla f_i(x^t)$, we get $g_i^t = \cM_i(v_i^t)$. In the next iteration, we apply Markov compressor to the new gradient, $v_i^{t+1} = \nabla f_i(x^{t+1})$, and the compressed vector is $g_i^{t+1} = \cM_i(v_i^{t+1})$. Note that $G_i^t$ is the distortion of Markov compressor at iteration $t$, and that \eqref{eq:90y0yfhdf} describes how this distortion changes from iteration $t$ to iteration $t+1$.  The expectation on the left hand side is over the randomness inherent in $\cC$ (and so, for example, if $\cC$ is the Top-$k$ compressor, expectation is not needed). 

Note that since the distortion of the  Markov compressor at iteration $t$ is equal to $$G_i^t \eqdef  \sqnorm{ g_i^t - \nabla f_i(x^t)} ,$$ \eqref{eq:90y0yfhdf} says that, provided that $\theta(s)>0$, the distortion decreases by the factor of $1-\theta(s)$, subject to  the additive error $$\varepsilon_i^t(s) \eqdef \beta(s)  \sqnorm{\nabla f_i(x^{t+1}) - \nabla f_i(x^t)}.$$ That is, \eqref{eq:90y0yfhdf} can be written in the form
\[ \Exp{ \sqnorm{\cM_i(\nabla f_i (x^{t+1})) - \nabla f_i (x^{t+1})}\mid W^t} \leq  (1-\theta(s))  \sqnorm{\cM_i(\nabla f_i (x^{t})) - \nabla f_i (x^{t})} + \varepsilon_i^t(s).\]

Note that since our method converges, the difference $\nabla f_i(x^{t+1}) - \nabla f_i(x^t)$ decreases to zero, and hence the additive error $\varepsilon_i^t(s)$ decreases to zero, too.

Note that the distortion evolution mechanism described by Lemma~\ref{lem:theta-beta} is fundamentally different from the distortion evolution mechanism behind the vanilla biased compressor $\cC$. Indeed, for this compressor we instead have 
\[ \Exp{ \sqnorm{\cC(\nabla f_i (x^{t+1})) - \nabla f_i (x^{t+1})}\mid W^t} \leq  (1-\alpha)  \sqnorm{ \nabla f_i (x^{t+1})} .\]
This inequality bounds the distortion, but does not provide a {\em recursion} characterizing how the distortion changes from one iteration to another.

	\subsection{Optimal choice of $s$ in Lemma~\ref{lem:theta-beta}}
	
Notice that in  Lemma~\ref{lem:theta-beta} we have some freedom in how to choose $s$. It turns out, and this will be apparent from the proofs of Theorems~\ref{thm:main-distrib} and \ref{thm:PL-main}, that the optimal way of choosing $s$ is to minimize the ratio $\frac{\beta(s)}{\theta(s)}$. The next lemma characterizes the optimal choice of $s$. Note that the upper bound on $s$ is equivalent to requiring that $\theta(s)>0$, i.e., that the first term on the right hand side in \eqref{eq:90y0yfhdf} results in a contraction.
	
	\begin{lemma}\label{le:optimal_t-Peter} Let $0<\alpha\leq 1$ and for $s>0$ let $\theta(s)$ and $\beta(s)$ be as in \eqref{eq:theta-beta-def}. Then the solution of the optimization problem
	\begin{equation}\label{eq:98g_(89fd8gf9d} \min_{s} \left\{ \frac{\beta(s)}{\theta(s)} \;:\; 0<s<\frac{\alpha}{1-\alpha}\right\}\end{equation}
	is given by $s^* = \frac{1}{\sqrt{1-\alpha}}-1$. Furthermore, $\theta(s^*) = 1-\sqrt{1-\alpha}$, $\beta(s^*) = \frac{1-\alpha}{1-\sqrt{1-\alpha}}$ and
	\begin{equation}\label{eq:n98fhgd98hfd}\sqrt{\frac{\beta(s^*)}{\theta(s^*)}} = \frac{1}{\sqrt{1-\alpha}} -1 = \frac{1}{\alpha} + \frac{\sqrt{1-\alpha}}{\alpha} - 1 \leq \frac{2}{\alpha}-1.\end{equation}
\end{lemma}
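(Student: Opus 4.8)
The plan is to reduce the optimization problem in the statement to a routine one-variable calculus exercise. First I would set $c \eqdef 1-\alpha \in [0,1)$, so that by \eqref{eq:theta-beta-def} we have $\theta(s) = 1 - c(1+s)$ and $\beta(s) = c(1 + s^{-1})$, and observe that the constraint $0 < s < \frac{\alpha}{1-\alpha}$ is exactly the condition $\theta(s)>0$. On this open interval the objective $R(s) \eqdef \beta(s)/\theta(s)$ is therefore positive, smooth, and well defined. I would then record the boundary behaviour: as $s\to 0^+$ we have $\beta(s)\to+\infty$ while $\theta(s)\to 1-c>0$; and as $s$ approaches $\frac{\alpha}{1-\alpha}$ from below, $\theta(s)\to 0^+$ while $\beta(s)$ stays bounded and positive. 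In both cases $R(s)\to+\infty$, so the infimum is attained at an interior critical point.

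Next I would compute $R'(s)=0$. Using $\beta'(s) = -c/s^2$ and $\theta'(s) = -c$, the numerator $\beta'(s)\theta(s) - \beta(s)\theta'(s)$ of $R'(s)$, after clearing the positive factor $s^2/c$, simplifies to the quadratic $c s^2 + 2c s + (c-1)$. Its discriminant equals $4c$, and its unique root lying in the admissible interval is $s^* = \frac{1}{\sqrt{c}} - 1 = \frac{1}{\sqrt{1-\alpha}}-1$. I would then verify $0 < s^* < \frac{1-c}{c}$ (the left inequality because $c<1$, the right because $\sqrt c < 1+\sqrt c$), and note that since the quadratic opens upward this critical point is a minimum, not a maximum (so it is indeed the global minimizer in view of the endpoint blow-up). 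Substituting $s^*$ back gives $\theta(s^*) = 1 - c\cdot\frac{1}{\sqrt c} = 1-\sqrt{1-\alpha}$, and since $1 + (s^*)^{-1} = \frac{1}{1-\sqrt c}$, also $\beta(s^*) = \frac{c}{1-\sqrt c} = \frac{1-\alpha}{1-\sqrt{1-\alpha}}$.

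Finally I would simplify the ratio: $\frac{\beta(s^*)}{\theta(s^*)} = \frac{c}{(1-\sqrt c)^2}$, hence $\sqrt{\frac{\beta(s^*)}{\theta(s^*)}} = \frac{\sqrt c}{1-\sqrt c}$; multiplying numerator and denominator by $1+\sqrt c$ rewrites this as $\frac{\sqrt c + c}{1-c} = \frac{\sqrt{1-\alpha}}{\alpha} + \frac{1-\alpha}{\alpha} = \frac{1}{\alpha} + \frac{\sqrt{1-\alpha}}{\alpha} - 1$, and bounding $\sqrt{1-\alpha}\le 1$ gives the final estimate $\le \frac{2}{\alpha}-1$, which is what \eqref{eq:n98fhgd98hfd} records. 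I do not anticipate a genuine obstacle: this is elementary single-variable optimization, and the only points needing care are confirming that the critical point lies strictly inside the feasible interval and is the global minimizer (covered by the endpoint blow-up plus uniqueness of the critical point) and keeping the back-substitution and rationalization algebra tidy.
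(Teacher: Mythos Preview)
Your proof is correct. Both your argument and the paper's are straightforward one-variable calculus, but the execution differs slightly. The paper first rewrites the objective as
\[
\frac{\beta(s)}{\theta(s)} = \left( \frac{1}{1-\alpha} - \varphi(s) \right)^{-1}, \qquad \varphi(s) \eqdef \frac{1}{(1+s)(1-\alpha)} + s,
\]
and then minimizes the simpler convex function $\varphi$, using the observation that $\varphi(0) = \varphi\!\left(\tfrac{\alpha}{1-\alpha}\right)$ to conclude the minimum lies strictly inside the interval before solving $\varphi'(s)=0$. You instead differentiate the original ratio directly, reduce the first-order condition to the quadratic $cs^2 + 2cs + (c-1)=0$, and use the endpoint blow-up of $R(s)$ together with uniqueness of the admissible root to certify global optimality. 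The paper's reformulation gives a slightly cleaner stationarity equation, while your approach avoids the algebraic inversion step; both are equally elementary and yield the same $s^*$ and the same back-substituted values.
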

\begin{proof}
After simple algebraic manipulation, it is easy to see that
\[\frac{\beta(s)}{\theta(s)}  =  \left( \frac{1}{1-\alpha} - \frac{1}{(1+s)(1-\alpha)} - s\right)^{-1},\]
and hence the optimization problem \eqref{eq:98g_(89fd8gf9d} is equivalent to the problem
\[ \min_{s} \left\{ \varphi(s)\eqdef \frac{1}{(1+s)(1-\alpha)} + s  \;:\; 0<s<\frac{\alpha}{1-\alpha}  \right\}.\]
Note that $\varphi$ is convex, and that $\varphi(0) = \varphi(\frac{\alpha}{1-\alpha} ) = \frac{1}{1-\alpha} $. Hence, the global minimum of $\varphi$ must lie in the interval $0<s<\frac{\alpha}{1-\alpha} $. Thus, we can drop the constraints, and find the solution by looking for a stationary point (i.e., for $s^*$ satisfying $\varphi'(s^*)=0$), which leads to $s^* =  1-\sqrt{1-\alpha}$. The rest follows by substituting the value $s=s^*$ to the expressions for $\theta(s)$, $\beta(s)$ and $\sqrt{ \frac{\beta(s)}{\theta(s)}}$.
\end{proof}

\subsection{A descent lemma}

The next lemma, due to \citet{PAGE2021}, gives a bound on the function value after one step of a method of the type $$x^{t+1}\eqdef x^{t}-\g g^{t} ,$$ where $g^t\in \R^d$ is any vector, and $\g>0$ any scalar. The only assumption we need for it to hold is for $f$ to have $L$-Lipschitz gradient.
	
		\begin{lemma}[\citep{PAGE2021}]\label{lem:80hf08dgfd}
		Suppose that function $f$ is $L$-smooth and let $x^{t+1}\eqdef x^{t}-\g g^{t} ,$ where $g^t\in \R^d$ is any vector, and $\g>0$ any scalar. Then we have
		\begin{eqnarray}\label{eq:aux_smooth_lemma}
		f(x^{t+1}) \leq f(x^{t})-\frac{\g}{2}\sqnorm{\nabla f(x^{t})}-\left(\frac{1}{2 \g}-\frac{L}{2}\right)\sqnorm{x^{t+1}-x^{t}}+\frac{\g}{2}\sqnorm{g^{t}-\nabla f(x^{t})}.
		\end{eqnarray}
	\end{lemma}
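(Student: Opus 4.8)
\textbf{Proof plan for Lemma~\ref{lem:80hf08dgfd} (the descent lemma).}

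The plan is to apply the standard descent inequality for $L$-smooth functions and then massage the inner product term into the form stated. First I would write down the quadratic upper bound: since $f$ has $L$-Lipschitz gradient, $f(x^{t+1}) \leq f(x^t) + \langle \nabla f(x^t), x^{t+1}-x^t\rangle + \frac{L}{2}\sqnorm{x^{t+1}-x^t}$. The only genuine work is rewriting $\langle \nabla f(x^t), x^{t+1}-x^t\rangle$. Using $x^{t+1}-x^t = -\gamma g^t$, this equals $-\gamma\langle \nabla f(x^t), g^t\rangle$, but rather than expanding in terms of $g^t$ directly, the cleaner route is the polarization-type identity $\langle \nabla f(x^t), x^{t+1}-x^t\rangle = \langle g^t, x^{t+1}-x^t\rangle + \langle \nabla f(x^t) - g^t, x^{t+1}-x^t\rangle$, where the first term is $-\gamma\sqnorm{g^t} = \frac{1}{\gamma}\sqnorm{x^{t+1}-x^t} \cdot(-1)$... more precisely $\langle g^t, x^{t+1}-x^t\rangle = -\frac{1}{\gamma}\sqnorm{x^{t+1}-x^t}$.

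Then I would handle the cross term $\langle \nabla f(x^t) - g^t, x^{t+1}-x^t\rangle$ with Young's inequality in the form $\langle a, b\rangle \leq \frac{\gamma}{2}\sqnorm{a} + \frac{1}{2\gamma}\sqnorm{b}$, applied with $a = \nabla f(x^t)-g^t$ and $b = x^{t+1}-x^t$. This produces the term $\frac{\gamma}{2}\sqnorm{g^t - \nabla f(x^t)}$ and an extra $\frac{1}{2\gamma}\sqnorm{x^{t+1}-x^t}$. Collecting everything: the $\frac{L}{2}\sqnorm{x^{t+1}-x^t}$ from smoothness, the $-\frac{1}{\gamma}\sqnorm{x^{t+1}-x^t}$ from the exact $g^t$ term, and the $+\frac{1}{2\gamma}\sqnorm{x^{t+1}-x^t}$ from Young's inequality combine into $-\left(\frac{1}{2\gamma}-\frac{L}{2}\right)\sqnorm{x^{t+1}-x^t}$, exactly as claimed.

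The only missing piece is the $-\frac{\gamma}{2}\sqnorm{\nabla f(x^t)}$ term, which I would recover by being slightly more careful: instead of splitting off $g^t$, use the identity $\langle \nabla f(x^t), x^{t+1}-x^t\rangle = -\frac{\gamma}{2}\sqnorm{\nabla f(x^t)} - \frac{\gamma}{2}\sqnorm{g^t} + \frac{\gamma}{2}\sqnorm{g^t - \nabla f(x^t)}$ (the identity $-\gamma\langle a,b\rangle = -\frac{\gamma}{2}\sqnorm{a} - \frac{\gamma}{2}\sqnorm{b} + \frac{\gamma}{2}\sqnorm{a-b}$ with $a=\nabla f(x^t)$, $b=g^t$), and then substitute $\frac{\gamma}{2}\sqnorm{g^t} = \frac{1}{2\gamma}\sqnorm{x^{t+1}-x^t}$. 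This directly yields all four terms with no Young's inequality needed at all, which is the slicker path. I do not expect any real obstacle here — this is a routine smoothness-plus-identity argument; the only thing to watch is bookkeeping the coefficient of $\sqnorm{x^{t+1}-x^t}$, namely that $\frac{L}{2} - \frac{1}{2\gamma}$ is the coefficient appearing, i.e. $-\left(\frac{1}{2\gamma}-\frac{L}{2}\right)$.
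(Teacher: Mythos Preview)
Your final approach via the polarization identity $-\gamma\langle a,b\rangle = -\tfrac{\gamma}{2}\sqnorm{a} - \tfrac{\gamma}{2}\sqnorm{b} + \tfrac{\gamma}{2}\sqnorm{a-b}$ is correct and yields the lemma exactly as stated; note that your first route via Young's inequality alone would \emph{not} have produced the $-\tfrac{\gamma}{2}\sqnorm{\nabla f(x^t)}$ term (you would get only an inequality missing that term), so you were right to abandon it. The paper itself does not give a proof of this lemma at all --- it simply cites it from \citet{PAGE2021} --- so there is nothing to compare against, and your polarization-identity argument is the standard one-line derivation.
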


\subsection{Stepsize selection}
	
The only purpose of our final lemma is to get an easy-to-write bound on the stepsize. We achieve this at the cost of a slightly worse theoretical result, by at most a factor of two. In particular, in the proof of our main theorems, the stepsize needs to satisfy an inequality of the type \begin{equation}a \gamma^{2}+b \gamma \leq 1\label{eq:quadratic_ineq}\end{equation} where $a, b$ are positive scalars. Instead of writing an algebraic expression for the largest $\gamma$ satisfying this inequality (let's call this optimal stepsize $\gamma^*$), we first observe that, necessarily, 
$$\gamma^* \leq  \min \left\{\frac{1}{\sqrt{a}}, \frac{1}{b}\right\}.$$
Further, it is easy to verify that
$\gamma^- \eqdef \frac{1}{\sqrt{a}+b}$ satisfies the quadratic inequality \eqref{eq:quadratic_ineq}, and that $\gamma^+ \eqdef \frac{2}{\sqrt{a}+b}$  does not. So, any $0\leq \gamma \leq \gamma^-$ satisfies \eqref{eq:quadratic_ineq}, and the upper bound is at most a factor of 2 worse than $\gamma^*$. 

We now formalize the above observations.
		
	\begin{lemma}\label{le:stepsize_page_fact}
		Let $a,b>0$. If $0 \leq \gamma \leq \frac{1}{\sqrt{a}+b}$, then $a \gamma^{2}+b \gamma \leq 1$. Moreover, the bound is tight up to the factor of 2 since $\frac{1}{\sqrt{a}+b} \leq \min \left\{\frac{1}{\sqrt{a}}, \frac{1}{b}\right\} \leq \frac{2}{\sqrt{a}+b}$
	\end{lemma}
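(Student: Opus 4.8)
\textbf{Proof proposal for Lemma~\ref{le:stepsize_page_fact}.}
The plan is to verify each of the three claimed inequalities directly. First I would establish the main implication: suppose $0 \le \gamma \le \frac{1}{\sqrt{a}+b}$. Since $a,b>0$ and $\gamma\ge 0$, both $a\gamma^2$ and $b\gamma$ are monotone nondecreasing in $\gamma$ on $[0,\infty)$, so it suffices to check the inequality $a\gamma^2 + b\gamma \le 1$ at the right endpoint $\gamma = \frac{1}{\sqrt{a}+b}$. Substituting, I would compute
\[
a\left(\frac{1}{\sqrt{a}+b}\right)^2 + b\cdot\frac{1}{\sqrt{a}+b} = \frac{a}{(\sqrt{a}+b)^2} + \frac{b}{\sqrt{a}+b} = \frac{a + b(\sqrt{a}+b)}{(\sqrt{a}+b)^2} = \frac{a + b\sqrt{a} + b^2}{(\sqrt{a}+b)^2}.
\]
Then I would note $(\sqrt{a}+b)^2 = a + 2b\sqrt{a} + b^2 \ge a + b\sqrt{a} + b^2$ because $b\sqrt{a}\ge 0$, so the displayed ratio is at most $1$, which proves the implication.

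Next I would prove the tightness chain $\frac{1}{\sqrt{a}+b} \le \min\{\frac{1}{\sqrt{a}}, \frac{1}{b}\} \le \frac{2}{\sqrt{a}+b}$. The left inequality is immediate: $\sqrt{a}+b \ge \sqrt{a}$ and $\sqrt{a}+b \ge b$ (both because the dropped term is positive), so $\frac{1}{\sqrt{a}+b}$ is at most each of $\frac{1}{\sqrt{a}}$ and $\frac{1}{b}$, hence at most their minimum. For the right inequality, write $m \eqdef \min\{\sqrt{a}, b\}$ and $M \eqdef \max\{\sqrt{a}, b\}$, so that $\min\{\frac{1}{\sqrt a},\frac1b\} = \frac{1}{M}$ and $\sqrt{a}+b = m + M \le 2M$; therefore $\frac{1}{M} \ge \frac{2}{m+M} = \frac{2}{\sqrt{a}+b}$, as desired.

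There is essentially no hard part here; the lemma is an elementary algebraic fact. The only mild subtlety is making sure the monotonicity argument is stated cleanly so that checking the single endpoint suffices rather than arguing over the whole interval — but since $a\gamma^2+b\gamma$ is a sum of two nondecreasing nonnegative functions on $[0,\infty)$, this is routine. I would present the three computations above in sequence and conclude.
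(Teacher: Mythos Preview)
Your proof is correct and follows the same elementary verification strategy the paper sketches informally just before the lemma statement (the paper does not give a formal proof beyond that sketch). One typographical slip: in the last line of your tightness argument you write $\frac{1}{M} \ge \frac{2}{m+M}$, but the inequality you actually derive from $m+M\le 2M$ (and the one the lemma requires) is $\frac{1}{M} \le \frac{2}{m+M}$; fix that sign and the argument is complete.
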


%	\begin{lemma}\label{lemma:main_recursion_distrib1}
%		Let $G^t \eqdef \frac{1}{n} \sum_{i=1}^n \sqnorm{ g_i^t - \nabla f_i(x^t) } $ and $W^t \eqdef \{g_1^t, \dots, g_n^t, x^t, x^{t+1}\}$. Then for any $t>0$ we have
%		\begin{equation}\label{eq:98h98fdh98f} \Exp{ G^{t+1} \;|\; W^t} \leq  \underbrace{(1- \alpha )(1+t)}_{1-\theta}   G^t + \underbrace{(1- \alpha ) \left(1+ t^{-1} \right)}_{\beta}  \tilde{L}^2 \sqnorm{x^{t+1} -x^{t} } .\end{equation}
%	\end{lemma}
%	\begin{proof}
%		\begin{eqnarray*}
%			\Exp{ G^{t+1} \;|\; W^t} & = & \frac{1}{n} \sum_{i=1}^n  \Exp{  \sqnorm{g_i^t + \cC ( \nabla f_i(x^{t+1}) - g_i^t) - \nabla f_i(x^{t+1})}  \;|\; W^t}	 \\
%			&\leq & \frac{1}{n} \sum_{i=1}^n (1-\alpha) \sqnorm{\nabla f_i(x^{t+1}) - g_i^t} \\
%			&\leq & (1-\alpha)  \frac{1}{n} \sum_{i=1}^n \left[ (1+ t) \sqnorm{\nabla f_i(x^{t}) - g_i^t}  + \left(1+t^{-1}\right) \sqnorm{\nabla f_i(x^{t+1}) - \nabla f_i(x^t)}\right] \\	 
%			& =& (1-\alpha)  (1+t) G^t  + (1-\alpha) \left(1+t^{-1}\right) \left( \frac{1}{n} \sum_{i=1}^n L_i^2 \right) \sqnorm{x^{t+1} - x^t} .	 	 	 
%		\end{eqnarray*}
%	\end{proof}
%

	%%%%%%%%%%%%%%%%%%%%%%%%
	%%%%%%%%%%%%%%%%%%%%%%%%
	% \clearpage
\section{Proof of Theorem~\ref{thm:main-distrib}}\label{proof:main-distrib}
	%%%%%%%%%%%%%%%%%%%%%%%%
	%%%%%%%%%%%%%%%%%%%%%%%%
		
\begin{proof}

{\bf STEP 1.} Recall that Lemma~\ref{lem:theta-beta} says that			\begin{equation}\label{eq:n89fg9d08hfbdi_8f}
				\Exp{\sqnorm{g_i^{t+1} -  \nabla f_i(x^{t+1})}\mid W^t } \overset{\eqref{eq:90y0yfhdf}}{\leq} (1 - \theta)\sqnorm{g_i^t - \nfixt} + \beta  \sqnorm {\nfixtpo - \nfixt} ,
			\end{equation}
			where
$\theta = \theta(s^*)$ and $\beta = \beta(s^*)$ are given by Lemma~\ref{le:optimal_t-Peter}.			 Averaging inequalities \eqref{eq:n89fg9d08hfbdi_8f} over $i \in \{1,2,\dots,n\}$ gives
			\begin{eqnarray}
				\Exp{G^{t+1} \mid W^t } &\overset{\eqref{eq:G^t}}{=}&  \frac{1}{n}\sumin\Exp{\sqnorm{ g_i^{t+1} - \nfixtpo} \mid W^t} \notag \\
				&\overset{\eqref{eq:n89fg9d08hfbdi_8f}}{\leq}& \rb{1 - \theta}\suminn \sqnorm{g_i^t - \nfixt} + \beta \suminn \sqnorm {\nfixtpo - \nfixt} \notag \\
				&\overset{\eqref{eq:G^t}}{=}& \rb{1 - \theta}G^t+ \beta \suminn \sqnorm {\nfixtpo - \nfixt} \notag \\
				&\le& \rb{1 - \theta}G^t+ \beta \rb{\suminn L_i^2}\sqnorm{x^{t+1} - x^t} .\label{eq:jbiu-9u0df9}
			\end{eqnarray}
			Using Tower property and $L$-smoothness in \eqref{eq:jbiu-9u0df9}, we proceed to 
			\begin{eqnarray}\label{eq:main_recursion_distrib}
				\Exp{G^{t+1}} = \Exp{\Exp{G^{t+1} \mid W^t}} \overset{\eqref{eq:jbiu-9u0df9}}{\leq} \rb{1 - \theta}\Exp{G^t}+ \beta \wL^2 \Exp{ \sqnorm{x^{t+1} - x^t}}.
			\end{eqnarray}
			
{\bf STEP 2.} Next, using Lemma~\ref{lem:80hf08dgfd} and Jensen's inequality applied to the function $x\mapsto \sqnorm{x}$, we obtain the bound 
			\begin{eqnarray}
				f(x^{t+1}) &\letext{\eqref{eq:aux_smooth_lemma}}& f(x^{t})-\frac{\g}{2}\sqnorm{\nabla f(x^{t})}-\left(\frac{1}{2 \g}-\frac{L}{2}\right)\sqnorm{x^{t+1}-x^{t}}+\frac{\g}{2}\sqnorm{\suminn \rb{g_i^t-\nabla f_i(x^{t}) }}\notag \\
				&\overset{\eqref{eq:G^t}}{\leq}&
				f(x^{t})-\frac{\g}{2}\sqnorm{\nabla f(x^{t})}-\left(\frac{1}{2 \g}-\frac{L}{2}\right)\sqnorm{x^{t+1}-x^{t}}+\frac{\g}{2} G^t. \label{eq:aux_smooth_lemma_distrib}
			\end{eqnarray}

			Subtracting $f^{\text {inf }}$ from both sides of \eqref{eq:aux_smooth_lemma_distrib} and taking expectation, we get
			\begin{eqnarray}\label{eq:func_diff_distrib}
				\Exp{f(x^{t+1})-\finf} &\leq& \quad \Exp{f(x^{t})-\finf}-\frac{\gamma}{2} \Exp{\sqnorm{\nabla f(x^{t})}} \notag \\
				&& \qquad -\left(\frac{1}{2 \gamma}-\frac{L}{2}\right) \Exp{\sqnorm{x^{t+1}-x^{t}}}+ \frac{\gamma}{2}\Exp{G^t}.
			\end{eqnarray}

{\bf COMBINING STEP 1 AND STEP 2.}	
	 Let $\delta^{t} \eqdef \Exp{f(x^{t})-\finf}$, $s^{t} \eqdef \Exp{G^t }$ and $r^{t} \eqdef\Exp{\sqnorm{x^{t+1}-x^{t}}}.
			$
			Then by adding \eqref{eq:func_diff_distrib} with a $\frac{\gamma}{2 \theta}$ multiple of \eqref{eq:main_recursion_distrib} we obtain
			\begin{eqnarray*}
				\delta^{t+1}+\frac{\gamma}{2 \theta} s^{t+1} &\leq& \delta^{t}-\frac{\gamma}{2}\sqnorm{\nabla f(x^{t})}-\left(\frac{1}{2 \gamma}-\frac{L}{2}\right) r^{t}+\frac{\gamma}{2} s^{t}+\frac{\gamma}{2 \theta}\left(\beta \wL^2 r^t + (1 - \theta) s^{t}\right) \\
				&=&\delta^{t}+\frac{\gamma}{2\theta} s^{t}-\frac{\gamma}{2}\sqnorm{\nabla f(x^{t})}-\left(\frac{1}{2\g} -\frac{L}{2} - \frac{\g}{2\theta}\beta \wL^2 \right) r^{t} \\
				& \leq& \delta^{t}+\frac{\gamma}{2\theta} s^{t} -\frac{\gamma}{2}\sqnorm{\nabla f(x^{t})}.
			\end{eqnarray*}
			The last inequality follows from the bound $\g ^2\frac{\beta \wL^2}{\theta} + L\g \leq 1,$ which holds because of Lemma \ref{le:stepsize_page_fact} and our assumption on the stepsize.
			By summing up inequalities for $t =0, \ldots, T-1,$ we get
			$$
			0 \leq \delta^{T}+\frac{\gamma}{2 \theta} s^{T} \leq \delta^{0}+\frac{\gamma}{2 \theta} s^{0}-\frac{\gamma}{2} \sum_{t=0}^{T-1} \Exp{\sqnorm{\nabla f(x^{t})}}.
			$$
			Multiplying both sides by $\frac{2}{\gamma T}$, after rearranging we get
			$$
			\sum_{t=0}^{T-1} \frac{1}{T} \Exp{\sqnorm{\nabla f (x^{t})}} \leq \frac{2 \delta^{0}}{\gamma T} + \frac{s^0}{\theta T}.
			$$
			It remains to notice that the left hand side can be interpreted as $\mathrm{E}\left[\sqnorm{\nabla f(\hat{x}^{T})}\right]$, where $\hat{x}^{T}$ is chosen from $x^{0}, x^{1}, \ldots, x^{T-1}$ uniformly at random.			\end{proof}

	%%%%%%%%%%%%%%%%%%%%%%%%
	%%%%%%%%%%%%%%%%%%%%%%%%
		% \clearpage	
	\section{Proof of Theorem~\ref{thm:PL-main} }\label{proof:PL-main}
	%%%%%%%%%%%%%%%%%%%%%%%%
	%%%%%%%%%%%%%%%%%%%%%%%%
	
	\begin{proof}
		We proceed as in the previous proof, but use the PL inequality and subtract $f(x^\star)$ from both sides of  \eqref{eq:aux_smooth_lemma_distrib} to get
		\begin{eqnarray*}
			\Exp{f(x^{t+1})- f(x^\star)}  &\letext{\eqref{eq:aux_smooth_lemma_distrib}} &  \Exp{ f(x^{t})- f(x^\star) }-\frac{\g}{2}\sqnorm{\nabla f(x^{t})}-\left(\frac{1}{2 \g}-\frac{L}{2}\right)\sqnorm{x^{t+1}-x^{t}}+\frac{\g}{2} G^t \\ 
			& \leq & (1-\gamma \mu) \Exp{ f(x^{t})- f(x^\star) } - \left(\frac{1}{2 \g}-\frac{L}{2}\right)\sqnorm{x^{t+1}-x^{t}}+\frac{\g}{2} G^t .
		\end{eqnarray*}
		
		Let $\delta^{t} \eqdef \Exp{f(x^{t})-f(x^\star)}$, $s^{t} \eqdef \Exp{G^t }$ and $ r^{t} \eqdef \Exp{\sqnorm{x^{t+1}-x^{t}}}$. Then by adding the above inequality with a $\frac{\gamma}{\theta}$ multiple of \eqref{eq:main_recursion_distrib}, we obtain
		\begin{eqnarray*}
			\delta^{t+1}+\frac{\gamma}{ \theta} s^{t+1} &\leq& (1-\gamma \mu)\delta^{t} -\left(\frac{1}{2 \gamma} - \frac{L}{2}\right) r^{t} + \frac{\gamma}{2} s^{t} + \frac{\gamma}{ \theta}\left( (1-\theta) s^t + \beta \wL^2 r^t       \right) \\
			&=&  (1-\gamma \mu) \delta^{t}  + \frac{\gamma}{\theta} \left(1-\frac{\theta}{2}\right) s^t  -\left(\frac{1}{2 \gamma} - \frac{L}{2} - \frac{\beta \wL^2 \gamma}{\theta}\right) r^{t}  	.
		\end{eqnarray*}
		Note that our assumption on the stepsize implies that 	$ 1 - \frac{\theta}{2} \leq 1 -\gamma \mu$ and $\frac{1}{2 \gamma} - \frac{L}{2} - \frac{\beta \wL^2 \gamma}{\theta} \geq 0$. The last inequality follows from the bound $\gamma^2\frac{2\beta \wL^2}{\theta} + \gamma L \leq 1,$ which holds because of Lemma \ref{le:stepsize_page_fact} and our assumption on the stepsize. Thus,
		\begin{eqnarray*}
			\delta^{t+1}+\frac{\gamma}{ \theta} s^{t+1} &\leq& (1-\gamma \mu) \left(\delta^{t}+\frac{\gamma}{ \theta} s^{t} \right).
		\end{eqnarray*}
		It remains to unroll the recurrence.
	\end{proof}

	%%%%%%%%%%%%%%%%%%%%%%%%%%	
	%%%%%%%%%%%%%%%%%%%%%%%%%%	
	%%%%%%%%%%%%%%%%%%%%%%%%%%	
	% \clearpage
	\section{Dealing with Stochastic Gradients (Details for Section~\ref{sec:stoch_grad-Main})} \label{sec:extensions}
	%%%%%%%%%%%%%%%%%%%%%%%%%%	
	%%%%%%%%%%%%%%%%%%%%%%%%%%	
	%%%%%%%%%%%%%%%%%%%%%%%%%%	
	
% Here we present some a few extensions and possible extensions which we did not mention in the main paper.

We now describe a natural extension of \algname{EF21} to the setting where full gradient computations are replaced by stochastic gradient estimators, i.e., we use a random vector
$$\hat{g}_i^t \approx	\nabla f_i(x^t)$$
instead of $\nabla f_i(x^t)$. This simple change leads to Algorithm~\ref{alg:EF21-stoch}, where we {\color{red}highlight in red} the parts that differ from the exact/full gradient version of \algname{EF21}.

				\begin{algorithm}[h]
		\centering
		\caption{\algname{EF21} (Multiple nodes + {\color{red}Stochastic regime})}\label{alg:EF21-stoch}
		\begin{algorithmic}[1]
			\State \textbf{Input:} starting point $x^{0} \in \R^d$;  $g_i^0 = \cC({\color{red} \hat{g}_i^0 } )$,  where { \color{red}$\hat{g}_i^0 \approx \nabla f_i(x^0)$} for $i=1,\dots, n$ (known by nodes and the master); learning rate $\gamma>0$; $g^0 = \frac{1}{n}\sum_{i=1}^n g_i^0$ (known by master)
			\For{$t=0,1, 2, \dots , T-1 $}
			\State Master computes $x^{t+1} = x^t - \gamma g^t$ and broadcasts $x^{t+1}$ to all nodes
			\For{{\bf all nodes $i =1,\dots, n$ in parallel}}
			\State {\color{red}Compute a stochastic gradient $\hat{g}_i^{t+1}\approx \nabla f_i(x^{t+1})$}
			\State Compress $c_i^t = \cC({\color{red}\hat{g}_i^{t+1}} - g_i^t)$ and send $c_i^t $ to the master
			\State Update local state $g_i^{t+1} = g_i^t + \cC( {\color{red}\hat{g}_i^{t+1}} - g_i^t)$
			\EndFor
			\State Master computes $g^{t+1} = \frac{1}{n} \sum_{i=1}^n  g_i^{t+1}$ via  $g^{t+1} = g^t + \frac{1}{n} \sum_{i=1}^n c_i^t $
			\EndFor
		\end{algorithmic}
	\end{algorithm}

An analysis of this extension/generalization can be done in a similar manner. The key change is the replacement of Lemma~\ref{lem:theta-beta} in the proofs of the two complexity theorems, and then accounting for this change in the proof. However, this is easy to do. We now describe what  Lemma~\ref{lem:theta-beta}  should be replaced with.

We first start with a technical lemma.
	\begin{lemma}\label{lem:b8787fdgdf}Let $\cC\in \bB(\alpha)$, and let $\xi\in \R^d$ be a random vector independent of $\cC$, with zero mean and variance bounded as $\Exp{\sqnorm{\xi}} \leq \sigma^2$. Then for any $s>0$, we have
	\[ \Exp{ \sqnorm{ \cC( x + \xi) - x } } \leq (1-\alpha)(1+s) \sqnorm{x} + \left((1-\alpha)(1+s) + 1 + s^{-1} \right)\sigma^2, \qquad \forall x \in \R^d. \]
%	In particular, if $t=\frac{1}{2}\frac{\alpha}{1-\alpha}$, then 
%\[ \Exp{ \sqnorm{ \cC( x + \xi) - x } } \leq \left(1- \frac{\alpha}{2}\right) \sqnorm{x} + \left((1-\alpha)(1+t) + 1 + t^{-1} \right)\sigma^2. \]	
	\end{lemma}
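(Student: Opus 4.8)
The plan is to prove Lemma~\ref{lem:b8787fdgdf} by a standard bias--variance decomposition combined with Young's inequality, exactly mirroring the structure of the proof of Lemma~\ref{lem:theta-beta}. First I would expand the target quantity by inserting the ``intended'' compressed vector $\cC(x+\xi)$ around its exact preimage $x+\xi$: write
\[
\cC(x+\xi) - x = \bigl(\cC(x+\xi) - (x+\xi)\bigr) + \xi,
\]
so that, after taking the expectation over both $\cC$ and $\xi$ and using that $\xi$ is independent of $\cC$, I can apply Young's inequality $\sqnorm{a+b}\leq(1+s)\sqnorm{a}+(1+s^{-1})\sqnorm{b}$ with $a = \cC(x+\xi)-(x+\xi)$ and $b=\xi$. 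This gives
\[
\Exp{\sqnorm{\cC(x+\xi)-x}} \leq (1+s)\,\Exp{\sqnorm{\cC(x+\xi)-(x+\xi)}} + (1+s^{-1})\,\Exp{\sqnorm{\xi}}.
\]

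Next I would control the first term using the contraction property \eqref{eq:b_compressor} of $\cC\in\bB(\alpha)$, conditioning on $\xi$: $\Exp{\sqnorm{\cC(x+\xi)-(x+\xi)}\mid \xi}\leq(1-\alpha)\sqnorm{x+\xi}$. Taking the expectation over $\xi$ and using $\Exp{\xi}=0$ together with $\Exp{\sqnorm{\xi}}\leq\sigma^2$, we get $\Exp{\sqnorm{x+\xi}} = \sqnorm{x} + \Exp{\sqnorm{\xi}} \leq \sqnorm{x}+\sigma^2$ (the cross term $2\lin{x,\Exp{\xi}}$ vanishes). Substituting back,
\[
\Exp{\sqnorm{\cC(x+\xi)-x}} \leq (1+s)(1-\alpha)\bigl(\sqnorm{x}+\sigma^2\bigr) + (1+s^{-1})\sigma^2,
\]
and collecting the $\sigma^2$ coefficients yields precisely $(1-\alpha)(1+s)\sqnorm{x} + \bigl((1-\alpha)(1+s)+1+s^{-1}\bigr)\sigma^2$, as claimed. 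The bound holds for all $x\in\R^d$ since $x$ was arbitrary.

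I do not anticipate a genuine obstacle here — the lemma is a routine generalization of Lemma~\ref{lem:theta-beta} in which the deterministic preimage $\nabla f_i(x^{t+1})$ is replaced by the noisy estimate $x+\xi$, and the only new ingredient is the extra variance term introduced by $\xi$, which is handled by the same Young split that already appears in the proof of Lemma~\ref{lem:theta-beta}. The one point requiring minor care is the placement of expectations: one must condition on $\xi$ before invoking \eqref{eq:b_compressor} (which is a statement about the randomness of $\cC$ for a fixed input) and only afterwards take expectation over $\xi$, using independence to justify that $\cC$'s contraction constant is unaffected by $\xi$. Everything else is elementary algebra.
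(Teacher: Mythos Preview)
Your proposal is correct and follows essentially the same approach as the paper: Young's inequality on the split $\cC(x+\xi)-x = (\cC(x+\xi)-(x+\xi)) + \xi$, then the contraction property \eqref{eq:b_compressor} conditionally on $\xi$, followed by expanding $\Exp{\sqnorm{x+\xi}} = \sqnorm{x} + \Exp{\sqnorm{\xi}}$ using $\Exp{\xi}=0$. The paper's proof orders the conditioning and the Young split slightly differently (applying Young pointwise, then conditioning on $\xi$, then tower), but the substance is identical.
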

	\begin{proof}
	First, due to Young's inequality, for any $s>0$ we have
	\begin{equation} \sqnorm{ \cC( x + \xi) - x } \leq (1+t)\sqnorm{ \cC( x + \xi) - (x + \xi) } + (1+s^{-1}) \sqnorm{\xi} .\label{eq:ih8908hgdfdff}\end{equation}
	By taking conditional expectation, we get
	\begin{eqnarray}
	\Exp{ \sqnorm{ \cC( x + \xi) - x } \mid \xi }	&\overset{\eqref{eq:ih8908hgdfdff}}{\leq} & (1+s) \Exp{\sqnorm{ \cC( x + \xi) - (x + \xi)} \mid \xi} + (1+s^{-1}) \sqnorm{\xi}  \notag \\
	&\overset{\eqref{eq:b_compressor}}{\leq}& (1+s) (1-\alpha)\sqnorm{x+ \xi } + (1+s^{-1}) \sqnorm{\xi} \notag\\
	&=& (1-\alpha)(1+s)  \sqnorm{x} + 2(1-\alpha)(1+s) \langle x, \xi \rangle \notag \\
	&& \qquad + \left((1-\alpha)(1+s) + 1 + s^{-1} \right) \sqnorm{\xi}.\label{eq:u8g9f--0fd8yhfd}
	\end{eqnarray}

 	Taking expectation again,  applying the tower property, and using the fact that $\Exp{\xi}=0$ and $\Exp{\sqnorm{\xi}} \leq \sigma^2$, we finally get
	\begin{eqnarray*} \Exp{ \sqnorm{ \cC( x + \xi) - x } } &=& \Exp{\Exp{ \sqnorm{ \cC( x + \xi) - x } \mid \xi }} \\
	&\overset{\eqref{eq:u8g9f--0fd8yhfd}}{\leq }& (1-\alpha)(1+s)  \sqnorm{x} +  \left((1-\alpha)(1+s) + 1 + s^{-1} \right) \sigma^2.
	\end{eqnarray*}

%	 \Exp{ \Exp{ \sqnorm{ \cC( x + \xi) - x } } \mid \xi} \\

	\end{proof}
	
We will choose $s<\frac{\alpha}{1-\alpha}$, so that $1-\hat{\alpha} \eqdef (1-\alpha)(1+s)<1$.
 The above lemma postulates that for $\cC\in \bB(\alpha)$, and under certain assumptions on the noise $\xi$, there exist constants $\hat{\alpha}>0$ and $\hat{\sigma}>0$ such that
\begin{equation}\label{eq:Lbi87fg8fd_08y9f8d} \Exp{ \sqnorm{ \cC( x + \xi) - x } } \leq \left(1-  \hat{\alpha} \right) \sqnorm{x} +\hat{\sigma}^2, \qquad \forall x \in \R^d. \end{equation}
We will elevate this inequality into an assumption because the particular values for $\hat{\alpha}$	and $\hat{\sigma}$ given by the lemma will not be tight for every compressor $\cC$, and we want to formulate our complexity results with as tight constants as possible. 

\begin{assumption} \label{ass:hat-ineq} Let $\cC:\R^d\to \R^d$ be a (possibly randomized) mapping and let $\xi\in \R^d$ be a random vector independent of $\cC$. We assume that there exist  constants $\hat{\alpha}>0$ and $\hat{\sigma}>0$ such that
\eqref{eq:Lbi87fg8fd_08y9f8d} holds for all $x\in \R^d$.

\end{assumption}

We now present an analogue of Lemma~\ref{lem:theta-beta} in the stochastic regime.
	\begin{lemma}\label{lem:theta-beta-stoch_case} Consider Algorithm~\ref{alg:EF21-stoch} and let the the stochastic estimator $\hat{g}_i^{t}$ be given by $$\hat{g}_i^{t} = \nabla f_i(x^t) + \xi_i^t,$$ where
	$\xi_i^t$ is a random vector. Assume that for $\xi=\xi_i^t$, inequality \eqref{eq:Lbi87fg8fd_08y9f8d} holds\footnote{Recall that by Lemma~\ref{lem:b8787fdgdf}, it holds if $\xi=\xi_i^t$ is a zero mean vector with variance bounded by $\sigma^2$.}. Let
	$G_i^t \eqdef  \sqnorm{ g_i^t - \nabla f_i(x^t) } $ and $W^t \eqdef \{g_1^t, \dots, g_n^t, x^t, x^{t+1}\}$. 		For any $t>0$ we have
		\begin{equation}\label{eq:90y0yfhdf-stoch-case} \Exp{ G_i^{t+1} \;|\; W^t} \leq  \underbrace{(1- \hat{\alpha} )(1+s)}_{1-\hat{\theta}(s)}   G_i^t + \underbrace{(1- \hat{\alpha} ) \left(1+ s^{-1} \right)}_{\hat{\beta}(s)}   \sqnorm{\nabla f_i(x^{t+1}) - \nabla f_i(x^t)}  + \hat{\sigma}^2 . 	\end{equation}
	\end{lemma}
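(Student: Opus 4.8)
The plan is to follow the proof of Lemma~\ref{lem:theta-beta} almost verbatim, replacing the single use of the contractivity property \eqref{eq:b_compressor} by the noisy contractivity bound \eqref{eq:Lbi87fg8fd_08y9f8d} postulated in Assumption~\ref{ass:hat-ineq}. First I would expand $g_i^{t+1}$ by its update rule in Algorithm~\ref{alg:EF21-stoch}, namely $g_i^{t+1} = g_i^t + \cC(\hat g_i^{t+1} - g_i^t)$ with $\hat g_i^{t+1} = \nabla f_i(x^{t+1}) + \xi_i^{t+1}$. Setting $a \eqdef \nabla f_i(x^{t+1}) - g_i^t$ and $\xi \eqdef \xi_i^{t+1}$, so that $\hat g_i^{t+1} - g_i^t = a + \xi$, an elementary cancellation gives $g_i^{t+1} - \nabla f_i(x^{t+1}) = \cC(a+\xi) - a$, hence $G_i^{t+1} = \sqnorm{\cC(a+\xi) - a}$.

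Next I would take the conditional expectation given $W^t$. Since $x^{t+1}$ and $g_i^t$ both belong to $W^t$, the vector $a$ is $W^t$-measurable, whereas the fresh stochastic-gradient noise $\xi_i^{t+1}$ and the internal randomness of $\cC$ at step $t+1$ are independent of $W^t$. Therefore \eqref{eq:Lbi87fg8fd_08y9f8d} applies conditionally with deterministic input $x = a$, giving
\[ \Exp{G_i^{t+1} \mid W^t} \leq (1-\hat\alpha)\sqnorm{a} + \hat\sigma^2 = (1-\hat\alpha)\sqnorm{\nabla f_i(x^{t+1}) - g_i^t} + \hat\sigma^2. \]
Finally, writing $\nabla f_i(x^{t+1}) - g_i^t = (\nabla f_i(x^t) - g_i^t) + (\nabla f_i(x^{t+1}) - \nabla f_i(x^t))$ and applying Young's inequality $\sqnorm{u+v}\leq (1+s)\sqnorm{u} + (1+s^{-1})\sqnorm{v}$ yields $\sqnorm{\nabla f_i(x^{t+1}) - g_i^t} \leq (1+s)G_i^t + (1+s^{-1})\sqnorm{\nabla f_i(x^{t+1}) - \nabla f_i(x^t)}$; substituting this into the display above produces exactly \eqref{eq:90y0yfhdf-stoch-case} with $1-\hat\theta(s) = (1-\hat\alpha)(1+s)$ and $\hat\beta(s) = (1-\hat\alpha)(1+s^{-1})$.

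The only delicate point is the conditioning step in the middle: one must verify that the randomness used at iteration $t+1$ — both the stochastic gradient noise $\xi_i^{t+1}$ and the compression randomness — is independent of the history $W^t$, so that $a$ may be treated as a fixed vector when invoking \eqref{eq:Lbi87fg8fd_08y9f8d}. (Recall that by Lemma~\ref{lem:b8787fdgdf}, inequality \eqref{eq:Lbi87fg8fd_08y9f8d} itself holds whenever $\xi_i^{t+1}$ is zero-mean, independent of $\cC$, and has variance at most $\sigma^2$, which identifies the constants $\hat\alpha$ and $\hat\sigma$ with concrete values.) Everything else is identical bookkeeping to the proof of Lemma~\ref{lem:theta-beta}, so no genuinely new difficulty arises.
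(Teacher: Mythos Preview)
Your proposal is correct and follows essentially the same approach as the paper's own proof: expand $g_i^{t+1}$ via the update rule, apply the noisy contractivity bound \eqref{eq:Lbi87fg8fd_08y9f8d} with $x = \nabla f_i(x^{t+1}) - g_i^t$, and then use Young's inequality to split this vector into $(\nabla f_i(x^t) - g_i^t) + (\nabla f_i(x^{t+1}) - \nabla f_i(x^t))$. If anything, your write-up is more careful than the paper's, since you explicitly justify why $a$ is $W^t$-measurable and the step-$t{+}1$ randomness is independent of $W^t$, a point the paper leaves implicit.
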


	\begin{proof}
		\begin{eqnarray*}
			\Exp{ G_i^{t+1} \;|\; W^t} & = & \Exp{  \sqnorm{g_i^{t+1}  - \nabla f_i(x^{t+1}) }  \;|\; W^t}	 \\
&=&  \Exp{  \sqnorm{g_i^t + \cC ( \nabla f_i(x^{t+1}) + \xi_i^{t+1} - g_i^t) - \nabla f_i(x^{t+1})}  \;|\; W^t}	 \\
			&\overset{\eqref{eq:Lbi87fg8fd_08y9f8d}}{\leq} &  (1-\hat{\alpha})  \sqnorm{\nabla f_i(x^{t+1}) - g_i^t} +\hat{\sigma}^2 \\
			&\overset{}{\leq} & (1-\hat{\alpha})  (1+ s) \sqnorm{\nabla f_i(x^{t}) - g_i^t}  \\
			&& \quad + (1-\hat{\alpha})  \left(1+s^{-1}\right) \sqnorm{\nabla f_i(x^{t+1}) - \nabla f_i(x^t)} +\hat{\sigma}^2 . 		\end{eqnarray*}
	\end{proof}
	
	It is straightforward to use this inequality in the proofs of Theorems~\ref{thm:main-distrib} and \ref{thm:PL-main} to establish complexity results for our stochastic variant of \algname{EF21}.

		%%%%%%%%%%%%%%%%%%%%%%%%%%	
	%%%%%%%%%%%%%%%%%%%%%%%%%%	
	%%%%%%%%%%%%%%%%%%%%%%%%%%	
	% \clearpage
		\section{Computation of $\sqrt{\fr{\beta(s^*)}{\theta(s^*)}} $ for some Compressors}
		%%%%%%%%%%%%%%%%%%%%%%%%%%	
	%%%%%%%%%%%%%%%%%%%%%%%%%%	
	%%%%%%%%%%%%%%%%%%%%%%%%%%	
		
\subsection{From unbiased to biased compressors}

We start by proving the simple and very well known result about the relationship between the classes $\bU(\omega)$ and $\bB(\alpha)$ we mentioned in Section~\ref{sec:motivation}. 
		
	\begin{lemma}\label{le:unb_contr}
		If $\cC \in \bU(\omega)$, then $\fr{1}{1 + \omega} \cC \in \bB\rb{\fr{1}{1+\omega}}$.
	\end{lemma}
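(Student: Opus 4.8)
The plan is a one-line bias–variance decomposition of the second moment. Fix $x\in\R^d$ and set $Z\eqdef \frac{1}{1+\omega}\cC(x)$. Since $\cC\in\bU(\omega)$, we have $\Exp{\cC(x)}=x$, hence $\Exp{Z}=\frac{1}{1+\omega}x$. Using the identity $\Exp{\sqnorm{Z-x}}=\sqnorm{\Exp{Z}-x}+\Exp{\sqnorm{Z-\Exp{Z}}}$ (equivalently, expand the square and note the cross term $\Exp{\lin{Z-\Exp{Z},\,\Exp{Z}-x}}$ vanishes because $\Exp{Z-\Exp{Z}}=0$), I would write
\[
\Exp{\sqnorm{\tfrac{1}{1+\omega}\cC(x)-x}} = \sqnorm{\tfrac{1}{1+\omega}x-x} + \tfrac{1}{(1+\omega)^2}\Exp{\sqnorm{\cC(x)-x}}.
\]

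Next I would bound the two terms. The first is exactly $\big(\frac{\omega}{1+\omega}\big)^2\sqnorm{x}$. For the second, the variance bound in \eqref{eq:u_compressor} gives $\Exp{\sqnorm{\cC(x)-x}}\le \omega\sqnorm{x}$, so the whole right-hand side is at most
\[
\left(\frac{\omega^2}{(1+\omega)^2}+\frac{\omega}{(1+\omega)^2}\right)\sqnorm{x} = \frac{\omega(1+\omega)}{(1+\omega)^2}\sqnorm{x} = \frac{\omega}{1+\omega}\sqnorm{x} = \left(1-\frac{1}{1+\omega}\right)\sqnorm{x}.
\]
Since this holds for all $x\in\R^d$, comparing with \eqref{eq:b_compressor} shows $\frac{1}{1+\omega}\cC\in\bB\big(\frac{1}{1+\omega}\big)$, which is the claim.

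There is essentially no obstacle here; the only point requiring (minor) care is the justification that the cross term disappears, which uses unbiasedness of $\cC$ — and, if one prefers to avoid invoking the bias–variance identity as a black box, one can instead directly write $\frac{1}{1+\omega}\cC(x)-x = -\frac{\omega}{1+\omega}x + \frac{1}{1+\omega}(\cC(x)-x)$, expand $\sqnorm{\cdot}$, and take expectations, killing the inner-product term via $\Exp{\cC(x)-x}=0$. Either route yields the stated constant with no slack.
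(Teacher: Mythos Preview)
Your proof is correct and follows essentially the same route as the paper: both expand $\Exp{\sqnorm{\tfrac{1}{1+\omega}\cC(x)-x}}$ and use unbiasedness together with the variance/second-moment bound from \eqref{eq:u_compressor} to obtain $(1-\tfrac{1}{1+\omega})\sqnorm{x}$. The only cosmetic difference is that the paper expands the square directly (using the equivalent form $\Exp{\sqnorm{\cC(x)}}\le(1+\omega)\sqnorm{x}$), whereas you organize the computation via the bias--variance decomposition; the two are the same calculation up to bookkeeping.
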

	\begin{proof}
		Fix $x \in \R^d$. 
		Note that for $\cC \in \bU(\omega)$ we have
		\begin{eqnarray}
		\Exp{\cC(x)} &=& x\label{eq:unb},\\
		\Exp{\sqnorm{\cC(x)}} &\le& (1 + \omega)\sqnorm{x}\label{eq:sqnorm_upperbound}.
		\end{eqnarray}
		Then
		\begin{eqnarray}
		\Exp{\sqnorm{\fr{1}{1 + \omega} \cC(x) - x}} &=& \fr{1}{(1 + \omega)^2} \Exp{\sqnorm{\cC(x)}} - 2\Exp{\lin{\cC(x),x}} + \sqnorm{x} \notag\\
		&\letext{\eqref{eq:unb} + \eqref{eq:sqnorm_upperbound}}& \fr{1}{1 + \omega} \sqnorm{x} - \sqnorm{x}\notag\\
		&=& \rb{1  -  \fr{1}{1 + \omega}}\sqnorm{x}.\notag
		\end{eqnarray}
	\end{proof}

		\subsection{Top-$k$ and a scaled version of Rand-$k$}
		
		We now compute the value $\sqrt{\fr{\beta(s^*)}{\theta(s^*)}} $ appearing in pour complexity theorems for two well known compressors belonging to the class $\bB(\alpha)$.
		
	\begin{example}
		Let $\cC$ be the Top-$k$ compressor. Then  $\cC\in \bB(\al)$ with $\al = \fr{k}{d}$ and $$\sqrt{\fr{\beta(s^*)}{\theta(s^*)}}  = \fr{\sqrt{1 - \nicefrac{k}{d}}}{1 - \sqrt{1 - \nicefrac{k}{d}}}.$$		
	\end{example}
	\begin{proof}
		It is well known that  $\cC \in  \bB(\al)$ with $\al = \fr{k}{d}$ (e.g., see \citep{beznosikov2020biased}). Then according to Lemma~\ref{le:optimal_t-Peter}, we have 
		\begin{eqnarray*}
		\sqrt{\fr{\beta(s^*)}{\theta(s^*)}} = \fr{\sqrt{1 - \al}}{1 - \sqrt{1 - \al}} = \fr{\sqrt{1 - \nicefrac{k}{d}}}{1 - \sqrt{1 - \nicefrac{k}{d}}} .
		\end{eqnarray*}
	\end{proof}
	
	\begin{example}
		Let $\cC = \rb{\frac{1}{1 + \omega}}\cC'$, where $\cC'$ is the Rand-$k$ compressor. Then  $\cC\in \bB(\al)$ with $\al = \fr{k}{d}$ and $$\sqrt{\fr{\beta(s^*)}{\theta(s^*)}}  = \fr{\sqrt{1 - \nicefrac{k}{d}}}{1 - \sqrt{1 - \nicefrac{k}{d}}}.$$
	\end{example}
	\begin{proof}
		It is well known that $\cC' \in  \bB(\omega)$ with $\omega = \fr{d}{k} - 1$ (e.g., see \citep{beznosikov2020biased}). Moreover, using the Lemma~\ref{le:unb_contr}, we get $\rb{\frac{1}{1 + \omega}}\cC' \in \bB\rb{\fr{k}{d}}$. Finally, according to  Lemma~\ref{le:optimal_t-Peter},  we have 
		\begin{eqnarray*}
		\sqrt{\fr{\beta(s^*)}{\theta(s^*)}} = \fr{\sqrt{1 - \al}}{1 - \sqrt{1 - \al}} = \fr{\sqrt{1 - \nicefrac{k}{d}}}{1 - \sqrt{1 - \nicefrac{k}{d}}} .
		\end{eqnarray*}
	\end{proof}

\end{document}